\documentclass{article}

\usepackage{microtype}
\usepackage{graphicx}
\usepackage{booktabs} 

\usepackage{hyperref}

\usepackage[accepted]{icml2026}

\usepackage{svg}
\usepackage{wrapfig}
\usepackage{algorithm}
\usepackage{algorithmic}
\usepackage{multirow}     

\PassOptionsToPackage{unicode}{hyperref}
\PassOptionsToPackage{naturalnames}{hyperref}
\hypersetup{
	colorlinks,
	linkcolor={red!80!white},
	citecolor={green!60!black},
	urlcolor = purple!90!black
}
\usepackage{url}
\usepackage{empheq} 
\usepackage{booktabs} 
\usepackage{array} 
\usepackage[table]{xcolor} 
\usepackage{graphicx}     
\usepackage{rotating}  
\usepackage{array}    
\usepackage{subfigure}
\usepackage{makecell}
\usepackage{amsmath}      
\usepackage{amssymb}
\usepackage{amsthm}

\usepackage{tikz}
\usepackage{mathtools}
\usepackage{arydshln}

\newtheorem{theorem}{Theorem}[section]
\newtheorem{proposition}[theorem]{Proposition}
\newtheorem{lemma}[theorem]{Lemma}
\newtheorem{corollary}[theorem]{Corollary}
\newtheorem{remark}[theorem]{Remark}

\usepackage[textsize=tiny]{todonotes}

\icmltitlerunning{Mitigating the Contractivity Trap in Diffusion ODEs via Stein Stabilization}

\begin{document}

\twocolumn[
\icmltitle{ 
Mitigating the Contractivity Trap in Diffusion ODEs via Stein Stabilization 
}



  \icmlsetsymbol{equal}{*}

  \begin{icmlauthorlist}
    \icmlauthor{Shigui Li}{scutmath}
    \icmlauthor{Delu Zeng}{scutece}
  \end{icmlauthorlist}

  \icmlaffiliation{scutmath}{School of Mathematics, South China University of Technology, Guangzhou, China;}
  \icmlaffiliation{scutece}{School of Electronic and Information Engineering, South China University of Technology, Guangzhou, China}

  \icmlcorrespondingauthor{Delu Zeng}{dlzeng@scut.edu.cn}
  \icmlkeywords{Diffusion Models, Probability Flow ODE, Few-Step Inference, Stein Correction, Contractivity Trap}

  \vskip 0.3in
]



\printAffiliationsAndNotice{}  

\begin{abstract}
A fundamental tension exists in the large-step inference of diffusion models via their deterministic probability flow ordinary differential equation (PF-ODE) trajectories, which we identify as the \emph{contractivity trap}: efficient inference favors large step sizes, while aggressive steps and highly expressive denoisers can undermine contraction-based stability certificates for error suppression. To address this,  we propose SteinDiff, a step-wise inference-time stabilization framework  that employs Stein-derived corrections without requiring reference samples. Specifically, SteinDiff introduces  a geometry-aware residual correction mechanism that regularizes large-step solver updates without retraining. To this end, we derive a closed-form Stein correction coefficient for step-wise solver adjustment, enabling reference-free adaptation to local data geometry. We further establish a score-controlled perturbation bound under distributional shifts and provide a complementary Stein perspective on EDM-style parameterizations. Extensive experiments demonstrate that SteinDiff mitigates severe artifacts and improves generative quality across large-step inference settings.  
\end{abstract}

\section{Introduction}
Diffusion models (DMs) have emerged as a powerful approach to generative modeling, 
demonstrating strong performance in high-fidelity image synthesis, text-to-image generation, and other domains \citep{sohl2015deep, ho2020denoising,song2021score,dhariwal2021diffusion,rombach2022high, xing2024survey}. Unlike single-pass generators such as GANs~\citep{NIPS2014_5ca3e9b1} and VAEs~\citep{kingma2013auto}, DMs generate samples through an iterative denoising process that progressively transforms noise into structured data. This iterative paradigm provides advantages: training stability, high sample quality, and robust mode coverage~\citep{song2020improved,kingma2021variational,karras2022elucidating}.

Despite these advantages, diffusion inference remains computationally expensive, typically requiring hundreds of function evaluations (NFE) for high-quality samples \citep{sohl2015deep, ho2020denoising}. ODE-based samplers reduce this cost by following deterministic PF-ODE trajectories, leading to a series of efficient solvers and predictor-corrector schemes \citep{song2021score,song2021denoising,liu2022pseudo,lu2022dpm,zhao2023unipc,zhang2023fast,lu2025dpm}. However, aggressive few-step inference can amplify local prediction and discretization errors. From a stability perspective, contractivity of the discretized update operator $\operatorname{T}_{\boldsymbol{\theta}}$ provides a useful sufficient certificate for step-wise error suppression:
$\|\operatorname{T}_{\boldsymbol{\theta}}(\boldsymbol{x}) - \operatorname{T}_{\boldsymbol{\theta}}(\boldsymbol{y})\| \leq L\|\boldsymbol{x} - \boldsymbol{y}\|$ with $L < 1$. This condition offers a simple analytical lens for understanding whether perturbations are damped or amplified across solver updates. In the large-step regime, expressive denoisers and aggressive step sizes make such contraction-based certificates difficult to satisfy. We refer to this certificate breakdown as the \emph{contractivity trap}, 
which highlights a practical regime where local perturbations may be amplified along denoising trajectories and can lead to unstable updates and degraded sample quality   (Figure~\ref{fig:steindiff}, left). 
To mitigate this failure mode, we propose \emph{SteinDiff}, a principled inference-time stabilization framework based on reference-free Stein corrections 
(Figure~\ref{fig:steindiff}, right). Instead of imposing rigid constraints on the model architecture or step sizes, SteinDiff introduces a geometry-aware residual correction to regularize large-step solver updates. 
The correction is derived from a step-wise expected squared-error criterion with respect to the latent clean target. 
Under the exact continuous-time forward Gaussian coupling, Stein's identity transforms the unknown clean-target term into a tractable estimator involving batch statistics and a divergence term. This yields a training-free correction mechanism that can be integrated into existing ODE solvers.

\begin{figure*}[t] 
\centering 
\includegraphics[width=0.95\linewidth]{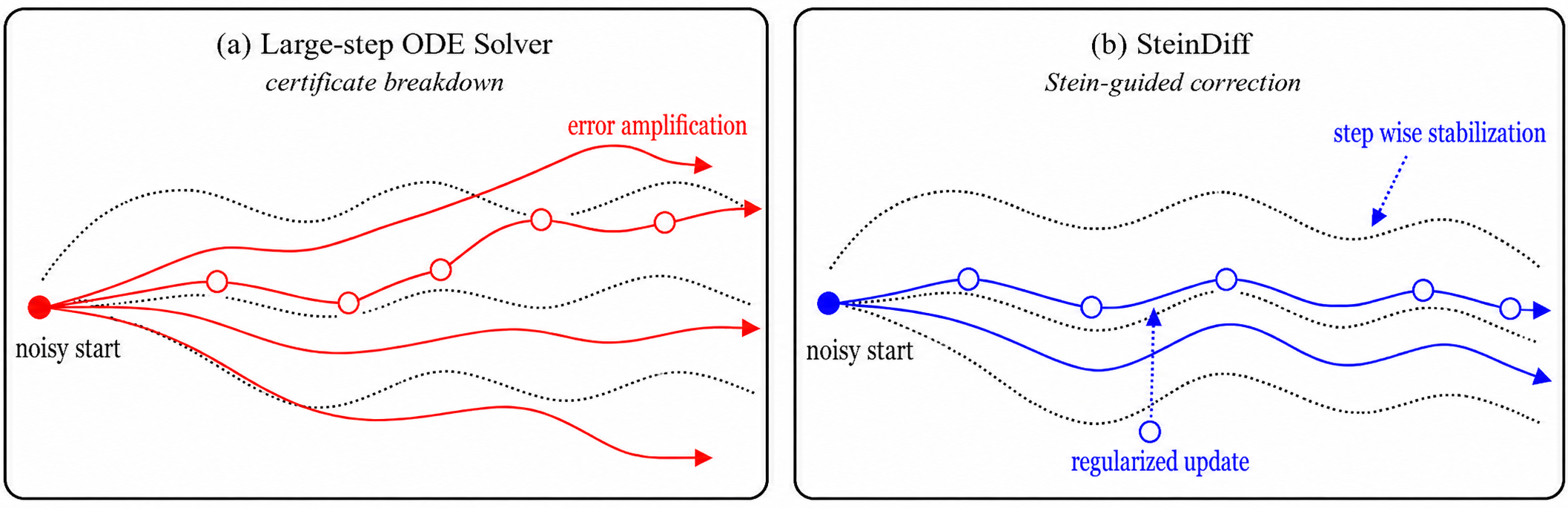}
\caption{   
Illustration of denoising trajectories with and without principled stabilization. Efficient ODE solvers often fail to maintain contraction-based stability certificates ($L_{\operatorname{T}} < 1$) due to aggressive step sizes and highly expressive denoisers, leading to compounded error accumulation and trajectory divergence (Left). 
SteinDiff mitigates this issue by applying a Stein-guided correction to regularize large-step solver updates and improve trajectory consistency (Right).
}
\label{fig:steindiff}
\end{figure*}

In this work, we propose a Stein-based stabilization framework for large-step ODE inference, mitigating the limitations of contraction-based certificates. Extensive experiments demonstrate 
the effectiveness of SteinDiff for large-step inference. Our main contributions are:

\begin{itemize} 
    \item We identify the \emph{contractivity trap} as a contraction-certificate failure mode in efficient ODE solving of DMs, and propose a geometry-aware Stein correction framework for large-step solver updates.  
    
    \item We derive a reference-free correction estimator via Stein's identity 
    under the continuous-time forward Gaussian coupling, providing a principled 
step-wise correction toward the latent clean target. 
 
     \item    
     We analyze the perturbation stability of the Stein correction coefficient under score-controlled distributional shifts, and provide a complementary Stein perspective on EDM-style parameterizations.
\end{itemize}

\section{Related Work}
Diffusion models are a class of powerful generative models that excel at generating high-quality samples through denoising refinement, grounded in a well-established theoretical framework \citep{jarzynski1997equilibrium, neal2001annealed,sohl2015deep}. Song et al. \citet{song2019generative,song2020improved} developed modeling techniques using score matching, while Ho et al. \citep{ho2020denoising} reformulated DMs into a practical and tractable framework.  
Song et al. \citep{song2021score} unified score-based models and DMs using stochastic differential equations (SDEs), forming the principal framework.  
However, DMs face the significant challenge of balancing efficiency and quality. Various methods have recently emerged to address this challenge. 
 
Training-based methods accelerate DMs through post-training or new modeling strategies. Latent DMs \citep{rombach2022high} boost efficiency by operating in lower-dimensional spaces.  
Progressive distillation \citep{salimans2022progressive,meng2023distillation} 
and Consistency Models (CMs) \citep{song2023consistency,luo2023latent} achieve  
few-step generation through knowledge distillation and self-consistency learning, while adversarial training is explored for the speed-quality trade-off \citep{xiao2022tackling}. Flow matching (FMs) \citep{lipman2023flow,liu2023flow,liu2024instaflow} optimizes generation paths by learning velocity fields, while EDM \citep{karras2022elucidating} improves samples using optimized weighting schemes. Shortcut models \citep{frans2025one} achieve efficiency by step-aware network learning.  
Additional methods are also explored in 
\citep{watson2022learning, wang2023patch, zheng2023fast, kim2023refining, zhou2024score, kim2024consistency, wimbauer2024cache, ma2024deepcache, zhou2024fast, karras2024analyzing, sauer2024adversarial, karras2024guiding, zhang2024residual, ma2024the, zhang2025antiexposure, tong2025learning,geng2025mean}.

Inference-focused methods improve DMs by optimizing the inference process without retraining. DDIM \citep{song2021denoising} establishes deterministic sampling using non-Markovian processes. Building on deterministic ODEs, e.g., PNDMs  \citep{liu2022pseudo}, denoising solvers have achieved significant progress: DPM-Solver \citep{lu2022dpm} uses exponential integrators for acceleration, DEIS \citep{zhang2023fast} addresses numerical stiffness, and DPM-Solver++ \citep{lu2025dpm}  proposes a data-based prediction scheme. UniPC \citep{zhao2023unipc} provides a predictor-corrector framework, DPM-Solver-v3 \citep{zheng2023dpm} optimizes speed with reference solution-based model statistics, while restart sampling \citep{xu2023restart} refines generation paths using a cyclic restart mechanisms with intermediate noise injection. Recently, EVODiff \citep{li2025evodiff} rectifies the inference path via entropy-aware variance optimization.  Additional methods include schedule optimizations \citep{jolicoeur2021gotta,karras2022elucidating,xue2024accelerating,chen2024on,sabour2024align}, discretization techniques \citep{wizadwongsa2023accelerating, li2023scire, gonzalez2023seeds,zhao2024dc}, and parallel techniques \citep{shih2023parallel,tang2024accelerating}.

\paragraph{Settings and Our Contribution} 
While training-based methods achieve impressive results, they suffer from expensive training costs and may sacrifice the refinement flexibility that makes DMs powerful. In contrast, inference-time methods preserve this flexibility without retraining. However, in the large-step regime, aggressive step sizes and expressive denoisers make contraction-based stability certificates difficult to satisfy. Unlike methods that rely on reference solutions, auxiliary optimization, or additional training, SteinDiff provides a reference-free principled  stabilization mechanism at inference time. Specifically, it introduces a geometry-aware correction to regularize large-step solver updates. By leveraging Stein's identity under the Gaussian forward noising process, SteinDiff yields closed-form correction estimators that adapt solver updates to local data geometry. Furthermore, we analyze the robustness of the resulting empirical estimator.

\section{Problem Setup}\label{preliminaries}

Diffusion models \citep{sohl2015deep, ho2020denoising, song2021score,karras2022elucidating} 
generate samples by reversing a noise-adding process. In the deterministic 
probability flow ODE (PF-ODE) formulation 
\citep{song2021score,kingma2021variational}, the generative trajectory maps an 
initial noise sample toward the data distribution:
\begin{equation}\label{dode}
\frac{\mathrm{d}\boldsymbol{x}}{\mathrm{d}t} 
= f(t)\boldsymbol{x}_t 
+ \frac{g^2(t)}{2\sigma_t}
\boldsymbol{\epsilon}_{\boldsymbol{\theta}}(\boldsymbol{x}_t, t),
\end{equation}
where $f(t) = \mathrm{d}\log\alpha_t / \mathrm{d}t$ and 
$g^2(t) = \mathrm{d}\sigma_t^2 / \mathrm{d}t - 2 f(t)\sigma_t^2$ are 
schedule-dependent functions. In practice, efficient inference often uses the 
data prediction parameterization
\begin{equation}\label{datapara}
\boldsymbol{x}_{\boldsymbol{\theta}}(\boldsymbol{x}_t, t) 
:= 
\frac{\boldsymbol{x}_t - \sigma_t \boldsymbol{\epsilon}_{\boldsymbol{\theta}}(\boldsymbol{x}_t, t)}
{\alpha_t},
\end{equation} 
which estimates the clean data associated with the noisy state 
\citep{kingma2021variational,lu2025dpm,li2025evodiff}.

For the theoretical derivation, we assume an ideal Gaussian coupling for the forward process:
\begin{equation}\label{forwardguassian}
\boldsymbol{x}_t=\alpha_t\boldsymbol{x}^*+\sigma_t\boldsymbol{\epsilon},
\qquad 
\boldsymbol{\epsilon}\sim\mathcal{N}(\boldsymbol{0},\boldsymbol{I}),
\end{equation}
where $\boldsymbol{x}^*\sim p_0$ denotes the latent clean data variable. This 
coupling is exact for the forward noising process and is used as an analytical 
device for deriving the correction coefficient. Practical 
discretized inference trajectories may deviate from this ideal coupling, which 
we address in the robustness analysis.

To solve Eq.~(\ref{dode}), we discretize the PF-ODE  numerically. By 
substituting the parameterization $\boldsymbol{x}_{\boldsymbol{\theta}}$ into 
the ODE and applying the variation-of-constants formula (details in 
Appendix~\ref{IterativeOperator}), we define the \emph{discretized update 
operator} $\operatorname{T}_{\boldsymbol{\theta}}$ mapping $\boldsymbol{x}_s$ 
to $\boldsymbol{x}_t$:
\begin{equation}\label{fixeddata}
    \boldsymbol{x}_t 
    = \operatorname{T}_{\boldsymbol{\theta}}(\boldsymbol{x}_s) 
    := 
    \frac{\sigma_t}{\sigma_s}\boldsymbol{x}_s 
    + \sigma_t 
    \int_{\kappa(s)}^{\kappa(t)} 
    \boldsymbol{x}_{\boldsymbol{\theta}}
    \left(\boldsymbol{x}_{\phi(\tau)}, \phi(\tau)\right)
    \mathrm{d} \tau,
\end{equation}
where $\kappa(t):=\alpha_t/\sigma_t$ is the signal-to-noise ratio parameter and 
$\phi(\cdot)$ denotes its inverse along the sampling trajectory.
 
\begin{figure}[h]
\centering
\includesvg[width=0.65\linewidth,inkscapelatex=false]{figures/trap_shrink.svg}
\caption{The Inference Stability Triangle. }
\label{trap_diagram}
\end{figure} 
\section{Method}\label{method}
DMs generate high-quality samples by progressively mapping noise to structured data.  
To analyze the dynamics of large-step inference, in this section, we formalize the \emph{Contractivity Trap}, revealing a practical tension between the high expressiveness required for diffusion models and the contraction-based stability certificate used for step-wise error suppression. 
To address this challenge, we propose SteinDiff, a principled framework that shifts the inference process from relying on implicit contraction-based certificates to applying explicit inference-time residual correction. Leveraging a step-wise correction structure, we derive a reference-free inference-time estimator via Stein’s identity to mitigate discretization-induced deviations. Finally, we analyze the perturbation stability of the Stein correction coefficient under score-controlled distributional shift and provide a complementary Stein perspective on EDM-style parameterizations. 
 
\subsection{The Contractivity Trap as a Certificate Breakdown}\label{trapai} 
In the few-step inference regime, contractivity of the discretized operator $\operatorname{T}_{\theta}$ facilitates stable error suppression. When this condition is violated, error accumulation can manifest as trajectory divergence, particularly under large step sizes. 
Drawing on the contraction principle as a sufficient stability criterion, we examine when $\operatorname{T}_{\theta}$ can be certified to satisfy  
\begin{equation}
\|\operatorname{T}_ {\boldsymbol {\theta}}(\boldsymbol{x}) - \operatorname{T}_ {\boldsymbol {\theta}}(\boldsymbol{y})\| \leq L_{\operatorname{T}} \|\boldsymbol{x} - \boldsymbol{y}\|,
\end{equation}
with Lipschitz constant $L_{\operatorname{T}} < 1$. 
However, satisfying this certificate poses a practical tension with diffusion-generation demands.  To demonstrate this, we examine the first-order discretization or DDIM, which serves as the foundation for advanced inference algorithms. Then, we have 
\begin{align}
    \begin{aligned}
\operatorname{F}_ {\boldsymbol {\theta}}(\boldsymbol{x}_s) &= \int_{\kappa(s)}^{\kappa(t)} \boldsymbol{x}_ {\boldsymbol {\theta}}\left(\boldsymbol{x}_{\phi(\tau)}, \phi(\tau)\right)\mathrm{d} \tau \\ &\approx (\kappa(t) - \kappa(s)) \boldsymbol{x}_ {\boldsymbol {\theta}}(\boldsymbol{x}_s, s). 
    \end{aligned}
\end{align} 

This yields the discretized inference operator:  
$\operatorname{T}_ {\boldsymbol {\theta}}(\boldsymbol{x}_s) = \frac{\sigma_t}{\sigma_s}\boldsymbol{x}_s + \sigma_t (\kappa(t) - \kappa(s)) \boldsymbol{x}_ {\boldsymbol {\theta}}(\boldsymbol{x}_s, s)$.  
Denote $h_t := \kappa(t) - \kappa(s)$. Clearly, $h_t>0$ for the denoising direction.  
By rigorous analysis detailed in Appendix~\ref{upperboundT},  an upper bound for the Lipschitz constant of this discretized operator is: 
$L_{\operatorname{T}} \leq \frac{\sigma_t}{\sigma_s} + \sigma_t h_t L_{\boldsymbol{x}_ {\boldsymbol {\theta}}}$, 
where $L_{\boldsymbol{x}_ {\boldsymbol {\theta}}}$ denotes the Lipschitz constant of the data prediction function $\boldsymbol{x}_ {\boldsymbol {\theta}}(\boldsymbol{x}_t, t)$ with respect to its input $\boldsymbol{x}_t$. To certify  $L_{\operatorname{T}} < 1$ using this upper bound, we require: $\frac{\sigma_t}{\sigma_s} + \sigma_t h_t L_{\boldsymbol{x}_ {\boldsymbol {\theta}}}<1$, which results in   
$L_{\boldsymbol{x}_ {\boldsymbol {\theta}}} (\frac{\alpha_t}{\sigma_t}-\frac{\alpha_s}{\sigma_s})<  \frac{1}{\sigma_t}-\frac{1}{\sigma_s}$.    
However, this  creates \textbf{an inherent tension triangle} as illustrated in Figure \ref{trap_diagram}:  \textbf{Efficiency} demands larger step sizes $h_t$ to reduce function evaluations;   \textbf{Model expressiveness} requires high sensitivity (large $L_{\boldsymbol{x}_{\boldsymbol{\theta}}}$) to capture intricate distributions; and \textbf{Stable inference} requires a careful balance between these competing factors to mitigate errors.  
\begin{proposition}[Loss of a sufficient contractivity certificate]\label{contractivitycertificate}
For the update $\operatorname{T}_{\boldsymbol{\theta}}$, the upper-bound certificate is lost when $L_{\boldsymbol{x}_ {\boldsymbol {\theta}}} \geq \frac{\sigma_s - \sigma_t}{\sigma_s\alpha_t-\sigma_t\alpha_s}$.  Further, since $\frac{1}{\alpha_t} > \frac{\sigma_s - \sigma_t}{\sigma_s\alpha_t-\sigma_t\alpha_s}$ for noise schedules,  models with $L_{\boldsymbol{x}_ {\boldsymbol {\theta}}} \geq \frac{1}{\alpha_t}$ fall outside even this lenient contractivity certificate.  
\end{proposition}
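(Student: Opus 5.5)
We start from the Lipschitz upper bound quoted above (Appendix~\ref{upperboundT}), $L_{\operatorname{T}} \le \frac{\sigma_t}{\sigma_s} + \sigma_t h_t L_{\boldsymbol{x}_{\boldsymbol{\theta}}}$ with $h_t = \frac{\alpha_t}{\sigma_t}-\frac{\alpha_s}{\sigma_s}>0$. The certificate $L_{\operatorname{T}}<1$ is obtained from this bound exactly when the right-hand side is below $1$. So we characterize when $\frac{\sigma_t}{\sigma_s} + \sigma_t h_t L_{\boldsymbol{x}_{\boldsymbol{\theta}}} \ge 1$.

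\textbf{Step 1: the threshold.} Multiply the failure condition by $\sigma_s>0$:
\[
\sigma_t + \sigma_t\sigma_s h_t L_{\boldsymbol{x}_{\boldsymbol{\theta}}} \ge \sigma_s .
\]
Next use $\sigma_t\sigma_s h_t = \sigma_s\alpha_t - \sigma_t\alpha_s$. Since $h_t>0$, this quantity is positive, so we may divide by it and get
\[
L_{\boldsymbol{x}_{\boldsymbol{\theta}}} \ge \frac{\sigma_s-\sigma_t}{\sigma_s\alpha_t-\sigma_t\alpha_s}.
\]
Every step is an equivalence, so this is precisely the regime in which the upper-bound certificate is lost.

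\textbf{Step 2: comparison with $1/\alpha_t$.} We need the standard schedule properties in the denoising direction ($t<s$): $\alpha_t \ge \alpha_s > 0$ and $\sigma_s > \sigma_t \ge 0$.

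Both $\alpha_t$ and the denominator $\sigma_s\alpha_t-\sigma_t\alpha_s$ are positive, so the claimed strict inequality is equivalent, after cross-multiplying, to
\[
\sigma_s\alpha_t - \sigma_t\alpha_s > \alpha_t(\sigma_s-\sigma_t).
\]
This simplifies to $\sigma_t\alpha_t > \sigma_t\alpha_s$, i.e.\ $\sigma_t(\alpha_t-\alpha_s)>0$.

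This is the main obstacle: it requires $\sigma_t>0$ (the intermediate step is not the terminal noiseless time) and $\alpha_t>\alpha_s$ strictly. The latter holds for schedules with $\alpha$ strictly decreasing in $t$, e.g.\ VP schedules with $\alpha_t^2+\sigma_t^2=1$. I will state these as the intended meaning of ``for noise schedules.''

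At the boundary the inequality degenerates. For VE schedules ($\alpha\equiv 1$) it becomes an equality, and at $\sigma_t=0$ it also holds only with equality. In those cases the proposition should be read with $\ge$.

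\textbf{Step 3: conclusion.} Given Step 2, any model with $L_{\boldsymbol{x}_{\boldsymbol{\theta}}}\ge 1/\alpha_t$ also satisfies $L_{\boldsymbol{x}_{\boldsymbol{\theta}}} > \frac{\sigma_s-\sigma_t}{\sigma_s\alpha_t-\sigma_t\alpha_s}$. By Step 1, such a model loses the certificate.

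The threshold $1/\alpha_t$ is independent of $s$. Hence the loss holds for every choice of the previous step $s$, however large the step, which is the sense in which the certificate is ``lenient.''
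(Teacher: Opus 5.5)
Your proposal is correct and follows essentially the same route as the paper's proof: you rearrange $\frac{\sigma_t}{\sigma_s}+\sigma_t h_t L_{\boldsymbol{x}_{\boldsymbol{\theta}}}<1$ into the threshold, then cross-multiply so that the comparison with $1/\alpha_t$ reduces to $\alpha_t>\alpha_s$. Your boundary caveat is also right: the paper's step $\sigma_t\alpha_t>\sigma_t\alpha_s \Leftrightarrow \alpha_t>\alpha_s$ tacitly divides by $\sigma_t>0$, and the strict inequality fails under $\alpha\equiv 1$, where the two thresholds coincide even though the paper lists that EDM case among its special cases.
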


\begin{remark}[Vulnerability in Iterations]      
The contractivity certificate can become harder to satisfy in sensitive phases of generation. Even in the late stage, as $\alpha_t \to 1$, neural
 network models with sufficient capacity to learn complex data distributions may exhibit large local Lipschitz constants for fine-grained discriminative capabilities \citep{raghu2017expressive,debortoli2022convergence}.
\end{remark}

\emph{Practical Consequences.} From an operator perspective, updates that are not strictly contractive may fail to suppress local errors, and expansive updates can further amplify them. We visualize this mechanism in Figure \ref{nonexpansiveexamp} through the instructive case $\operatorname{T} = -\operatorname{I}d$, whose Lipschitz constant is $1$. Although this map is non-expansive, it is not a strict contraction and can lead to indefinite oscillation rather than convergence. This instability is not merely theoretical; as empirically suggested in Figure \ref{fig:lipschitzvarying}, practical diffusion updates can enter locally expansive regimes. Consequently, this motivates an inference-time correction framework to mitigate local error amplification without limiting the model's capacity.

\begin{figure}[t] 
\centering
\includesvg[width=0.8\linewidth,inkscapelatex=false]{figures/nonexample-trajectory.svg} 
\caption{
Oscillation under the non-strictly contractive map $\operatorname{T}=-\operatorname{I}d$, whose Lipschitz constant is $1$. 
Blue and red arrows depict alternating steps ($k$ vs. $k-1$) between $\boldsymbol{x}_t$ and $-\boldsymbol{x}_t$.
} 
\label{nonexpansiveexamp}
\end{figure}

\begin{figure*}[t]
        \centering
        ~\subfigure
        {\includegraphics[width=0.45\linewidth]{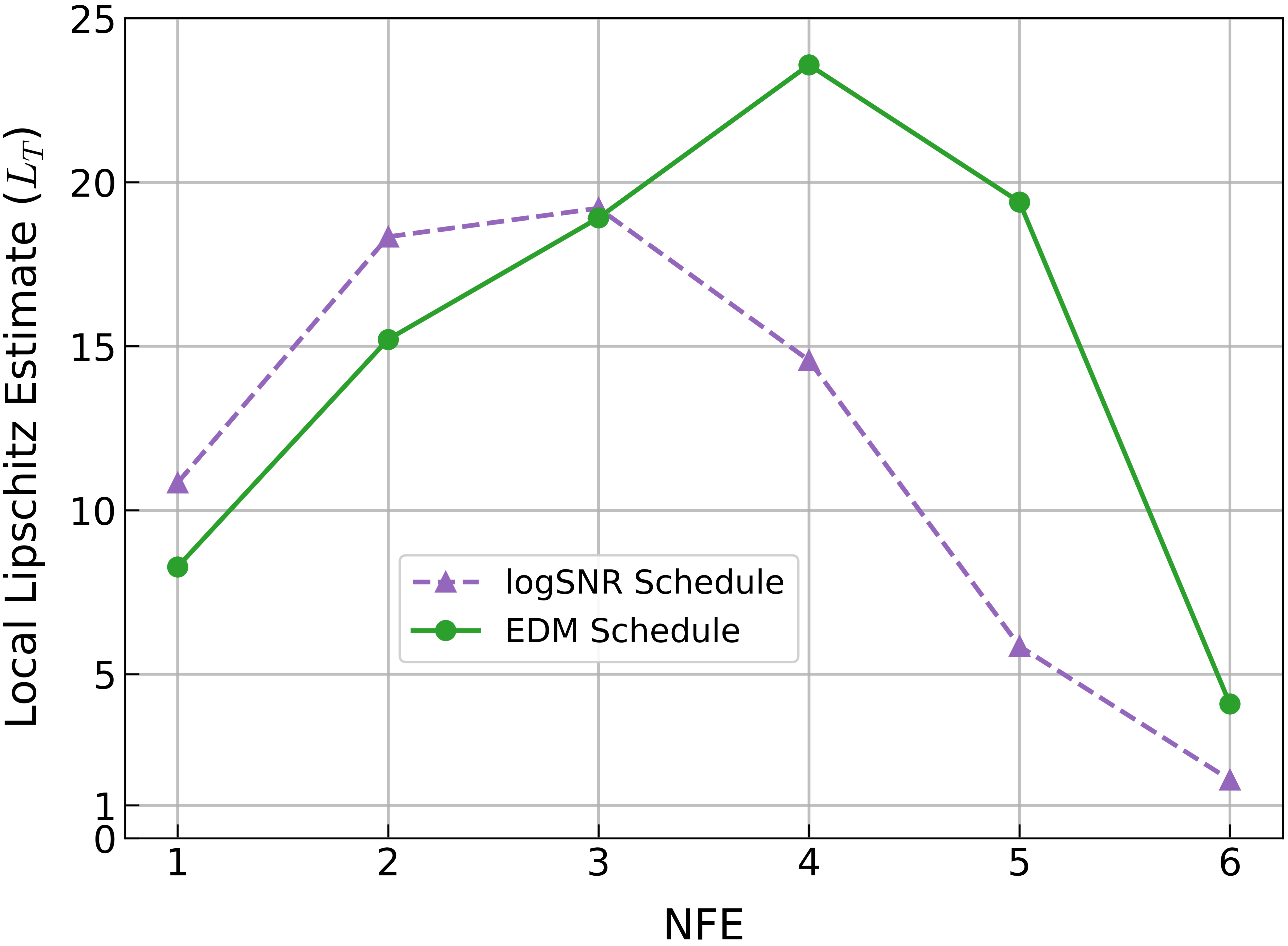}}
        ~~~
        \subfigure
        {\includegraphics[width=0.45\linewidth]{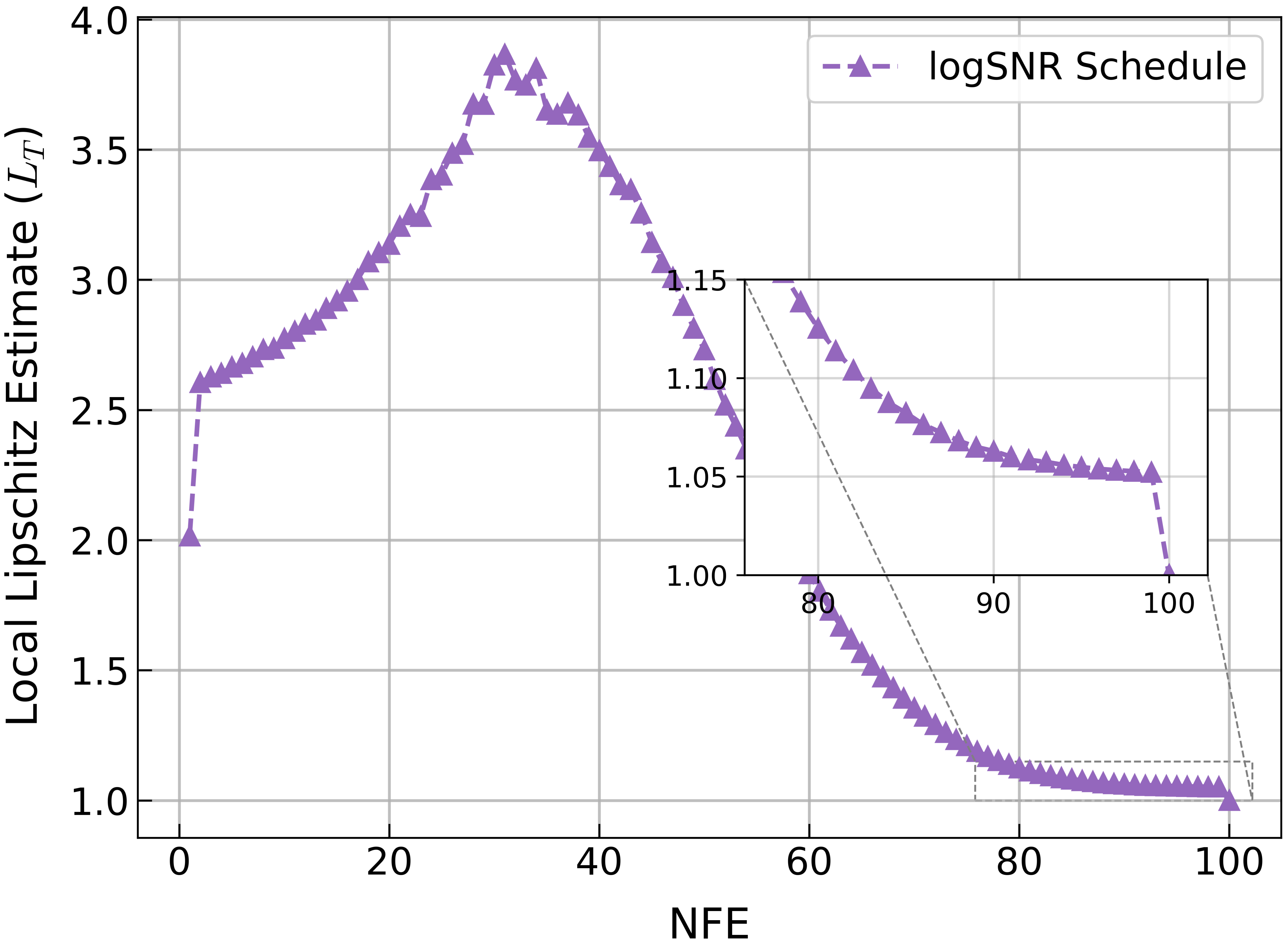}}
 
\caption{
Empirical local Lipschitz estimates for efficient inference. 
(Left) Local expansion across schedules (NFE=6) using DPM-Solver++ for the EDM2 model. 
We compare local Lipschitz estimates ($L_{\operatorname{T}}$) for logSNR and EDM schedules. 
Both schedules exhibit regions where the estimated local Lipschitz constant exceeds the strict contraction threshold ($L_{\operatorname{T}} < 1$), with peaks reaching $\approx 24$. 
This supports the practical relevance of the contractivity trap in large-step inference. 
(Right) Persistence despite finer discretization (NFE=100). 
Even with smaller steps, the estimated operator remains near or above the strict contraction threshold for a large portion of the trajectory (see inset). 
This suggests that step refinement alone may not fully remove local expansion effects in practical samplers.
}
\label{fig:lipschitzvarying}
\end{figure*}

\subsection{Towards Trajectory Stabilization 
}
\label{correction_damping}
To address the contractivity trap, we rethink the inference process from the perspective of trajectory stabilization. As analyzed in Section \ref{trapai}, the high expressiveness required for DMs can push $\operatorname{T}_{\theta}$ into regimes where contraction-based error suppression is no longer certified.  

To mitigate this instability, we introduce an adaptive stabilization mechanism. Rather than directly adopting the solver's candidate $\operatorname{T}_{\boldsymbol{\theta}}(\boldsymbol{x}_k)$, we formulate the updated state as a rectified estimate: 
\begin{equation} 
\label{rectified_update}
\boldsymbol{x}_{k-1} = (1-\gamma_k)\boldsymbol{x}_k + \gamma_k \operatorname{T}_{\boldsymbol{\theta}}(\boldsymbol{x}_k), 
\end{equation}
where $\gamma_k$ serves as an adaptive correction coefficient.
Unlike heuristic truncation, we seek to derive the optimal $\gamma_k$ that minimizes a step-wise estimation error relative to the latent clean target.
This yields a principled inference-time correction of the solver candidate, rather than a heuristic truncation rule. 
By reducing the effect of aggressive large-step updates, this explicit correction framework mitigates the tension between expressiveness and stability.  
This point is formalized by the step-wise MSE analysis in Theorems~\ref{thm:steindiff_optimality} and~\ref{thm:mse_decay}, which does not require pointwise contractivity of $\operatorname{T}_{\boldsymbol{\theta}}$.

While existing parameterizations like $\boldsymbol{v}$-prediction offer improved stability over $\boldsymbol{x}$-prediction \citep{salimans2022progressive},  they do not directly provide the inference-time residual correction studied here. Thus, this explicit geometric stabilization remains crucial.

\subsection{Step-wise Stein Correction}
\label{Stein'sStabilization}

While the residual correction in Eq.~(\ref{rectified_update}) provides a structural mechanism for correcting large-step inference, the clean target $\boldsymbol{x}^*$ is not directly available during sampling. 
To derive a principled correction coefficient, we consider the step-wise expected squared error with respect to the latent clean variable under the forward noising process:
\begin{equation}
 \boldsymbol{x}_k=\alpha_k\boldsymbol{x}^*+\sigma_k\boldsymbol{\epsilon}, 
\qquad 
\boldsymbol{\epsilon}\sim\mathcal{N}(\boldsymbol{0},\mathbf{I}).   
\end{equation}
Here, $\boldsymbol{x}^*\sim p_0$ denotes the clean data variable in the forward process. 
This perspective views $\boldsymbol{x}_k$ through the forward conditional Gaussian 
$q(\boldsymbol{x}_k|\boldsymbol{x}^*)$, which motivates our application of Stein's identity to transform the unknown clean-target term into a tractable divergence term, yielding a reference-free estimator that captures local geometric information through the divergence of the solver residual.

\begin{theorem}[Step-wise MSE-optimal correction]\label{thm:opt_gamma} 
Consider the local step-wise objective of minimizing the expected squared distance to the latent clean target:
$$
J(\gamma_k) = \mathbb{E}\bigl[\|(1-\gamma_k)\boldsymbol{x}_k+\gamma_k \operatorname{T}_{\boldsymbol{\theta}}(\boldsymbol{x}_k)-\boldsymbol{x}^*\|^2\bigr],
$$
where the expectation is taken under the ideal forward coupling $\boldsymbol{x}^* \sim p_0$ and 
$\boldsymbol{x}_k|\boldsymbol{x}^*\sim
\mathcal{N}(\alpha_k\boldsymbol{x}^*,\sigma_k^2\mathbf{I})$.   
Let 
$
\boldsymbol{u}_k
:=
\boldsymbol{x}_k-\operatorname{T}_{\boldsymbol{\theta}}(\boldsymbol{x}_k)$, 
$
c_k:=\mathbb{E}\|\boldsymbol{u}_k\|^2>0.
$
Then the minimizer of $J(\gamma_k)$ is
\begin{equation}\label{optimalgamma}
\gamma_k^*=\frac{\mathbb{E}\bigl\langle \boldsymbol{u}_k,\,\boldsymbol{x}_k - \boldsymbol{x}^*\bigr\rangle}
{\mathbb{E}\bigl\|\boldsymbol{u}_k\bigr\|^2}.
\end{equation}
 The complete proof is provided in Appendix~\ref{app:proof_opt_gamma}. 
\end{theorem}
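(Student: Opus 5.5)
The plan is to treat $J$ as a scalar quadratic in $\gamma_k$ and minimize it directly. First I rewrite the candidate state in terms of the residual $\boldsymbol{u}_k=\boldsymbol{x}_k-\operatorname{T}_{\boldsymbol{\theta}}(\boldsymbol{x}_k)$. Since
\[
(1-\gamma_k)\boldsymbol{x}_k+\gamma_k\operatorname{T}_{\boldsymbol{\theta}}(\boldsymbol{x}_k)=\boldsymbol{x}_k-\gamma_k\boldsymbol{u}_k,
\]
the error with respect to the clean target is $\boldsymbol{e}_k-\gamma_k\boldsymbol{u}_k$, where $\boldsymbol{e}_k:=\boldsymbol{x}_k-\boldsymbol{x}^*$.

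Expanding the squared norm and taking expectations under the forward coupling gives
\[
J(\gamma_k)=\mathbb{E}\|\boldsymbol{e}_k\|^2-2\gamma_k\,\mathbb{E}\langle\boldsymbol{u}_k,\boldsymbol{e}_k\rangle+\gamma_k^2\,c_k .
\]
This step needs the three expectations to be finite. I would state this as a standing integrability assumption: $\mathbb{E}\|\boldsymbol{x}^*\|^2<\infty$ and $\mathbb{E}\|\operatorname{T}_{\boldsymbol{\theta}}(\boldsymbol{x}_k)\|^2<\infty$. The second holds, for instance, when $\boldsymbol{x}_{\boldsymbol{\theta}}$ has at most linear growth. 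Under these conditions Cauchy--Schwarz makes the cross term finite.

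Because $c_k>0$ by hypothesis, $J$ is a strictly convex quadratic in $\gamma_k$. Setting $J'(\gamma_k)=-2\,\mathbb{E}\langle\boldsymbol{u}_k,\boldsymbol{e}_k\rangle+2\gamma_k c_k=0$ yields the unique global minimizer
\[
\gamma_k^*=\frac{\mathbb{E}\langle\boldsymbol{u}_k,\boldsymbol{e}_k\rangle}{c_k},
\]
which is exactly Eq.~(\ref{optimalgamma}). Completing the square also gives the attained minimum,
\[
J(\gamma_k^*)=\mathbb{E}\|\boldsymbol{e}_k\|^2-\frac{\bigl(\mathbb{E}\langle\boldsymbol{u}_k,\boldsymbol{e}_k\rangle\bigr)^2}{c_k}.
\]
This value is worth recording for the later MSE-decay result.

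There is no real obstacle here. The only delicate point is bookkeeping: $\gamma_k$ is a deterministic scalar, chosen before the expectation is taken, and it is not a function of $\boldsymbol{x}_k$. That is what allows it to be pulled out of the expectations. The Stein identity is not needed for this statement. It only enters afterwards, to make the numerator $\mathbb{E}\langle\boldsymbol{u}_k,\boldsymbol{x}_k-\boldsymbol{x}^*\rangle$ reference-free.
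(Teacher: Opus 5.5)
Your proposal is correct and follows the paper's proof essentially line for line: you rewrite the update as $\boldsymbol{x}_k-\gamma_k\boldsymbol{u}_k$, expand $J$ into the quadratic $a_k-2\gamma_k b_k+\gamma_k^2 c_k$, and use $c_k>0$ for strict convexity and the first-order condition $\gamma_k^*=b_k/c_k$. Your explicit finite-second-moment assumption is a small improvement, since the paper leaves the finiteness of $a_k$, $b_k$, $c_k$ implicit.
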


\begin{theorem}[Step-wise MSE Improvement]\label{thm:steindiff_optimality}
The vanilla ODE solver corresponds to $\gamma = 1$. The coefficient $\gamma_k^*$ from Theorem~\ref{thm:opt_gamma} minimizes the quadratic objective $J(\gamma)$ over $\gamma$, yielding a step-wise expected error no greater than that of the vanilla update:
\begin{equation}\label{eq:error_optimality}
\mathbb{E}\bigl[\|\boldsymbol{x}_{k-1}^{\text{Stein}} - \boldsymbol{x}^*\|^2\bigr] \leq \mathbb{E}\bigl[\|\operatorname{T}_{\boldsymbol{\theta}}(\boldsymbol{x}_k) - \boldsymbol{x}^*\|^2\bigr],
\end{equation}
with equality iff $b_k=c_k$, i.e., when the vanilla coefficient $\gamma=1$ already minimizes the step-wise objective. The step-wise improvement is $\Delta J := J(1) - J(\gamma_k^*) = (b_k-c_k)^2/c_k \geq 0$ where $b_k := \mathbb{E}[\langle \boldsymbol{u}_k, \boldsymbol{x}_k - \boldsymbol{x}^*\rangle]$ and $c_k := \mathbb{E}[\|\boldsymbol{u}_k\|^2]$.  The complete proof is provided in Appendix~\ref{app:proof_steindiff_optimality}. 
\end{theorem}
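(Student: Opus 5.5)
The plan is to rewrite $J$ as a scalar quadratic in $\gamma$ and compare its value at $\gamma=1$ with its value at the minimizer $\gamma_k^*$ from Theorem~\ref{thm:opt_gamma}. The starting point is the identity
\[
(1-\gamma)\boldsymbol{x}_k+\gamma\operatorname{T}_{\boldsymbol{\theta}}(\boldsymbol{x}_k)-\boldsymbol{x}^* = (\boldsymbol{x}_k-\boldsymbol{x}^*) - \gamma\boldsymbol{u}_k ,
\]
which follows from the definition $\boldsymbol{u}_k=\boldsymbol{x}_k-\operatorname{T}_{\boldsymbol{\theta}}(\boldsymbol{x}_k)$. Set $a_k:=\mathbb{E}\|\boldsymbol{x}_k-\boldsymbol{x}^*\|^2$, which is finite under the forward coupling provided $p_0$ has a second moment. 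We also need $c_k<\infty$, which requires $\operatorname{T}_{\boldsymbol{\theta}}(\boldsymbol{x}_k)$ to be square-integrable; this holds, for instance, if $\boldsymbol{x}_{\boldsymbol{\theta}}$ has linear growth. Expanding the square and using linearity of expectation gives
\[
J(\gamma)=a_k-2\gamma b_k+\gamma^2 c_k .
\]
Cauchy--Schwarz then shows that $b_k$ is finite.

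Since $c_k>0$ by the hypothesis of Theorem~\ref{thm:opt_gamma}, $J$ is a strictly convex parabola. We can complete the square:
\[
J(\gamma)=c_k\Bigl(\gamma-\tfrac{b_k}{c_k}\Bigr)^2 + a_k-\tfrac{b_k^2}{c_k}.
\]
This recovers $\gamma_k^*=b_k/c_k$, consistent with Eq.~(\ref{optimalgamma}), and gives the general identity $J(\gamma)-J(\gamma_k^*)=c_k(\gamma-\gamma_k^*)^2$.

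Next, note that the vanilla solver is the case $\gamma=1$, since then $\boldsymbol{x}_{k-1}=\operatorname{T}_{\boldsymbol{\theta}}(\boldsymbol{x}_k)$. Substituting $\gamma=1$ gives
\[
\Delta J = c_k\Bigl(1-\tfrac{b_k}{c_k}\Bigr)^2=\frac{(c_k-b_k)^2}{c_k}\ge 0 .
\]
This is exactly Eq.~(\ref{eq:error_optimality}) once we identify $J(\gamma_k^*)=\mathbb{E}\|\boldsymbol{x}^{\text{Stein}}_{k-1}-\boldsymbol{x}^*\|^2$ and $J(1)=\mathbb{E}\|\operatorname{T}_{\boldsymbol{\theta}}(\boldsymbol{x}_k)-\boldsymbol{x}^*\|^2$. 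For the equality case, because $c_k>0$ we have $\Delta J=0$ if and only if $b_k=c_k$, which is the same as $\gamma_k^*=1$; strict convexity makes the minimizer unique, so this is precisely the condition that $\gamma=1$ already minimizes $J$.

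The computation itself is routine. The only point needing care is interpretive: $\boldsymbol{x}_{k-1}^{\text{Stein}}$ must use the \emph{exact} population coefficient $\gamma_k^*$, a deterministic scalar that does not depend on the realization of $\boldsymbol{x}_k$. If $\gamma$ were instead a data-dependent estimate, such as a batch or Stein-based estimator, then $J$ would no longer be a quadratic in a fixed scalar and the bound would require a separate argument. I would therefore state explicitly that the theorem concerns the oracle coefficient, and leave the empirical estimator to the later robustness analysis.
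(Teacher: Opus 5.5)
Your proposal is correct and follows essentially the same route as the paper's proof. Both expand $J(\gamma)=a_k-2\gamma b_k+\gamma^2 c_k$, evaluate at $\gamma=1$ and at $\gamma_k^*=b_k/c_k$, and obtain $\Delta J=(b_k-c_k)^2/c_k$; the paper subtracts the two values directly rather than completing the square, though it uses your completed-square identity $J(\gamma)-J(\gamma_k^*)=c_k(\gamma-\gamma_k^*)^2$ later in Theorem~\ref{thm:stability_comprehensive}. Your integrability conditions and your remark that the bound concerns the deterministic oracle coefficient are sound clarifications that the paper leaves implicit.
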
 
 
\begin{remark}[Clean-target alignment]
The expression of $\gamma_k^*$ shows that the correction is governed by a normalized alignment between the solver residual 
$\boldsymbol{u}_k=\boldsymbol{x}_k-\operatorname{T}_{\boldsymbol{\theta}}(\boldsymbol{x}_k)$ 
and the clean-target direction $\boldsymbol{x}_k-\boldsymbol{x}^*$. 
The numerator measures the component of the residual that is consistent with the desired denoising direction, while the denominator normalizes this quantity by the residual energy. 
Consequently, SteinDiff calibrates each inference step according to the expected alignment between the solver residual and the clean-target direction under the forward coupling.
\end{remark}

While Theorem~\ref{thm:opt_gamma} provides the theoretical optimum, Eq.~(\ref{optimalgamma}) remains intractable during inference due to the unknown clean data $\boldsymbol{x}^*$. To derive a tractable, reference-free estimator,  we use the ideal forward noising coupling  
$p_{0k}(\boldsymbol{x}_k | \boldsymbol{x}^*) = \mathcal{N}(\boldsymbol{x}_k; \alpha_k\boldsymbol{x}^*, \sigma_k^2\mathbf{I})$, which is exact for the forward process \citep{anderson1982reverse, song2021score}. Although discretized inference inevitably introduces approximation errors, this exact coupling provides a principled pathway to invoke Stein's Identity for reference-free estimation, with the resulting deviations explicitly analyzed in our subsequent robustness analysis.

\begin{lemma}[Stein's Identity]\label{lem:stein}
Let $\boldsymbol{x} \sim \mathcal{N}(\boldsymbol{\mu}, \sigma^2 \mathbf{I})$. For any differentiable vector field $\boldsymbol{v}$ with suitable integrability:
$\mathbb{E}\bigl[\langle \boldsymbol{v}(\boldsymbol{x}), \boldsymbol{x} - \boldsymbol{\mu}\rangle\bigr] = \sigma^2 \mathbb{E}\bigl[\nabla \cdot \boldsymbol{v}(\boldsymbol{x})\bigr]$. We provided the  proof in Appendix~ \ref{app:proof_stein_identity}. 
\end{lemma}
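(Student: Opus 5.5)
The plan is to prove the identity by Gaussian integration by parts. The gradient of the Gaussian density supplies exactly the factor $\boldsymbol{x}-\boldsymbol{\mu}$, so the left-hand side can be moved onto derivatives of $\boldsymbol{v}$.

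First, write the density $p(\boldsymbol{x}) = (2\pi\sigma^2)^{-d/2}\exp\bigl(-\|\boldsymbol{x}-\boldsymbol{\mu}\|^2/(2\sigma^2)\bigr)$. Differentiating gives the score relation
$$\nabla p(\boldsymbol{x}) = -\frac{\boldsymbol{x}-\boldsymbol{\mu}}{\sigma^2}\,p(\boldsymbol{x}).$$
It follows that
$$\mathbb{E}\bigl[\langle \boldsymbol{v}(\boldsymbol{x}),\boldsymbol{x}-\boldsymbol{\mu}\rangle\bigr] = -\sigma^2\int \langle \boldsymbol{v}(\boldsymbol{x}),\nabla p(\boldsymbol{x})\rangle\,\mathrm{d}\boldsymbol{x}.$$

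Second, work coordinatewise. Write the inner product as $\sum_i v_i\,\partial_i p$ and integrate by parts in $x_i$ for each $i$, using Fubini to treat the other coordinates as parameters. This gives
$$\int v_i\,\partial_i p\,\mathrm{d}\boldsymbol{x} = -\int (\partial_i v_i)\,p\,\mathrm{d}\boldsymbol{x},$$
provided the boundary term $[v_i p]_{x_i=-\infty}^{+\infty}$ vanishes. Summing over $i$ then yields $\sigma^2\mathbb{E}[\nabla\cdot\boldsymbol{v}(\boldsymbol{x})]$, which is the claim.

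Third, make the phrase ``suitable integrability'' precise so that these steps are justified. I would assume the following:
\begin{itemize}
\item each $v_i$ is differentiable (absolutely continuous along coordinate lines suffices);
\item $\mathbb{E}|v_i(\boldsymbol{x})(x_i-\mu_i)|<\infty$ and $\mathbb{E}|\partial_i v_i(\boldsymbol{x})|<\infty$.
\end{itemize}

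The main obstacle is showing the boundary terms vanish under only these integrability hypotheses, without assuming polynomial growth of $\boldsymbol{v}$. The cleanest route avoids boundary terms entirely by a Fubini argument. Fix the other coordinates and write the one-dimensional Gaussian factor in $x_i$ as $\varphi$. Using $-\varphi'(y)=\int_y^\infty (z-\mu_i)\varphi(z)\,\mathrm{d}z/\sigma^2$ for $y>\mu_i$, and the analogous formula for $y<\mu_i$, express $\int \partial_i v_i\,\varphi$ as a double integral. Swapping the order of integration, which is justified by integrability of $\partial_i v_i$, recovers $\int v_i (x_i-\mu_i)\varphi/\sigma^2$. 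This is the standard proof of the one-dimensional Stein lemma. Alternatively, if $\boldsymbol{v}$ is assumed to have at most polynomial growth, the Gaussian tail kills the boundary terms immediately, and I would note that this covers the neural-network-based residual fields used in the paper.

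Finally, I would remark that applying the lemma with $\boldsymbol{\mu}=\alpha_k\boldsymbol{x}^*$ and $\sigma=\sigma_k$, conditionally on $\boldsymbol{x}^*$, and then taking the outer expectation, is how it will convert the intractable numerator $b_k$ of Theorem~\ref{thm:opt_gamma} into a divergence term.
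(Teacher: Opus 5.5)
Your proposal is correct and follows the paper's proof essentially verbatim: the score relation $\nabla p=-(\boldsymbol{x}-\boldsymbol{\mu})p/\sigma^2$ turns the left side into $-\sigma^2\int\langle\boldsymbol{v},\nabla p\rangle\,\mathrm{d}\boldsymbol{x}$, and integration by parts finishes the argument; the paper simply asserts that the boundary terms vanish, whereas you go further and justify this with the standard Fubini argument for the one-dimensional Stein lemma. One slip in that refinement: you need the identities $\varphi(y)=\int_y^\infty (z-\mu_i)\varphi(z)\,\mathrm{d}z/\sigma^2$ and $\varphi(y)=-\int_{-\infty}^y (z-\mu_i)\varphi(z)\,\mathrm{d}z/\sigma^2$, not $-\varphi'(y)=\cdots$, because it is $\varphi$ itself that you rewrite as an integral before swapping the order of integration.
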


  Applying Lemma~\ref{lem:stein} transforms the intractable term $\mathbb{E}[\langle \boldsymbol{u}_k, \boldsymbol{x}^*\rangle]$ into a computable divergence:

\begin{theorem}[Reference-free estimator via Stein's identity]\label{thm:stainopt_gamma}
Under the ideal forward coupling above, the step-wise MSE-optimal coefficient $\gamma_k^*$  admits the reference-free expression:   
\begin{empheq}[box=\fbox]{equation}\label{practical_gamma}
\gamma_k^* = \frac{ \left(1 - \frac{1}{\alpha_k}\right) \mathbb{E} \langle \boldsymbol{u}_k, \boldsymbol{x}_k \rangle + \frac{\sigma_k^2}{\alpha_k} \mathbb{E}[\nabla \cdot \boldsymbol{u}_k] }{ \mathbb{E} \| \boldsymbol{u}_k \|^2 },
\end{empheq}
where $\nabla \cdot \boldsymbol{u}_k = \sum_{i=1}^d \partial_{\boldsymbol{x}_k^{(i)}} (\boldsymbol{u}_k)_i$ is the divergence. The proof is given in Appendix~\ref{stainoptimal}. 
\end{theorem}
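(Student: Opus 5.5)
The plan is to start from the closed form of Theorem~\ref{thm:opt_gamma},
$$\gamma_k^*=\frac{\mathbb{E}\langle \boldsymbol{u}_k,\boldsymbol{x}_k-\boldsymbol{x}^*\rangle}{\mathbb{E}\|\boldsymbol{u}_k\|^2}.$$
The denominator is already reference-free, so only the numerator needs rewriting. The key observation is that $\boldsymbol{u}_k=\boldsymbol{x}_k-\operatorname{T}_{\boldsymbol{\theta}}(\boldsymbol{x}_k)$ is a deterministic function of $\boldsymbol{x}_k$ alone. The solver update uses only the current state and the network, and never sees $\boldsymbol{x}^*$. This is what allows Lemma~\ref{lem:stein} to be applied conditionally on $\boldsymbol{x}^*$.

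\textbf{Step 1 (eliminate $\boldsymbol{x}^*$ algebraically).} From the forward coupling $\boldsymbol{x}_k=\alpha_k\boldsymbol{x}^*+\sigma_k\boldsymbol{\epsilon}$ we get $\boldsymbol{x}^*=(\boldsymbol{x}_k-\sigma_k\boldsymbol{\epsilon})/\alpha_k$. Hence
$$\boldsymbol{x}_k-\boldsymbol{x}^*=\Bigl(1-\tfrac{1}{\alpha_k}\Bigr)\boldsymbol{x}_k+\tfrac{1}{\alpha_k}\bigl(\boldsymbol{x}_k-\alpha_k\boldsymbol{x}^*\bigr),$$
where $\boldsymbol{x}_k-\alpha_k\boldsymbol{x}^*=\sigma_k\boldsymbol{\epsilon}$. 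Taking the inner product with $\boldsymbol{u}_k$ and then expectations splits the numerator into two pieces. The first is $(1-1/\alpha_k)\,\mathbb{E}\langle\boldsymbol{u}_k,\boldsymbol{x}_k\rangle$, which is directly computable. The second is $\frac{1}{\alpha_k}\mathbb{E}\langle\boldsymbol{u}_k,\boldsymbol{x}_k-\alpha_k\boldsymbol{x}^*\rangle$.

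\textbf{Step 2 (Stein's identity conditionally).} Condition on $\boldsymbol{x}^*$. Then $\boldsymbol{x}_k\mid\boldsymbol{x}^*\sim\mathcal{N}(\alpha_k\boldsymbol{x}^*,\sigma_k^2\mathbf{I})$, and $\boldsymbol{v}:=\boldsymbol{u}_k(\cdot)$ is a fixed vector field that does not depend on $\boldsymbol{x}^*$. Lemma~\ref{lem:stein} with $\boldsymbol{\mu}=\alpha_k\boldsymbol{x}^*$ and $\sigma=\sigma_k$ gives
$$\mathbb{E}\bigl[\langle\boldsymbol{u}_k,\boldsymbol{x}_k-\alpha_k\boldsymbol{x}^*\rangle\mid\boldsymbol{x}^*\bigr]=\sigma_k^2\,\mathbb{E}\bigl[\nabla\cdot\boldsymbol{u}_k\mid\boldsymbol{x}^*\bigr].$$
Taking the outer expectation over $\boldsymbol{x}^*\sim p_0$ (tower property) turns the second piece into $\frac{\sigma_k^2}{\alpha_k}\mathbb{E}[\nabla\cdot\boldsymbol{u}_k]$. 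Substituting both pieces into the formula from Theorem~\ref{thm:opt_gamma} gives Eq.~(\ref{practical_gamma}). The final expectation involves only the marginal law of $\boldsymbol{x}_k$, so no clean reference samples are required.

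\textbf{Main obstacle.} The algebra is routine; the delicate part is justifying the hypotheses of Lemma~\ref{lem:stein} and the use of Fubini. Specifically, one needs:
\begin{itemize}
\item $\boldsymbol{u}_k$ (equivalently $\operatorname{T}_{\boldsymbol{\theta}}$, and hence $\boldsymbol{x}_{\boldsymbol{\theta}}$) to be weakly differentiable;
\item $\mathbb{E}\|\boldsymbol{u}_k\|^2<\infty$ and $\mathbb{E}|\nabla\cdot\boldsymbol{u}_k|<\infty$;
\item $\boldsymbol{u}_k$ to grow at most polynomially, so that the Gaussian boundary terms vanish in the integration by parts;
\item $p_0$ to have a finite second moment, so that the conditional identities can be integrated against $p_0$ and the interchange of expectations is valid.
\end{itemize}
I would state these as standing regularity assumptions, for instance a locally Lipschitz network with linear growth, which covers standard architectures. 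I would also note that $c_k>0$ is inherited from Theorem~\ref{thm:opt_gamma}, so the quotient is well defined.
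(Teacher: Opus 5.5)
Your proposal is correct and follows essentially the same route as the paper's proof. Both eliminate $\boldsymbol{x}^*$ algebraically via $\boldsymbol{x}_k-\boldsymbol{x}^*=(1-\tfrac{1}{\alpha_k})\boldsymbol{x}_k+\tfrac{1}{\alpha_k}(\boldsymbol{x}_k-\alpha_k\boldsymbol{x}^*)$, apply Lemma~\ref{lem:stein} conditionally on $\boldsymbol{x}^*$ with $\boldsymbol{\mu}=\alpha_k\boldsymbol{x}^*$, and finish with the tower property; your regularity conditions and the remark that $\boldsymbol{u}_k$ must depend on $\boldsymbol{x}_k$ alone just make explicit what the paper leaves as ``suitable integrability''.
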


\begin{figure*}[t]
    \centering  
    \begin{tabular}{c@{\hspace{4pt}}c  @{\hspace{7pt}}c@{\hspace{7pt}}  c@{\hspace{4pt}}c}
        DPM-Solver++  &   SteinDiff & & UniPC  &   SteinDiff \\
        \includegraphics[width=0.23\textwidth]{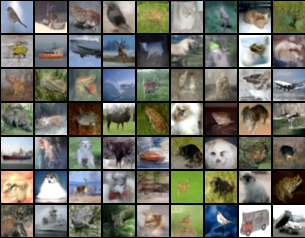} & 
        \includegraphics[width=0.23\textwidth]{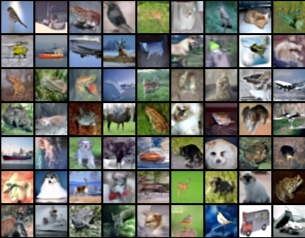}&
        \begin{tikzpicture}[baseline]
            \draw[dashed, thick] (0,0.05) -- (0,3); 
        \end{tikzpicture}
        & 
        \includegraphics[width=0.23\textwidth]{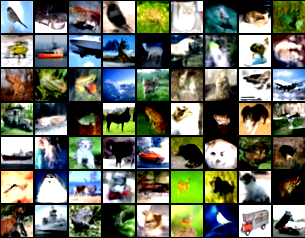} & 
        \includegraphics[width=0.23\textwidth]{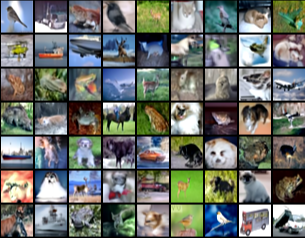}
    \end{tabular}
\caption{ 
SteinDiff addresses  the contractivity trap in few-step inference: at just \emph{3 solver steps} (5 NFE), it improves few-step sampling with DPM-Solver++ and UniPC by mitigating severe artifacts and generating higher-quality samples on CIFAR-10 with the EDM model. 
}
    \label{fig:compareedmcifar10}
\end{figure*}

\begin{figure*}[t]
        \centering
        \subfigure{\includegraphics[width=0.475\linewidth]{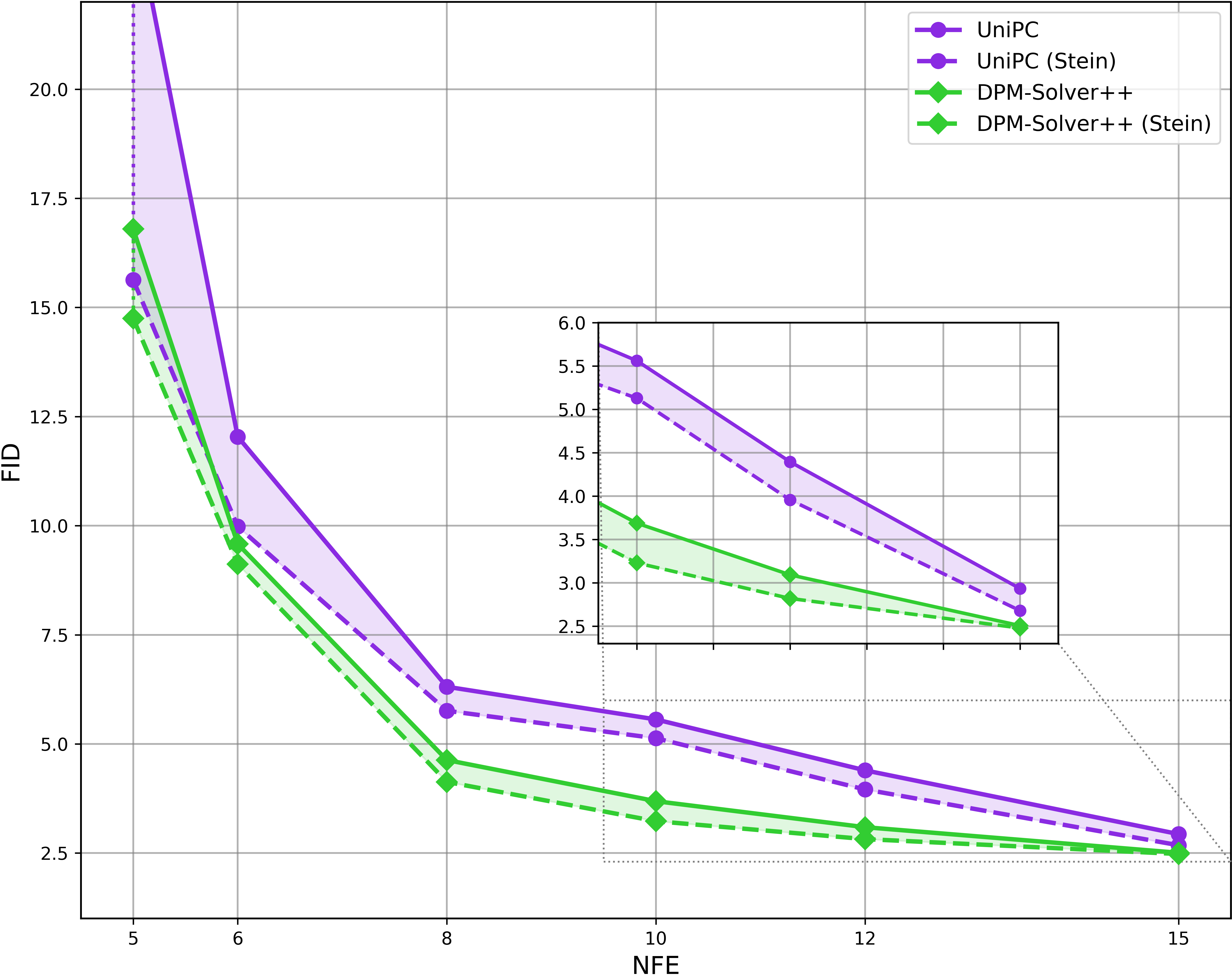}}
        ~~~
        \subfigure{\includegraphics[width=0.475\linewidth]{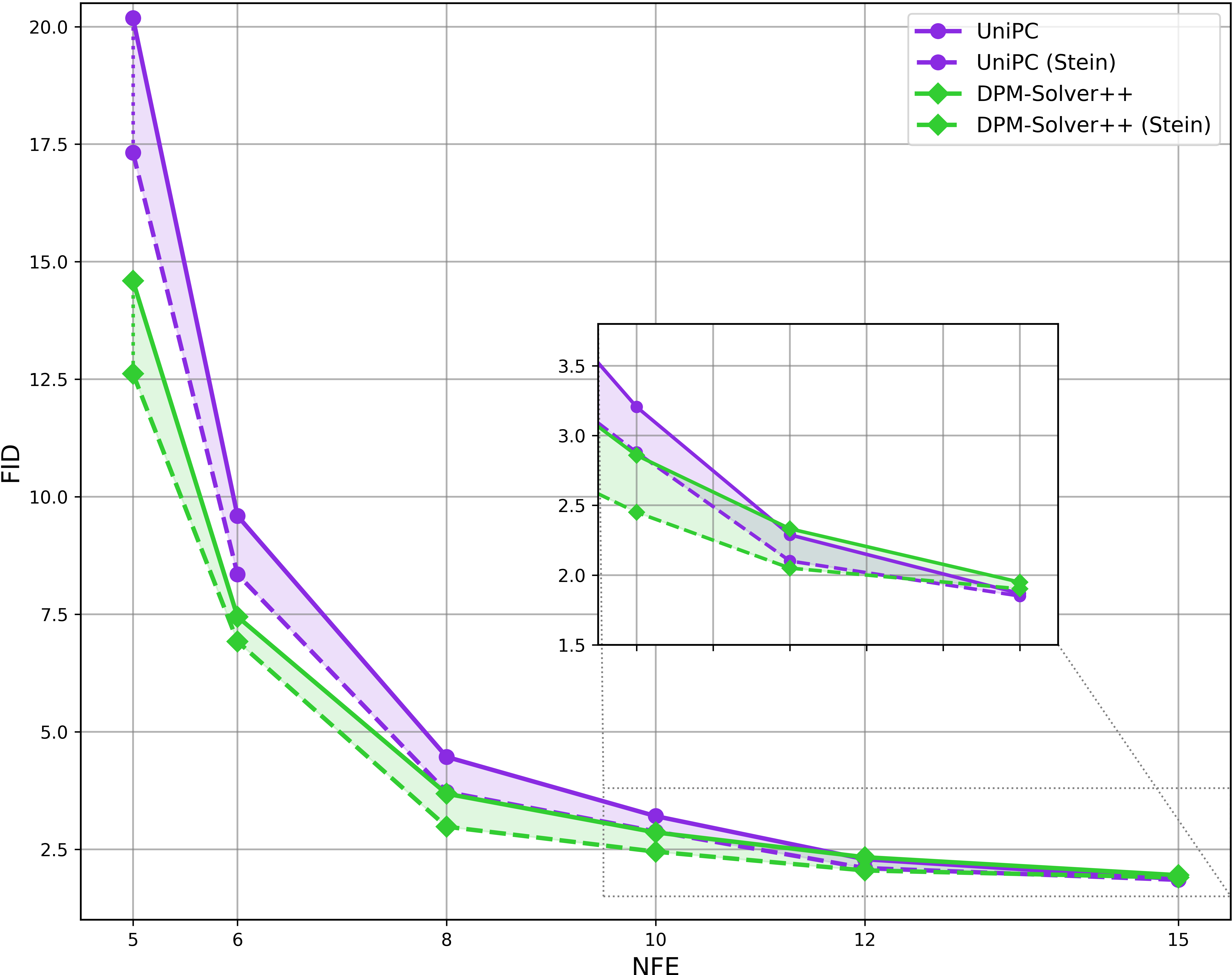}}
\caption{FID $\downarrow$ scores for DPM-Solver++ and UniPC using third-order solvers on ImageNet 64$\times$64 under EDM (left) and logSNR (right) noise schedules. SteinDiff (dashed) consistently improves FID across various NFEs.}
\label{fig:imageNetffhq64}
\end{figure*} 

\begin{figure*}[t]
        \centering
        ~\subfigure{\includegraphics[width=0.475\linewidth]{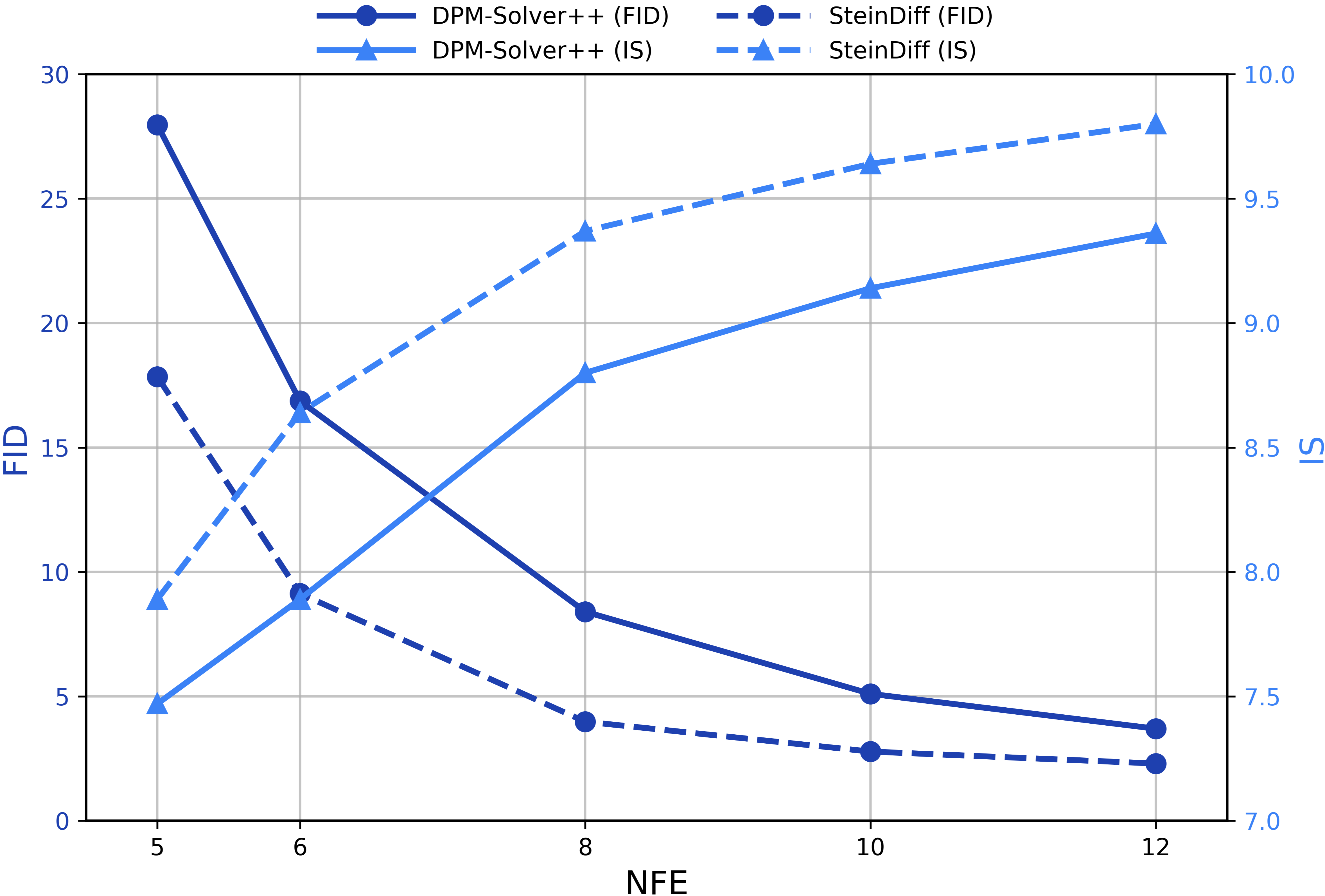}}
        ~ 
        \subfigure{\includegraphics[width=0.475\linewidth]{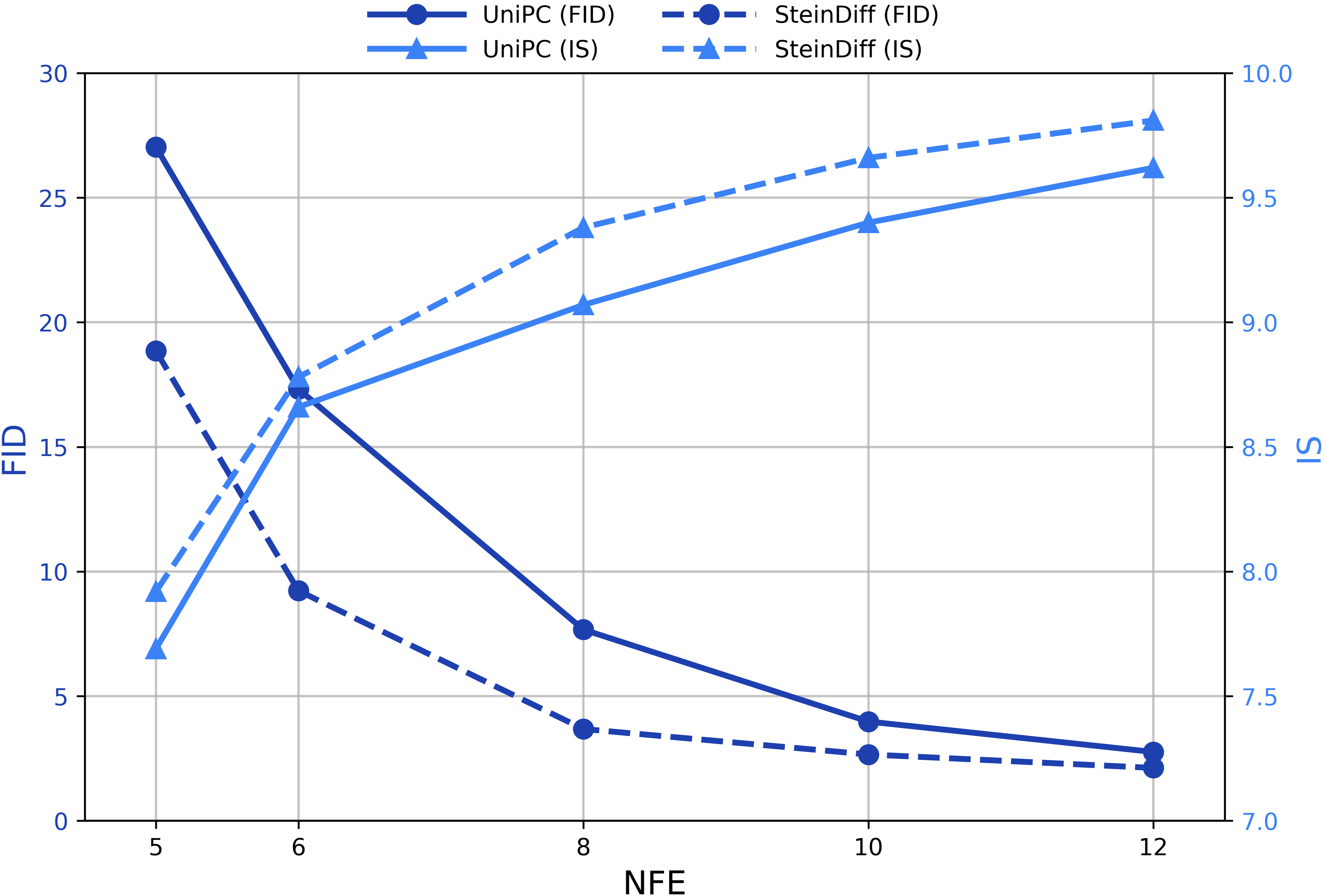}}
\caption{FID $\downarrow$ and IS $\uparrow$ scores vs. NFE for DPM-Solver++ (left) and UniPC (right) with/without SteinDiff on CIFAR-10 (EDM).}
\label{fig:cifar10}
\end{figure*}

\begin{theorem}[Step-wise error decay]
\label{thm:mse_decay}
Let
$
\boldsymbol{u}_k
=
\boldsymbol{x}_k
-
\operatorname{T}_{\boldsymbol{\theta}}(\boldsymbol{x}_k),
\quad
b_k
=
\mathbb{E}
\left[
\left\langle
\boldsymbol{u}_k,
\boldsymbol{x}_k-\boldsymbol{x}^*
\right\rangle
\right],
\quad
c_k
=
\mathbb{E}
\left[
\|\boldsymbol{u}_k\|^2
\right].
$ 
Assume $c_k>0$, and let
$
\gamma_k^*
=
\frac{b_k}{c_k}.
$ 
Then the exact SteinDiff update 
$ 
\boldsymbol{x}_{k-1}^{\mathrm{Stein}}
=
\boldsymbol{x}_k
-
\gamma_k^*
\boldsymbol{u}_k
$ 
satisfies
\begin{equation}
\mathbb{E}
\left[
\left\|
\boldsymbol{x}_{k-1}^{\mathrm{Stein}}
-
\boldsymbol{x}^*
\right\|^2
\right]
=
\mathbb{E}
\left[
\left\|
\boldsymbol{x}_k
-
\boldsymbol{x}^*
\right\|^2
\right]
-
\frac{b_k^2}{c_k}.
\end{equation}
Consequently, if
$ 
E_k
=
\mathbb{E}
\left[
\left\|
\boldsymbol{x}_k
-
\boldsymbol{x}^*
\right\|^2
\right], ~ 
\rho_k
=
\frac{b_k^2}{c_kE_k} 
$  
with $E_k>0$, then  
\begin{equation}
0\le \rho_k\le1 ~~\text{and}~~E_{k-1}^{\mathrm{Stein}}
=
(1-\rho_k)E_k.
\end{equation}
Furthermore, along the trajectory over $N$ steps,
\begin{equation}
E_0^{\mathrm{Stein}}
=
E_N
\prod_{k=1}^{N}
(1-\rho_k).
\end{equation}
Denote $\eta := \min \{\rho_k\}$. Then $\rho_k\ge\eta$, and we have  
\begin{equation}
E_0^{\mathrm{Stein}}
\le
(1-\eta)^N E_N.
\end{equation}
The proof is provided in Appendix~\ref{app:proof_mse_decay}.
\end{theorem}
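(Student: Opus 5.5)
Since $\boldsymbol{x}_{k-1}^{\mathrm{Stein}}=\boldsymbol{x}_k-\gamma\boldsymbol{u}_k$ is exactly the rectified update Eq.~(\ref{rectified_update}), I will expand $J(\gamma)$ as a scalar quadratic. Write $\boldsymbol{e}_k:=\boldsymbol{x}_k-\boldsymbol{x}^*$, so that $\boldsymbol{x}_{k-1}^{\mathrm{Stein}}-\boldsymbol{x}^*=\boldsymbol{e}_k-\gamma\boldsymbol{u}_k$. Expanding the square and taking expectations under the ideal forward coupling gives
\begin{equation*}
J(\gamma)=E_k-2\gamma b_k+\gamma^2 c_k .
\end{equation*}
Here I need the finiteness of $E_k$, $b_k$, $c_k$, which I take as implicit in the hypotheses (it is already needed to define $\gamma_k^*$ in Theorem~\ref{thm:opt_gamma}). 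Substituting $\gamma=\gamma_k^*=b_k/c_k$ gives $E_k-2b_k^2/c_k+b_k^2/c_k=E_k-b_k^2/c_k$. This is the first identity.

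Next I define $\rho_k=b_k^2/(c_kE_k)$. Then $E^{\mathrm{Stein}}_{k-1}=(1-\rho_k)E_k$ is immediate. Nonnegativity $\rho_k\ge0$ holds because $c_k,E_k>0$. For $\rho_k\le1$ I will use Cauchy--Schwarz, first pointwise and then in $L^2$ of the probability space: $b_k^2=(\mathbb{E}\langle\boldsymbol{u}_k,\boldsymbol{e}_k\rangle)^2\le \mathbb{E}\|\boldsymbol{u}_k\|^2\,\mathbb{E}\|\boldsymbol{e}_k\|^2=c_kE_k$. An alternative route is to note that $J(\gamma_k^*)\ge0$ as an expectation of a squared norm. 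Either argument gives $\rho_k\le 1$.

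For the trajectory statement I will iterate the one-step identity by induction on the number of steps. The key issue, and the main obstacle I expect, is how $E_{k-1}$ at the next step relates to $E^{\mathrm{Stein}}_{k-1}$. The recursion only telescopes if, at each step, the state entering step $k-1$ is the Stein output of step $k$. In that case $E_{k-1}$ in the definition of $\rho_{k-1}$ means $E^{\mathrm{Stein}}_{k-1}$, and the quantities $b_{k-1},c_{k-1}$ are computed for the Stein-corrected state. I will state this convention explicitly, interpreting $E_k$ for $k<N$ as the error of the Stein iterate. With this convention I also need $E_k>0$ at each step so that $\rho_k$ is defined. Otherwise I treat the degenerate case $E_k=0$ separately: there the error is already zero and stays zero, so the bounds hold trivially.

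With that convention, $E_0^{\mathrm{Stein}}=E_N\prod_{k=1}^N(1-\rho_k)$ follows by induction. Finally, each factor satisfies $0\le1-\rho_k\le1-\eta$ because $\rho_k\ge\eta=\min_k\rho_k$. Multiplying these inequalities between nonnegative numbers yields $E_0^{\mathrm{Stein}}\le(1-\eta)^NE_N$.

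I note that no contractivity of $\operatorname{T}_{\boldsymbol{\theta}}$ is used anywhere, consistent with the remark preceding the theorem.
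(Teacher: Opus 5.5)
Your proposal is correct and follows essentially the same route as the paper: expand $J_k(\gamma)=E_k-2\gamma b_k+\gamma^2c_k$, substitute $\gamma_k^*=b_k/c_k$, bound $\rho_k\le 1$ by Cauchy--Schwarz, then iterate the one-step identity along the Stein trajectory and bound each factor by $1-\eta$. You are more explicit than the paper, which only says ``applying the one-step relation recursively along the exact SteinDiff trajectory'': you state the convention that $E_k,b_k,c_k$ for $k<N$ are taken with respect to the Stein iterate rather than the ideal forward coupling, you treat the degenerate case $E_k=0$ separately, and you note that the factors $1-\rho_k$ are nonnegative.
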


\begin{corollary}[Vanilla consistency]
\label{cor:vanilla_consistency}
Under the assumptions of Theorem~\ref{thm:mse_decay}, the deviation between the exact SteinDiff update and the vanilla solver candidate satisfies
\begin{equation}
\mathbb{E}
\left[
\left\|
\boldsymbol{x}_{k-1}^{\mathrm{Stein}}
-
\operatorname{T}_{\boldsymbol{\theta}}(\boldsymbol{x}_k)
\right\|^2
\right]
\le
\mathbb{E}
\left[
\left\|
\operatorname{T}_{\boldsymbol{\theta}}(\boldsymbol{x}_k)
-
\boldsymbol{x}^*
\right\|^2
\right].
\end{equation}
Therefore, whenever the vanilla solver candidate becomes accurate in the expected MSE sense, the SteinDiff update becomes asymptotically equivalent to that candidate. The proof is provided in Appendix~\ref{app:proof_vanilla_consistency}.
\end{corollary}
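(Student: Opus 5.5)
We need to prove Corollary~\ref{cor:vanilla_consistency}. The update is $\boldsymbol{x}_{k-1}^{\mathrm{Stein}}=\boldsymbol{x}_k-\gamma_k^*\boldsymbol{u}_k$ and the vanilla candidate is $\operatorname{T}_{\boldsymbol{\theta}}(\boldsymbol{x}_k)=\boldsymbol{x}_k-\boldsymbol{u}_k$. Their difference is therefore $(1-\gamma_k^*)\boldsymbol{u}_k$, with expected squared norm $(1-\gamma_k^*)^2c_k=(c_k-b_k)^2/c_k$.

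The key step is to identify this quantity with the improvement $\Delta J=J(1)-J(\gamma_k^*)$ from Theorem~\ref{thm:steindiff_optimality}. That theorem gives exactly $\Delta J=(b_k-c_k)^2/c_k$. For a self-contained check, I would expand
\[
J(\gamma)=E_k-2\gamma b_k+\gamma^2c_k,
\]
using $\boldsymbol{x}_k-\gamma\boldsymbol{u}_k-\boldsymbol{x}^*$, with $E_k=\mathbb{E}\|\boldsymbol{x}_k-\boldsymbol{x}^*\|^2$. This yields $J(1)-J(b_k/c_k)=c_k-2b_k+b_k^2/c_k=(c_k-b_k)^2/c_k$. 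Hence
\[
\mathbb{E}\|\boldsymbol{x}_{k-1}^{\mathrm{Stein}}-\operatorname{T}_{\boldsymbol{\theta}}(\boldsymbol{x}_k)\|^2 = J(1)-J(\gamma_k^*),
\]
which is a Pythagorean identity: $\boldsymbol{x}_{k-1}^{\mathrm{Stein}}$ is the $L^2$-projection onto the line through $\boldsymbol{x}_k$ in direction $\boldsymbol{u}_k$.

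To conclude, note that $J(\gamma_k^*)\ge 0$ because it is an expected squared norm. Therefore $J(1)-J(\gamma_k^*)\le J(1)=\mathbb{E}\|\operatorname{T}_{\boldsymbol{\theta}}(\boldsymbol{x}_k)-\boldsymbol{x}^*\|^2$, which is the claimed bound. The asymptotic-equivalence statement follows directly: if $J(1)\to 0$, the left side is squeezed to $0$.

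No step is a real obstacle. The only care needed is to keep $\gamma_k^*$ as the deterministic scalar $b_k/c_k$, so that it pulls out of the expectation, and to use the assumption $c_k>0$.
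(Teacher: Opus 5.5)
Your proof is correct. The first step, that the deviation is $(1-\gamma_k^*)\boldsymbol{u}_k$ with expected squared norm $(c_k-b_k)^2/c_k$, is exactly the paper's.

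The routes split at the final inequality. The paper rewrites $c_k-b_k=-\mathbb{E}\langle \boldsymbol{u}_k,\operatorname{T}_{\boldsymbol{\theta}}(\boldsymbol{x}_k)-\boldsymbol{x}^*\rangle$ and applies Cauchy--Schwarz directly, giving $(c_k-b_k)^2\le c_k\,\mathbb{E}\|\operatorname{T}_{\boldsymbol{\theta}}(\boldsymbol{x}_k)-\boldsymbol{x}^*\|^2$. You instead identify the deviation with the improvement $J(1)-J(\gamma_k^*)$ from Theorem~\ref{thm:steindiff_optimality} and use only $J(\gamma_k^*)\ge 0$.

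Underneath, these are the same inequality, since nonnegativity of the minimized quadratic $\min_\gamma\mathbb{E}\|\boldsymbol{x}_k-\gamma\boldsymbol{u}_k-\boldsymbol{x}^*\|^2$ is itself a Cauchy--Schwarz statement. Your framing does buy something: it yields the exact Pythagorean identity
\[
\mathbb{E}\|\operatorname{T}_{\boldsymbol{\theta}}(\boldsymbol{x}_k)-\boldsymbol{x}^*\|^2=\mathbb{E}\|\boldsymbol{x}_{k-1}^{\mathrm{Stein}}-\operatorname{T}_{\boldsymbol{\theta}}(\boldsymbol{x}_k)\|^2+\mathbb{E}\|\boldsymbol{x}_{k-1}^{\mathrm{Stein}}-\boldsymbol{x}^*\|^2 .
\]
So the slack in the corollary is exactly the Stein step's own MSE, and equality holds iff the Stein update is exact in mean square. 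The paper's argument is shorter and does not rely on the $\Delta J$ formula, but it delivers only the inequality.
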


\begin{algorithm}[t]
\caption{A SteinDiff Correction}
\label{SteinDiffpseudocode}
\begin{algorithmic}[1]
\REQUIRE State $\boldsymbol{x}_{t_k}$; Model $\boldsymbol{x}_{\boldsymbol{\theta}}$; Update operator $\operatorname{T}_{\boldsymbol{\theta}}$; Schedule $\{t_k,\alpha_{t_k}, \sigma_{t_k}\}$; Batch size $B$.
\STATE $\boldsymbol{u}_k \leftarrow \boldsymbol{x}_{t_k} - \operatorname{T}_{\boldsymbol{\theta}}(\boldsymbol{x}_{t_k})$ 
\STATE $\hat{s}_{xu} \leftarrow \frac{1}{B}\sum \langle \boldsymbol{u}_k, \boldsymbol{x}_{t_k} \rangle$; \quad $\hat{s}_{uu} \leftarrow \frac{1}{B}\sum \|\boldsymbol{u}_k\|^2$ 
\STATE $\hat{s}_{div} \leftarrow \frac{1}{B} \sum \boldsymbol{v}^\top \nabla_{\boldsymbol{x}} \boldsymbol{u}_k \boldsymbol{v}$, with $\boldsymbol{v} \sim \mathcal{N}(0, \boldsymbol{I})$
\STATE  
$ \hat{\gamma}_k \leftarrow 
\max\left(
\frac{(1 - 1/\alpha_{t_k}) \hat{s}_{xu} + (\sigma_{t_k}^2/\alpha_{t_k}) \hat{s}_{div}}{\hat{s}_{uu}}, 
\gamma_{\min}
\right)$ 
\STATE $\boldsymbol{x}_{t_{k-1}} \leftarrow (1-\hat{\gamma}_k)\boldsymbol{x}_{t_k} + \hat{\gamma}_k \operatorname{T}_{\boldsymbol{\theta}}(\boldsymbol{x}_{t_k})$ 
\STATE \textbf{Return} $\boldsymbol{x}_{t_{k-1}}$
\end{algorithmic}
\end{algorithm}
\subsection{A Step-wise SteinDiff Estimator} 
Motivated by the step-wise MSE-optimal coefficient, we present its practical inference-time estimator.
$\gamma_k^*$ denotes the ideal-coupling coefficient, $\tilde{\gamma}_k$ the coefficient under the discretized sampler, and $\hat{\gamma}_k$ the practical Hutchinson-based estimate (Algorithm~\ref{SteinDiffpseudocode}).
SteinDiff does not require training samples or reference clean data. The correction is computed from the current solver residual, with the Hutchinson estimator approximating only the divergence term. A generation batch can reduce Monte Carlo variance, but is not conceptually required by the correction mechanism. 

For a generation batch of size $B$, the expectations in Eq.~(\ref{practical_gamma}) are approximated by the empirical averages  
\begin{equation}
\hat{s}_{xu} = \frac{1}{B}\sum_{i=1}^B 
\langle \boldsymbol{u}_k^{(i)}, \boldsymbol{x}_k^{(i)} \rangle,
~ 
\hat{s}_{uu} = \frac{1}{B}\sum_{i=1}^B 
\|\boldsymbol{u}_k^{(i)}\|^2.
\end{equation}  
We estimate the divergence via Hutchinson's trace estimator: 
\begin{equation}
\hat{s}_{div} = 
\frac{1}{B}\sum_{i=1}^B
\boldsymbol{v}^{(i)\top}
\nabla_{\boldsymbol{x}}\boldsymbol{u}_k^{(i)}
\boldsymbol{v}^{(i)},
~ 
\boldsymbol{v}^{(i)}\sim\mathcal{N}(\boldsymbol{0},\mathbf{I}).
\end{equation}
Algorithm~\ref{SteinDiffpseudocode} details the implementation.

While $\gamma^*_k$ inherently assumes the exact forward Gaussian coupling, discretized inference inevitably introduces distributional shifts. We characterize the impact of this shift through a conditional perturbation bound. 
\begin{theorem}[Perturbation of the SteinDiff estimator]\label{thm:robustness}
Let $p_k$ denote the marginal induced by the ideal forward coupling used in the derivation of Eq.~(\ref{practical_gamma}), and let $\tilde p_k$ denote the distribution induced by the discretized sampler. 
Let $\gamma_k^*$ and $\tilde{\gamma}_k$ be the corresponding correction coefficients computed under $p_k$ and $\tilde p_k$, respectively. 
Define the score deviation
\begin{equation}
\mathcal{S}(\tilde{p}_k,p_k)
:=
\left(
\mathbb{E}_{\tilde p_k}
\left[
\|
\nabla\log \tilde p_k
-
\nabla\log p_k
\|^2
\right]
\right)^{1/2}.
\end{equation}
Assume that the denominators of the two coefficients are bounded away from zero, and that $\boldsymbol{u}_k$, $\nabla\cdot \boldsymbol{u}_k$, and the functions appearing in Eq.~(\ref{practical_gamma}) are sufficiently regular so that their expectation shifts are controlled by $\mathcal{S}(\tilde p_k,p_k)$.  
Then there exists a finite constant $C_k$, depending on these regularity constants, such that
\begin{equation}
\label{robustness_bound}
|\tilde{\gamma}_k-\gamma_k^*|
\le
C_k\,\mathcal{S}(\tilde p_k,p_k).
\end{equation}
For EDM-style parameterization $(\alpha_k\equiv1)$, the drift-related term in the numerator of Eq.~(\ref{practical_gamma}) vanishes, leaving only the divergence-based term.
The proof is provided in Appendix~\ref{robustness_proof}.
\end{theorem}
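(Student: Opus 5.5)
Write $\gamma = N/D$ with
\[
N(p) = \bigl(1-\tfrac{1}{\alpha_k}\bigr)\mathbb{E}_p\langle \boldsymbol{u}_k,\boldsymbol{x}_k\rangle + \tfrac{\sigma_k^2}{\alpha_k}\mathbb{E}_p[\nabla\cdot\boldsymbol{u}_k],
\qquad
D(p) = \mathbb{E}_p\|\boldsymbol{u}_k\|^2 .
\]
Then $\gamma_k^* = N(p_k)/D(p_k)$ and $\tilde\gamma_k = N(\tilde p_k)/D(\tilde p_k)$, where $\tilde\gamma_k$ means the same closed-form functional of Eq.~(\ref{practical_gamma}) evaluated under $\tilde p_k$. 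The proof then has two parts: an algebraic reduction of the quotient difference to expectation shifts, and a bound of each shift by $\mathcal{S}(\tilde p_k,p_k)$.

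\textbf{Algebraic step.} I will use
\[
\frac{\tilde N}{\tilde D}-\frac{N}{D} = \frac{\tilde N - N}{\tilde D} + N\,\frac{D-\tilde D}{D\tilde D}.
\]
Let $D,\tilde D \ge d_0>0$ (the assumed lower bound on denominators) and $|N|\le M$ (from the regularity assumptions). This gives
\[
|\tilde\gamma_k-\gamma_k^*| \le \frac{1}{d_0}|\tilde N-N| + \frac{M}{d_0^2}|\tilde D-D|.
\]
By linearity, $|\tilde N - N|$ is at most $|1-1/\alpha_k|$ times the shift of $\mathbb{E}\langle\boldsymbol{u}_k,\boldsymbol{x}_k\rangle$, plus $\sigma_k^2/\alpha_k$ times the shift of $\mathbb{E}[\nabla\cdot\boldsymbol{u}_k]$. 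So everything reduces to bounding $|\mathbb{E}_{\tilde p_k}f-\mathbb{E}_{p_k}f|$ for the three test functions
\[
f_1=\langle\boldsymbol{u}_k,\boldsymbol{x}_k\rangle,\qquad f_2=\nabla\cdot\boldsymbol{u}_k,\qquad f_3=\|\boldsymbol{u}_k\|^2 .
\]

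\textbf{Score-controlled shift.} This is the main step. The statement explicitly allows us to assume the shifts are controlled by $\mathcal{S}$, so one option is to take that as a hypothesis with constants $K_i$. To make it meaningful, I would justify it through a Stein-kernel argument. For each $f_i$, solve the Stein equation for $p_k$,
\[
\nabla\cdot \boldsymbol{g}_i + \langle \boldsymbol{g}_i,\nabla\log p_k\rangle = f_i - \mathbb{E}_{p_k}f_i .
\]
Taking the expectation under $\tilde p_k$ and using $\mathbb{E}_{\tilde p_k}[\nabla\cdot\boldsymbol{g}_i+\langle \boldsymbol{g}_i,\nabla\log\tilde p_k\rangle]=0$ (integration by parts, with boundary terms vanishing) gives
\[
\mathbb{E}_{\tilde p_k}f_i-\mathbb{E}_{p_k}f_i = \mathbb{E}_{\tilde p_k}\langle \boldsymbol{g}_i,\nabla\log p_k-\nabla\log\tilde p_k\rangle .
\]
Cauchy--Schwarz then bounds this by $\|\boldsymbol{g}_i\|_{L^2(\tilde p_k)}\,\mathcal{S}(\tilde p_k,p_k)$. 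The regularity assumption is precisely that the Stein solutions $\boldsymbol{g}_i$ exist with finite second moments $K_i$ under $\tilde p_k$. An alternative route with the same conclusion is a log-Sobolev/Fisher-divergence bound combined with Lipschitz test functions.

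The obstacle is that the test functions grow: $f_1$ and $f_3$ are quadratic and $f_2$ involves derivatives of the network. Existence and integrability of $\boldsymbol{g}_i$ therefore cannot be proved in general and must be absorbed into the hypotheses, which the statement permits.

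\textbf{Conclusion.} Combining the two steps gives
\[
C_k = \frac{|1-1/\alpha_k|\,K_1 + (\sigma_k^2/\alpha_k)\,K_2}{d_0} + \frac{M K_3}{d_0^2},
\]
and hence $|\tilde\gamma_k-\gamma_k^*|\le C_k\,\mathcal{S}(\tilde p_k,p_k)$, which is Eq.~(\ref{robustness_bound}). For the EDM remark, setting $\alpha_k\equiv1$ makes $1-1/\alpha_k=0$. The $\langle \boldsymbol{u}_k,\boldsymbol{x}_k\rangle$ term drops out of the numerator, leaving only the divergence term $\sigma_k^2\,\mathbb{E}[\nabla\cdot\boldsymbol{u}_k]$, and the $K_1$ contribution to $C_k$ vanishes.
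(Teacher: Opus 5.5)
Your proof is correct and follows the paper's argument: the paper also writes $\gamma_k^*=N_{p_k}/D_{p_k}$ and $\tilde\gamma_k=N_{\tilde p_k}/D_{\tilde p_k}$ and adds and subtracts $N_{p_k}/D_{\tilde p_k}$, giving $C_k=C_{N,k}/\delta_k+|N_{p_k}|C_{D,k}/\delta_k^2$, which is your constant with $M=|N_{p_k}|$ and $C_{N,k}$ split by linearity into your $K_1$ and $K_2$ terms. The only difference is that the paper simply assumes the bounds $|N_{\tilde p_k}-N_{p_k}|\le C_{N,k}\mathcal{S}(\tilde p_k,p_k)$ and $|D_{\tilde p_k}-D_{p_k}|\le C_{D,k}\mathcal{S}(\tilde p_k,p_k)$, whereas your Stein-equation argument (solving for $p_k$ and integrating against $\tilde p_k$, which correctly yields the $L^2(\tilde p_k)$ norm used in $\mathcal{S}$) supplies a concrete sufficient condition for them.
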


Theorem~\ref{thm:robustness} 
provides a conditional perturbation guarantee showing that 
the correction coefficient under the discretized-sampler distribution closely approximates its ideal-coupling counterpart, provided that the induced distribution exhibits a sufficiently small score deviation from the ideal marginal.
In the EDM framework, although the drift-related component in Eq.~(\ref{practical_gamma}) vanishes, the remaining divergence-based term can still dominate the perturbation bound. 

\begin{theorem}[Estimator perturbation around the MSE-optimal coefficient]
\label{thm:stability_comprehensive}
Let
\begin{equation}
J_k(\gamma)
=
\mathbb{E}_{p_k}
\left[
\|
(1-\gamma)\boldsymbol{x}_k
+
\gamma\operatorname{T}_{\boldsymbol{\theta}}(\boldsymbol{x}_k)
-
\boldsymbol{x}^*
\|^2
\right],
\end{equation}
and let $\gamma_k^*$ be the MSE-optimal coefficient from Theorem~\ref{thm:opt_gamma}. 
For any coefficient $\bar{\gamma}_k$, we have
\begin{equation}
J_k(\bar{\gamma}_k)
=
J_k(\gamma_k^*)
+
c_k(\bar{\gamma}_k-\gamma_k^*)^2,
~~ 
c_k=
\mathbb{E}_{p_k}\|\boldsymbol{u}_k\|^2.
\end{equation}
Consequently, the correction with $\bar{\gamma}_k$ improves over the vanilla update whenever
\begin{equation}
c_k(\bar{\gamma}_k-\gamma_k^*)^2
\le
J_k(1)-J_k(\gamma_k^*).
\end{equation}
The
proof is provided in Appendix \ref{app:proof_estimator_perturbation}.
\end{theorem}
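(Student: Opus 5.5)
The plan is to write $J_k$ as an explicit quadratic in $\gamma$ and complete the square around $\gamma_k^*$. Since $(1-\gamma)\boldsymbol{x}_k+\gamma\operatorname{T}_{\boldsymbol{\theta}}(\boldsymbol{x}_k) = \boldsymbol{x}_k-\gamma\boldsymbol{u}_k$, set $\boldsymbol{e}_k:=\boldsymbol{x}_k-\boldsymbol{x}^*$. Expanding the square inside the expectation gives
\begin{equation*}
J_k(\gamma)=\mathbb{E}_{p_k}\|\boldsymbol{e}_k\|^2-2\gamma b_k+\gamma^2 c_k,
\end{equation*}
with $b_k=\mathbb{E}_{p_k}\langle\boldsymbol{u}_k,\boldsymbol{e}_k\rangle$ and $c_k=\mathbb{E}_{p_k}\|\boldsymbol{u}_k\|^2$, exactly as in Theorems~\ref{thm:steindiff_optimality} and~\ref{thm:mse_decay}. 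Finiteness of these moments is the same standing integrability assumption already used in Theorem~\ref{thm:opt_gamma}, and $c_k>0$ is assumed there.

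Next I will use $\gamma_k^*=b_k/c_k$ from Theorem~\ref{thm:opt_gamma} to rewrite $-2\gamma b_k=-2\gamma c_k\gamma_k^*$. This gives
\begin{equation*}
J_k(\gamma)=\mathbb{E}\|\boldsymbol{e}_k\|^2-c_k(\gamma_k^*)^2+c_k(\gamma-\gamma_k^*)^2 .
\end{equation*}
Evaluating at $\gamma=\gamma_k^*$ identifies the constant as $J_k(\gamma_k^*)$. Substituting $\gamma=\bar{\gamma}_k$ then yields the claimed identity. It holds for any deterministic $\bar{\gamma}_k$, and in particular for $\tilde{\gamma}_k$ or for a fixed realization of $\hat{\gamma}_k$ treated as a constant, since the quadratic structure requires $\gamma$ not to depend on the sample.

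For the consequence, apply the identity with $\gamma=1$ as well as $\gamma=\bar{\gamma}_k$. The vanilla update is $J_k(1)$, so $J_k(\bar{\gamma}_k)\le J_k(1)$ is equivalent to $J_k(\gamma_k^*)+c_k(\bar{\gamma}_k-\gamma_k^*)^2\le J_k(1)$, which is the stated condition after rearranging. As a consistency check, the same identity gives $J_k(1)-J_k(\gamma_k^*)=c_k(1-\gamma_k^*)^2=(b_k-c_k)^2/c_k$, matching Theorem~\ref{thm:steindiff_optimality}. The condition can therefore be restated as $|\bar{\gamma}_k-\gamma_k^*|\le|1-\gamma_k^*|$. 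Combined with Theorem~\ref{thm:robustness}, this would show that a sufficiently small score deviation guarantees improvement, though that remark is optional.

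There is no real obstacle here; the argument is pure algebra. The only point needing care is that $\bar{\gamma}_k$ must be treated as non-random, or independent of the expectation variable, when pulled out of the expectation. I will state this explicitly so that the identity is not misapplied to a batch-dependent $\hat{\gamma}_k$ without conditioning.
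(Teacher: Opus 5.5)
Your proposal is correct and follows essentially the same route as the paper's proof: write the update as $\boldsymbol{x}_k-\gamma\boldsymbol{u}_k$, expand $J_k$ as the quadratic $a_k-2b_k\gamma+c_k\gamma^2$, complete the square around $\gamma_k^*=b_k/c_k$, and rearrange $J_k(\bar{\gamma}_k)\le J_k(1)$ to get the stated condition. Your caveat that $\bar{\gamma}_k$ must be a fixed (or conditioned-on) coefficient also matches the remark the paper makes at the start of its proof.
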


\begin{corollary}[Controlled degradation under distribution shift]\label{cor:controlled_degradation}
Under the assumptions of Theorem~\ref{thm:robustness}, the excess error caused by using $\tilde\gamma_k$ instead of $\gamma_k^*$ is controlled by
\begin{equation}
J_k(\tilde\gamma_k)-J_k(\gamma_k^*)
\le
c_k C_k^2 \mathcal{S}(\tilde p_k,p_k)^2.
\end{equation}
Thus, when the score deviation is small relative to the optimality gap, the distribution-shifted correction preserves the step-wise improvement. Additional finite-batch, Hutchinson, and clipping errors can be incorporated into the total perturbation $|\hat{\gamma}_k-\gamma_k^*|$.  The
proof is provided in Appendix \ref{app:proof_controlled_degradation}. 
\end{corollary}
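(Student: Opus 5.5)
The plan is to combine the exact quadratic decomposition of Theorem~\ref{thm:stability_comprehensive} with the perturbation bound of Theorem~\ref{thm:robustness}. No new analytic estimate should be needed; the corollary should be a two-line composition of these two earlier results.

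The first step is to instantiate Theorem~\ref{thm:stability_comprehensive} at the coefficient $\bar\gamma_k=\tilde\gamma_k$. Since $J_k$ is a quadratic in $\gamma$ with leading coefficient $c_k=\mathbb{E}_{p_k}\|\boldsymbol{u}_k\|^2$ and minimizer $\gamma_k^*$, this gives the identity
\[
J_k(\tilde\gamma_k)-J_k(\gamma_k^*)=c_k(\tilde\gamma_k-\gamma_k^*)^2 .
\]
Here $J_k$ is evaluated under the ideal marginal $p_k$, so $c_k$ is the ideal-coupling residual energy. $\tilde\gamma_k$ enters only as a scalar plugged in, so no distributional issue arises at this stage.

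The second step is to square the bound~(\ref{robustness_bound}), which holds under the stated regularity and non-degenerate-denominator assumptions. This yields $(\tilde\gamma_k-\gamma_k^*)^2\le C_k^2\,\mathcal{S}(\tilde p_k,p_k)^2$. Because $c_k>0$ (it is nonnegative in general, and positive by assumption), we may multiply through by $c_k$. Substituting into the identity then gives the claimed bound $J_k(\tilde\gamma_k)-J_k(\gamma_k^*)\le c_kC_k^2\mathcal{S}(\tilde p_k,p_k)^2$.

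For the interpretive claim, I would combine this with the improvement criterion in Theorem~\ref{thm:stability_comprehensive}. If $c_kC_k^2\mathcal{S}^2\le J_k(1)-J_k(\gamma_k^*)=(b_k-c_k)^2/c_k$ (Theorem~\ref{thm:steindiff_optimality}), then $J_k(\tilde\gamma_k)\le J_k(1)$, so the step-wise improvement is preserved. The remark about $\hat\gamma_k$ follows the same pattern. One applies the triangle inequality $|\hat\gamma_k-\gamma_k^*|\le|\hat\gamma_k-\tilde\gamma_k|+|\tilde\gamma_k-\gamma_k^*|$ and reuses the quadratic identity, with the first term absorbing the finite-batch, Hutchinson, and clipping errors. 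I would present this remark only qualitatively.

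The only step requiring care is confirming that the $c_k$ appearing in the quadratic identity is the same quantity as in the corollary. Both are defined under $p_k$, so this holds. The substantive difficulty lives entirely in Theorem~\ref{thm:robustness}, which we are allowed to assume, so I expect no real obstacle here.
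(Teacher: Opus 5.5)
Your proposal is correct and matches the paper's proof essentially line for line. Like the paper, you set $\bar\gamma_k=\tilde\gamma_k$ in the quadratic identity of Theorem~\ref{thm:stability_comprehensive}, square the bound of Theorem~\ref{thm:robustness}, multiply by $c_k$, and conclude $J_k(\tilde\gamma_k)\le J_k(1)$ whenever $c_kC_k^2\mathcal{S}(\tilde p_k,p_k)^2\le J_k(1)-J_k(\gamma_k^*)$; your extras (writing this gap as $(b_k-c_k)^2/c_k$, and the triangle-inequality treatment of $\hat\gamma_k$) are consistent with the paper but not spelled out there.
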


\begin{remark}[Addressing the contractivity trap]\label{rem:contractivity_resolution}
The exact coefficient $\gamma_k^*$ provides step-wise expected MSE improvement without requiring the solver update $\operatorname{T}_{\boldsymbol{\theta}}$ to be pointwise contractive. 
The practical estimator preserves this improvement when the total perturbation $|\hat{\gamma}_k-\gamma_k^*|$ satisfies the gap condition in Theorem~\ref{thm:stability_comprehensive}. 
Thus, SteinDiff addresses the contraction-certificate failure mode by optimizing a step-wise error objective rather than enforcing a Lipschitz constraint on the denoiser. 
It does not require extra solver NFEs, but it introduces a VJP-based divergence-estimation cost.
\end{remark}

\subsection{A Stein Perspective on EDM Parameterizations}\label{edm_analysis} 

The Stein framework also provides a useful lens for interpreting existing diffusion parameterizations. In particular, the step-wise optimal correction coefficient $\gamma_k^*$ reveals how different parameterizations distribute the correction burden between global signal scaling and local residual geometry. This perspective is especially relevant to EDM-style formulations~\citep{karras2022elucidating,karras2024analyzing,karras2024guiding}, which have been empirically observed to be robust under carefully designed noise schedules and preconditioning choices. 

Specifically, the numerator of $\gamma_k^*$ decomposes into two distinct components: a drift-related term, $(1-\frac{1}{\alpha_k})\mathbb{E}\langle \boldsymbol{u}_k,\boldsymbol{x}_k\rangle$, which reflects global signal-scaling effects, and a divergence-based geometric term, $\frac{\sigma_k^2}{\alpha_k}\mathbb{E}[\nabla\cdot\boldsymbol{u}_k]$, which captures the local residual geometry.  In EDM-style parameterizations, the signal scaling is normalized by taking $\alpha_k\equiv 1$. Consequently, the signal-scaling component vanishes, and the correction coefficient reduces to the purely geometric form $\frac{\sigma_k^2\mathbb{E}[\nabla\cdot\boldsymbol{u}_k]}{\mathbb{E}\|\boldsymbol{u}_k\|^2}$.

This simplification suggests that EDM-style parameterizations decouple step-wise trajectory correction from global signal-scaling dynamics, allowing the correction to depend primarily on the local residual geometry. From this perspective, the empirical robustness of EDM-style samplers is consistent with a geometric preference: large-step inference benefits when the update direction is governed less by global rescaling effects and more by local properties of the residual vector field.

Importantly, this observation should not be interpreted as an unconditional stability guarantee. The remaining divergence term, the quality of the learned denoiser, and stochastic errors from practical trace estimation still affect the behavior of the sampler. Rather, the Stein perspective provides an explanatory insight: EDM-style normalization suppresses a drift-related component in the optimal correction coefficient, while SteinDiff explicitly estimates and applies the remaining geometry-aware correction. This alignment suggests that, especially in aggressive few-step regimes, stabilizing local residual geometry can be more beneficial than relying solely on finer discretization or schedule refinement.  

\section{Experiments}
In our experiments, we use the well-established Frechet Inception Distance (FID)  and Inception
Score (IS) \citep{heusel2017gans,salimans2016improved} metric to measure the quality of the generated images. Furthermore, as the FID score often unfairly favors the models trained with GAN losses and penalizes the diffusion models, we consider an additional metrics of FD-DINOv2 \citep{stein2023exposing}, which replaces the InceptionV3 \citep{szegedy2016rethinking} encoder of FID by DINOv2 \citep{oquab2024dinov} to better align with human perception. Furthermore, we evaluate the efficiency of our regularized inference scheme using metrics such as the number of solver calls (``Steps") and the Number of Function Evaluations (NFE); lower number metrics indicate higher inference efficiency.

\begin{figure*}[t]
\centering
 
\newcommand{\imgw}{0.113\linewidth} 
 
\newcommand{\hhs}{\hspace{-0.0001em}}
 
\setlength{\tabcolsep}{1pt}  
\begin{tabular}{c c @{\hspace{4mm}} c}
 
     & \small  5 NFE  & \small  20 NFE  \\[-0.25pt]
 
     \rotatebox{90}{\scriptsize ~DPM-Solver++}~ & 
 
     \includegraphics[width=\imgw]{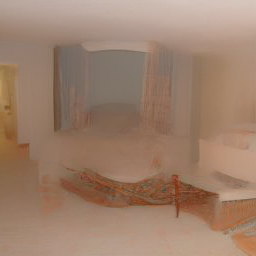}\hhs
     \includegraphics[width=\imgw]{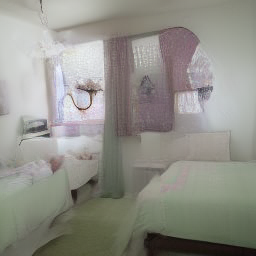}\hhs
     \includegraphics[width=\imgw]{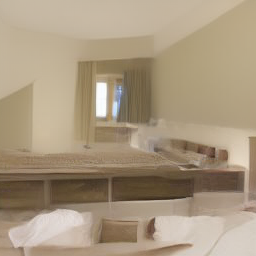}\hhs
     \includegraphics[width=\imgw]{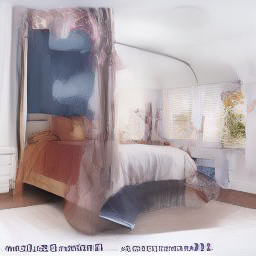} 
     & 
 
     \includegraphics[width=\imgw]{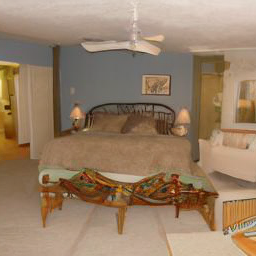}\hhs
     \includegraphics[width=\imgw]{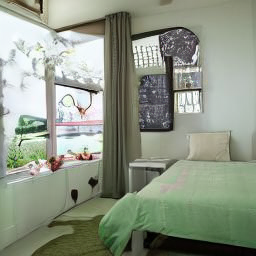}\hhs
     \includegraphics[width=\imgw]{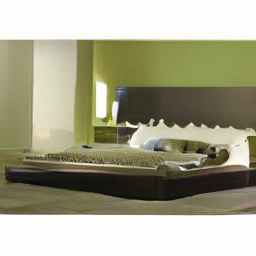}\hhs
     \includegraphics[width=\imgw]{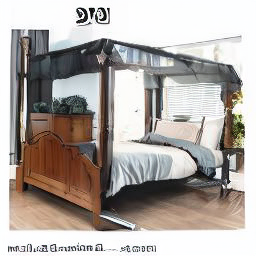} 
     \\[-2pt]  
 
     \rotatebox{90}{\scriptsize ~~~~ SteinDiff}~ & 
 
     \includegraphics[width=\imgw]{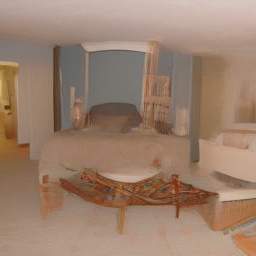}\hhs
     \includegraphics[width=\imgw]{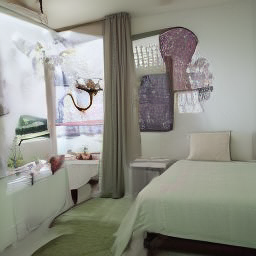}\hhs
     \includegraphics[width=\imgw]{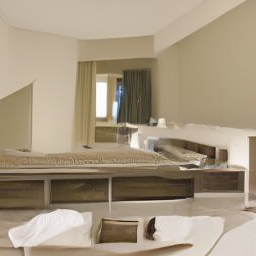}\hhs
     \includegraphics[width=\imgw]{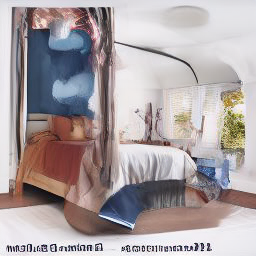} 
     & 
 
     \includegraphics[width=\imgw]{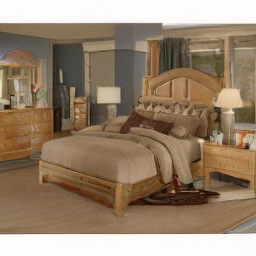}\hhs
     \includegraphics[width=\imgw]{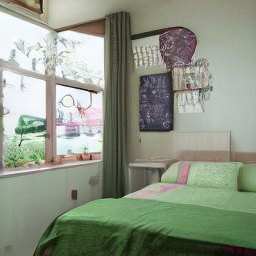}\hhs
     \includegraphics[width=\imgw]{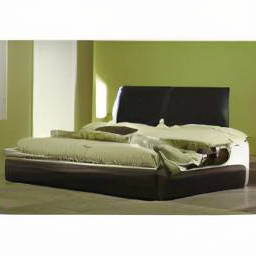}\hhs
     \includegraphics[width=\imgw]{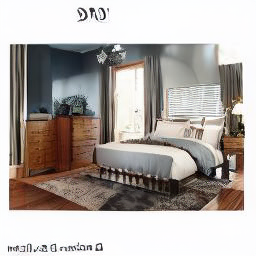} 
     \\
\end{tabular}
\caption{ Visual comparison on 256$\times$256 LSUN-Bedrooms: DPM-Solver++ (top) falls into the contractivity trap, while SteinDiff (bottom) overcomes it, leveraging the underlying geometric structure for efficient inference and improved quality across varying large-steps.
}
\label{fig:lsun_visual}
\end{figure*}

\begin{table*}[h]
\caption{Performance comparison on 256$\times$256 LSUN-Bedrooms  using a Latent Diffusion model. We evaluate FID scores against DPM-Solver++ across various NFEs, where lower scores indicate better image quality. The results demonstrate that our step-wise stabilization method (SteinDiff) significantly outperforms the baseline variants at all tested NFEs.
}
\label{tab:ldm_lsun_beds}
\centering
\begin{tabular}{lcccccccc}
\hline
\multirow{2}{*}{Method} & \multirow{2}{*}{Model} & \multicolumn{7}{c}{NFE} \\
\cline{3-9}
& & 5 & 6 & 8 & 10 & 12 & 15 & 20 \\
\hline
DPM-Solver++ (2m) & \multirow{3}{*}{\parbox{3cm}{\centering Latent Diffusion, \\ LSUN\_beds-256}} & 21.29 & 10.97 & 5.13 & 3.88 & 3.52 & 3.34 & 3.25 \\
DPM-Solver++ (3m) & & 18.61 & 8.52 & 4.15 & 3.61 & 3.43 & 3.28 & 3.17 \\
 SteinDiff (SC) & & \textbf{7.64} & \textbf{4.71} & \textbf{3.72} & \textbf{3.38} & \textbf{3.01} & \textbf{2.86} & \textbf{2.77} \\
\hline
\end{tabular}
\end{table*}

\emph{Variants.}    
In practice, SteinDiff can optionally incorporate a self-consistency correction mechanism to further mitigate discretization errors by leveraging look-ahead trajectory information. Specific details of this variant are provided in Appendix \ref{sec:self_consistency}.

\emph{Results.}   
Our empirical results support the practical relevance of the \emph{contractivity trap} perspective and show that SteinDiff consistently improves efficient ODE sampling. On CIFAR-10, SteinDiff mitigates severe artifacts at 5 NFE (Figure \ref{fig:compareedmcifar10}) and improves image quality across different step settings,  demonstrating its effectiveness even under aggressive inference budgets. As shown in Figure \ref{fig:cifar10}, our method achieves better FID and IS metrics than the baseline, indicating improved robustness to step-size variations.

This trend is further observed on  ImageNet 64$\times$64 dataset, where SteinDiff provides substantial gains across different solvers, including DPM-Solver++, UniPC, and Heun. Notably, these improvements are observed under both EDM and logSNR schedules, with FID reductions of up to \textbf{45.8\%} (Table \ref{tab:sampler_comparison_with_improv}). This consistency suggests that SteinDiff is not tied to a specific solver or noise schedule.

We further evaluate the compatibility of SteinDiff with latent-space diffusion models. On the LSUN Bedrooms 256$\times$256 dataset, SteinDiff achieves competitive performance under the evaluated setting (Table \ref{tab:ldm_lsun_beds}). As visualized in Figure \ref{fig:lsun_visual}, the baseline DPM-Solver++ exhibits severe structural artifacts at 5 NFE, whereas SteinDiff preserves more coherent geometric structures. This qualitative result is consistent with the local expansion behavior illustrated in Figure \ref{fig:lipschitzvarying}. Overall, these results suggest that SteinDiff provides an effective training-free stabilization mechanism for efficient generative inference across different datasets, solvers, and model families. Additional experimental details are provided in Appendix \ref{experimen}.
 
\section*{Conclusion}
In this work, we identify and formalize the \emph{contractivity trap}—a fundamental stability problem in the efficient ODE solving of diffusion models. To overcome this limitation at inference time, we introduce \emph{SteinDiff}, a training-free, plug-and-play stabilization framework. By leveraging Stein’s identity to convert the clean-target alignment term into a reference-free divergence correction, SteinDiff provides a reference-free geometric correction that mitigates severe structural collapse under aggressive step sizes. Crucially, this mechanism avoids additional solver NFEs and model retraining, with the extra cost mainly arising from parallelizable VJP-based divergence estimation.  Furthermore, our Stein-based perspective theoretically rationalizes the empirical success of EDM parameterizations, offering a principled theoretical lens to guide the co-design of future generative architectures and their efficient inference strategies.

\section*{Limitations and Future Work}  

While SteinDiff provides step-wise stabilization for efficient ODE solving, its performance remains bounded by the capacity of the pre-trained model and the variance introduced by Hutchinson-based divergence estimation. Its Monte Carlo error may slightly affect the correction coefficient, especially in small-batch or high-dimensional settings. Furthermore, the contractivity trap may become more pronounced in high-dimensional continuous spaces, where local geometric deviations can accumulate more severely under aggressive few-step inference. An important future direction is to extend this training-free stabilization framework to large-scale video generation models and investigate whether Stein-guided correction can mitigate high-frequency geometric drift during few-step inference.

\section*{Acknowledgements}
This work was supported in part by grants from National
Natural Science Foundation of China (52539005), the China
Scholarship Council (202306150167), the fundamental research program of Guangdong, China (2023A1515011281),
Guangdong Basic and Applied Basic Research Foundation
(24202107190000687), Foshan Science and Technology
Research Project (2220001018608).

\section*{Impact Statement}
By enabling efficient, training-free inference, SteinDiff democratizes access to high-fidelity generative models and reduces their carbon footprint. However, because our geometric correction does not alter the fundamental data distribution, it cannot rectify inherent training data biases or mitigate misuse risks like deepfakes.


\bibliography{example_paper}
\bibliographystyle{icml2026}

\newpage
\appendix
\onecolumn



\section{Notation and Background for Diffusion Models}
\label{app:notation}
Denoising diffusion probabilistic models (DDPMs) define a forward process that adds Gaussian noise to data according to a predefined variance schedule $\{\beta_t\}_{t=1}^N$.  Although originally formulated in the discrete-time setting, this transition process admits a natural continuous-time generalization \citep{kingma2021variational}. In both cases, the conditional distribution of a noisy sample $\boldsymbol{x}_t$ given the clean sample $\boldsymbol{x}^*$ takes the form
\begin{equation}\label{transitionkernel}
q(\boldsymbol{x}_t|\boldsymbol{x}^*) := \mathcal{N}(\boldsymbol{x}_t; \alpha_t \boldsymbol{x}^*, \sigma_t^2\mathbf{I}),
\end{equation}
where $\alpha_t$ and $\sigma_t$ are deterministic functions of time controlling the signal and noise scales, respectively. In the discrete DDPM formulation, they are related to the variance schedule by $\alpha_t^2 = \prod_{s=1}^t (1-\beta_s)$ and $\sigma_t^2 = 1 - \alpha_t^2$. For discretized inference, we denote $\alpha_k := \alpha_{t_k}$ and $\sigma_k := \sigma_{t_k}$ for a chosen timestep sequence ${t_k}$.
 
To reverse the forward noising dynamics, a neural network $\boldsymbol{\epsilon} \boldsymbol{\theta}(\boldsymbol{x}_t, t)$ is trained to predict the noise $\boldsymbol{\epsilon}$ added at each timestep. 
The training objective is simplified to minimizing the mean squared error (MSE):
\begin{equation}\label{trainingloss}
\mathcal{L} = \mathbb{E}_{t,\boldsymbol{x}^*,\boldsymbol{\epsilon}}\left\|\boldsymbol{\epsilon} - \boldsymbol{\epsilon}_{\theta}(\alpha_t \boldsymbol{x}^* + \sigma_t\boldsymbol{\epsilon}, t)\right \|^2,
\end{equation}
where $t$ is uniformly sampled from $\{1, \dots, N\}$, $\boldsymbol{x}^* \sim q(\boldsymbol{x}^*)$ represents a sample from the data distribution, and $\boldsymbol{\epsilon} \sim \mathcal{N}(\mathbf{0}, \mathbf{I})$ is standard Gaussian noise. 
Samples are generated by iteratively applying the learned reverse transition $p_{\theta}(\boldsymbol{x}_{t-1}|\boldsymbol{x}_t)$.

This discrete formulation can be unified with continuous-time diffusion models via stochastic differential equations (SDEs) \citep{song2021score}.  Formally, a unified forward process for different schedules (including VP, VE, and sub-VP schedules) is formulated as:
\begin{equation}\label{sde}
\mathrm{d} \boldsymbol{x}_t=f(t) \boldsymbol{x}_t \mathrm{~d} t+g(t) \mathrm{d} \boldsymbol{w}, \quad \boldsymbol{x}^* \sim q_0\left(\boldsymbol{x}^*\right),
\end{equation}
where the drift and diffusion coefficients are linked to the noise schedule through $f(t):=\frac{\mathrm{d} \log \alpha_t}{\mathrm{~d} t}$ and $g^2(t):=\frac{\mathrm{d} \sigma_t^2}{\mathrm{~d} t}-2 \frac{\mathrm{d} \log \alpha_t}{\mathrm{~d} t} \sigma_t^2$ \citep{kingma2021variational}. 
The corresponding reverse-time SDEs of these schedules share a unified form: 
\begin{equation}\label{songsde}
    \mathrm{d} \boldsymbol{x} = [f(t)\boldsymbol{x} - g(t)^2 \nabla_{\boldsymbol{x}} \log p_t(\boldsymbol{x})]\mathrm{d} t + g(t)\mathrm{d} \bar{\boldsymbol{w}},
\end{equation}
where $\bar{\boldsymbol{w}}$ is a standard Wiener process in the reverse-time direction. Its deterministic counterpart is the probability flow (PF) ordinary differential equation (ODE), which shares the same marginal distributions as the SDE. It is given by: 
\begin{equation}\label{songode}
\mathrm{d} \boldsymbol{x}=\left(f(t)\boldsymbol{x} - \frac{1}{2} g(t)^2 \nabla_{\boldsymbol{x}} \log p_t(\boldsymbol{x})\right)\mathrm{d}t.
\end{equation} 
Under Gaussian modeling assumptions, the score function $\nabla_{\boldsymbol{x}} \log p_t(\boldsymbol{x})$ and the noise predictor $\boldsymbol{\epsilon} \boldsymbol{\theta}$ are linked by the relation $\boldsymbol{\epsilon} \boldsymbol{\theta}\left(\boldsymbol{x}_t, t\right) = -\sigma_t \nabla_{\boldsymbol{x}} \log p_t\left(\boldsymbol{x}_t\right)$ \citep{song2021score,karras2022elucidating}. 
Substituting this relation  into Eq.~(\ref{songode}) leads to the diffusion ODE form used in our main text (Preliminaries \ref{preliminaries}):
\begin{equation}\label{adode}
\frac{\mathrm{d}\boldsymbol{x}}{\mathrm{d}t} = f(t)\boldsymbol{x}_t + \frac{g^2(t)}{2\sigma_t}\boldsymbol{\epsilon}_{\boldsymbol{\theta}}(\boldsymbol{x}_t, t).
\end{equation}

The denoising process can be parameterized by predicting the clean data $\boldsymbol{x}^*$ from a noisy state $\boldsymbol{x}_t$ with efficient sampling advantages \cite{lu2025dpm,li2025evodiff}:
\begin{equation}\label{datapara}
    \boldsymbol{x}_{\boldsymbol{\theta}}(\boldsymbol{x}_t, t) = \frac{\boldsymbol{x}_t - \sigma_t \boldsymbol{\epsilon}_{\boldsymbol{\theta}}(\boldsymbol{x}_t, t)}{\alpha_t}.
\end{equation}
Substituting this parameterization into the diffusion ODE shown in Eq. (\ref{dode}), one has 
\begin{equation}\label{doded}
\frac{\mathrm{d} \boldsymbol{x} }{\mathrm{~d} t}=\left(f(t)+\frac{g^2(t)}{2 \sigma_t^2} \right) \boldsymbol{x}_t - \alpha_t\frac{g^2(t)}{2 \sigma_t^2} \boldsymbol{x}_ {\boldsymbol {\theta}}\left(\boldsymbol{x}_t, t\right).
\end{equation} 
In practice, one can tailor the inference algorithms for DMs by numerically solving this diffusion ODE using discretization techniques. For any time interval $[t,s]$, the details are provided in Appendix \ref{IterativeOperator}, we can express the discretized mapping of this ODE from $\boldsymbol{x}_s$ to $\boldsymbol{x}_t$ as the operator  $\operatorname{T}_ {\boldsymbol {\theta}}(\boldsymbol{x}_s)$: 
\begin{equation}\label{fixeddata}
     \boldsymbol{x}_t  =\operatorname{T}_ {\boldsymbol {\theta}}(\boldsymbol{x}_s):= \frac{\sigma_t}{\sigma_s}\boldsymbol{x}_s+
   \sigma_t \operatorname{F}_ {\boldsymbol {\theta}}(\boldsymbol{x}_s),
\end{equation}
where $\operatorname{F}_ {\boldsymbol {\theta}}(\boldsymbol{x}_s) := \int_{\kappa(s)}^{\kappa(t)}
\boldsymbol{x}_ {\boldsymbol {\theta}}\left(\boldsymbol{x}_{\phi(\tau)}, \phi(\tau)\right)\mathrm{d} \tau$ accounts for the model's contribution,  $\kappa(t):=\frac{\alpha_t}{\sigma_t}$ and its square represents the signal-to-noise ratio (SNR), and $\phi\left( \kappa(t) \right):=t$ denotes its inverse function.

\section{Further Details for the Iterative Operator (\ref{fixeddata})}\label{IterativeOperator}
By applying the variation-of-constants formula to Eq. (\ref{doded}), we obtain:
\begin{equation}\label{avoc}
\boldsymbol{x}_t=e^{f_1(t)} \left(-\int_s^t e^{-f_1(r)}
\frac{\alpha_r g^2(r)}{2 \sigma_r^2}\boldsymbol{x}_ {\boldsymbol {\theta}}\left(\boldsymbol{x}_r, r\right)\mathrm{d} r+\boldsymbol{x}_s\right)
\end{equation}
where $f_1(r):=\int_s^r f(z)+\frac{g^2(z)}{2 \sigma_z^2}  \mathrm{d} z$, $f(t)=\frac{\mathrm{d} \log \alpha_t}{\mathrm{~d} t}$ and $g^2(t)=\frac{\mathrm{d} \sigma_t^2}{\mathrm{~d} t}-2 \frac{\mathrm{d} \log \alpha_t}{\mathrm{~d} t} \sigma_t^2$. Note that  $f_1(r)$ can be rewritten as 
\begin{equation}\label{h2r-appendix}
  f_1(r)=\int_s^r \frac{\mathrm{d} \log \alpha_\gamma}{\mathrm{~d} \gamma}+\frac{1}{2\sigma_\gamma^2}\frac{\mathrm{d} \sigma_\gamma^2}{\mathrm{~d} \gamma}-\frac{\mathrm{d} \log \alpha_\gamma}{\mathrm{~d} \gamma}  \mathrm{d} \gamma=\int_s^r \frac{1}{\sigma_\gamma}\frac{\mathrm{d} \sigma_\gamma}{\mathrm{~d} \gamma}\mathrm{d} \gamma=\log \frac{\sigma_r}{\sigma_s},
\end{equation}
thus, $e^{f_1(t)}= \frac{\sigma_t}{\sigma_s}$ and $e^{-f_1(r)}=\frac{\sigma_s}{\sigma_r}$. 
Then, Eq. (\ref{avoc}) can  be rewritten as
\begin{equation}\label{avoc1}
    \boldsymbol{x}_t =\frac{\sigma_t}{\sigma_s} \left(-\int_s^t \sigma_s \frac{\alpha_r }{\sigma_r}
\frac{g^2(r)}{2 \sigma_r^2} \boldsymbol{x}_ {\boldsymbol {\theta}}\left(\boldsymbol{x}_r, r\right)\mathrm{d} r+\boldsymbol{x}_s\right).
\end{equation}
Note that $\frac{g^2(r)}{2 \sigma_r^2} =\frac{1}{\sigma_r}\frac{\mathrm{d} \sigma_r}{\mathrm{~d} r} - \frac{1}{\alpha_r}\frac{\mathrm{d}\alpha_r}{\mathrm{~d} r}$, so $\frac{\alpha_r }{\sigma_r}
\frac{g^2(r)}{2 \sigma_r^2}=-\frac{\mathrm{d} }{\mathrm{~d} r}\left(\frac{\alpha_r}{\sigma_r}\right)$.
Therefore, Eq. (\ref{avoc1}) can be rewritten as:
\begin{equation}\label{avoc2}
    \boldsymbol{x}_t =\frac{\sigma_t}{\sigma_s} \left(\int_s^t \sigma_s \boldsymbol{x}_ {\boldsymbol {\theta}}\left(\boldsymbol{x}_r, r\right) 
\frac{\mathrm{d} }{\mathrm{~d} r}\left(\frac{\alpha_r}{\sigma_r}\right)\mathrm{d} r+\boldsymbol{x}_s\right)=\frac{\sigma_t}{\sigma_s}\boldsymbol{x}_s + \sigma_t\int_s^t 
\boldsymbol{x}_ {\boldsymbol {\theta}}\left(\boldsymbol{x}_r, r\right)\mathrm{d} \frac{\alpha_r}{\sigma_r}.
\end{equation}
This result is equivalent to Eq. (\ref{fixeddata}).

\section{The upper bound of $L_{\operatorname{T}}$ }\label{upperboundT}
For any two inputs $\boldsymbol{x}_s^{(1)}, \boldsymbol{x}_s^{(2)}$, we have:
\begin{equation}
\operatorname{T}_ {\boldsymbol {\theta}}(\boldsymbol{x}_s^{(1)}) - \operatorname{T}_ {\boldsymbol {\theta}}(\boldsymbol{x}_s^{(2)}) = \frac{\sigma_t}{\sigma_s}(\boldsymbol{x}_s^{(1)} - \boldsymbol{x}_s^{(2)}) + \sigma_t (\kappa(t) - \kappa(s)) [\boldsymbol{x}_ {\boldsymbol {\theta}}(\boldsymbol{x}_s^{(1)}, s) - \boldsymbol{x}_ {\boldsymbol {\theta}}(\boldsymbol{x}_s^{(2)}, s)]
\end{equation}
From the parameterization in Eq. (\ref{datapara}), we obtain:
\begin{equation}
\boldsymbol{x}_ {\boldsymbol {\theta}}(\boldsymbol{x}_s^{(1)}, s) - \boldsymbol{x}_ {\boldsymbol {\theta}}(\boldsymbol{x}_s^{(2)}, s) = \frac{1}{\alpha_s}[(\boldsymbol{x}_s^{(1)} - \boldsymbol{x}_s^{(2)}) - \sigma_s(\boldsymbol{\epsilon}_ {\boldsymbol {\theta}}(\boldsymbol{x}_s^{(1)}, s) - \boldsymbol{\epsilon}_ {\boldsymbol {\theta}}(\boldsymbol{x}_s^{(2)}, s))]
\end{equation}
Assuming the noise prediction network $\boldsymbol{\epsilon}_ {\boldsymbol {\theta}}(\boldsymbol{x}, t)$ satisfies the Lipschitz condition with respect to $\boldsymbol{x}$:
\begin{equation}
\|\boldsymbol{\epsilon}_ {\boldsymbol {\theta}}(\boldsymbol{x}^{(1)}, t) - \boldsymbol{\epsilon}_ {\boldsymbol {\theta}}(\boldsymbol{x}^{(2)}, t)\| \leq L_\epsilon \|\boldsymbol{x}^{(1)} - \boldsymbol{x}^{(2)}\|,
\end{equation}
we can bound:
\begin{equation}
\|\boldsymbol{x}_ {\boldsymbol {\theta}}(\boldsymbol{x}_s^{(1)}, s) - \boldsymbol{x}_ {\boldsymbol {\theta}}(\boldsymbol{x}_s^{(2)}, s)\| \leq \frac{1 + \sigma_s L_\epsilon}{\alpha_s} \|\boldsymbol{x}_s^{(1)} - \boldsymbol{x}_s^{(2)}\|
\end{equation}
Combining these results, we obtain:
\begin{align}
\|\operatorname{T}_ {\boldsymbol {\theta}}(\boldsymbol{x}_s^{(1)}) - \operatorname{T}_ {\boldsymbol {\theta}}(\boldsymbol{x}_s^{(2)})\| &\leq \frac{\sigma_t}{\sigma_s}\|\boldsymbol{x}_s^{(1)} - \boldsymbol{x}_s^{(2)}\| + \sigma_t (\kappa(t) - \kappa(s)) \frac{1 + \sigma_s L_\epsilon}{\alpha_s} \|\boldsymbol{x}_s^{(1)} - \boldsymbol{x}_s^{(2)}\| \\
&= \left[\frac{\sigma_t}{\sigma_s} + \sigma_t (\kappa(t) - \kappa(s)) \frac{1 + \sigma_s L_\epsilon}{\alpha_s}\right] \|\boldsymbol{x}_s^{(1)} - \boldsymbol{x}_s^{(2)}\|
\end{align}
Recalling that $\kappa(t) = \frac{\alpha_t}{\sigma_t}$, we can simplify:
\begin{equation}
\sigma_t (\kappa(t) - \kappa(s)) \frac{1 + \sigma_s L_\epsilon}{\alpha_s} = (1 + \sigma_s L_\epsilon)\left(\frac{\alpha_t}{\alpha_s} - \frac{\sigma_t}{\sigma_s}\right).
\end{equation}
Therefore, an upper bound of the Lipschitz constant of the discretized iterative operator $\operatorname{T}_ {\boldsymbol {\theta}}$ under first-order Euler approximation as follows:
\begin{equation}\label{lipschitz_constant}
L_{\operatorname{T}} \leq \frac{\sigma_t}{\sigma_s} + (1 + \sigma_s L_\epsilon)\left(\frac{\alpha_t}{\alpha_s} - \frac{\sigma_t}{\sigma_s}\right).
\end{equation}
This can be equivalently written as:
\begin{equation}
L_{\operatorname{T}} \leq (1 + \sigma_s L_\epsilon)\frac{\alpha_t}{\alpha_s} - \sigma_s L_\epsilon \frac{\sigma_t}{\sigma_s}
\end{equation}
The Lipschitz constant $L_{\operatorname{T}}$ directly depends on the Lipschitz constant $L_\epsilon$ of the noise prediction network and the ratios of the diffusion scheduling parameters $\alpha_t/\alpha_s$ and $\sigma_t/\sigma_s$. When $L_{\operatorname{T}} < 1$, the operator $\operatorname{T}_ {\boldsymbol {\theta}}$ is contractive, ensuring the stability and convergence of the iterative inference process.

\subsection{Complete Proof of Proposition \ref{contractivitycertificate}}

\begin{proposition}[Loss of the sufficient upper-bound certificate]
The update $\operatorname{T}_{\boldsymbol{\theta}}$ violates contractivity when $L_{\boldsymbol{x}_{\boldsymbol{\theta}}} \geq \frac{\sigma_s - \sigma_t}{\sigma_s \alpha_t - \sigma_t \alpha_s}$.
\end{proposition}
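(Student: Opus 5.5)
The plan is to work directly with the Euler/DDIM form of the operator, $\operatorname{T}_{\boldsymbol{\theta}}(\boldsymbol{x}_s)=\frac{\sigma_t}{\sigma_s}\boldsymbol{x}_s+\sigma_t h_t\,\boldsymbol{x}_{\boldsymbol{\theta}}(\boldsymbol{x}_s,s)$ with $h_t=\kappa(t)-\kappa(s)>0$. Triangle inequality, as in Appendix~\ref{upperboundT}, gives the certificate $L_{\operatorname{T}}\le \frac{\sigma_t}{\sigma_s}+\sigma_t h_t L_{\boldsymbol{x}_{\boldsymbol{\theta}}}$. The statement should be read as the proposition in the main text does: what fails is this sufficient upper-bound certificate, not contractivity itself.

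\textbf{Step 1 (threshold).} Rearrange the certificate inequality $\frac{\sigma_t}{\sigma_s}+\sigma_t h_t L_{\boldsymbol{x}_{\boldsymbol{\theta}}}<1$. Multiplying by $\sigma_s>0$ gives
\[
\sigma_s\sigma_t h_t L_{\boldsymbol{x}_{\boldsymbol{\theta}}}<\sigma_s-\sigma_t .
\]
Next substitute $h_t=\frac{\alpha_t}{\sigma_t}-\frac{\alpha_s}{\sigma_s}$, so that $\sigma_s\sigma_t h_t=\sigma_s\alpha_t-\sigma_t\alpha_s$. This quantity is positive because $h_t>0$ in the denoising direction. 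Dividing by it gives the equivalent condition
\[
L_{\boldsymbol{x}_{\boldsymbol{\theta}}}<\frac{\sigma_s-\sigma_t}{\sigma_s\alpha_t-\sigma_t\alpha_s}.
\]
Negating this, $L_{\boldsymbol{x}_{\boldsymbol{\theta}}}\ge\frac{\sigma_s-\sigma_t}{\sigma_s\alpha_t-\sigma_t\alpha_s}$ is exactly the case $\frac{\sigma_t}{\sigma_s}+\sigma_t h_tL_{\boldsymbol{x}_{\boldsymbol{\theta}}}\ge1$, in which the bound no longer certifies $L_{\operatorname{T}}<1$. The only care needed is the sign bookkeeping: $\sigma_t<\sigma_s$ holds in the denoising direction, and the positivity of $\sigma_s\alpha_t-\sigma_t\alpha_s$ justifies dividing without flipping the inequality.

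\textbf{Step 2 (the lenient bound $1/\alpha_t$).} I would show $\frac{1}{\alpha_t}>\frac{\sigma_s-\sigma_t}{\sigma_s\alpha_t-\sigma_t\alpha_s}$. Clearing the positive denominators, this is equivalent to
\[
\sigma_s\alpha_t-\sigma_t\alpha_s>\alpha_t\sigma_s-\alpha_t\sigma_t ,
\]
that is, $\sigma_t(\alpha_t-\alpha_s)>0$. This holds because along the denoising direction the signal coefficient increases, $\alpha_t>\alpha_s$, and $\sigma_t>0$. Hence any model with $L_{\boldsymbol{x}_{\boldsymbol{\theta}}}\ge 1/\alpha_t$ also exceeds the threshold of Step~1, and the certificate fails.

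\textbf{Main obstacle.} The delicate point is the schedule assumption in Step~2. For EDM-type schedules $\alpha\equiv1$, so $\alpha_t=\alpha_s$ and the inequality degenerates to equality. I would therefore state the strict monotonicity $\alpha_t>\alpha_s$ (as for VP schedules) as a hypothesis. In the constant-$\alpha$ case I would note that the threshold equals $1/\alpha_t$, so the conclusion "$L_{\boldsymbol{x}_{\boldsymbol{\theta}}}\ge 1/\alpha_t$ implies the certificate is lost" still holds, with $\ge$ in place of the strict comparison.
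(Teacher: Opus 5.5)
Your proposal is correct and follows essentially the same route as the paper: you rearrange the triangle-inequality certificate $\frac{\sigma_t}{\sigma_s}+\sigma_t h_t L_{\boldsymbol{x}_{\boldsymbol{\theta}}}<1$ into the stated threshold using $\sigma_s\sigma_t h_t=\sigma_s\alpha_t-\sigma_t\alpha_s>0$, and then cross-multiply to reduce the comparison with $1/\alpha_t$ to $\alpha_t>\alpha_s$. Your two refinements are more careful than the paper's wording: you read the claim as loss of the sufficient certificate rather than of contractivity itself, which the bound alone cannot establish, and you note that the strict comparison with $1/\alpha_t$ degenerates to equality when $\alpha\equiv1$, whereas the paper asserts $\alpha_t>\alpha_s$ in general while also listing EDM as a special case.
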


\begin{proof}
From the Lipschitz bound in Eq.~(\ref{lipschitz_constant}):
\begin{equation}
L_T \leq \frac{\sigma_t}{\sigma_s} + \sigma_t h_t L_{\boldsymbol{x}_{\boldsymbol{\theta}}}
\end{equation}

For contractivity ($L_T < 1$), we require:
\begin{align}
\frac{\sigma_t}{\sigma_s} + \sigma_t \left(\frac{\alpha_t}{\sigma_t} - \frac{\alpha_s}{\sigma_s}\right) L_{\boldsymbol{x}_{\boldsymbol{\theta}}} &< 1 \\
L_{\boldsymbol{x}_{\boldsymbol{\theta}}} &< \frac{\sigma_s - \sigma_t}{\sigma_s \alpha_t - \sigma_t \alpha_s}
\end{align}

To show $\frac{1}{\alpha_t} > \frac{\sigma_s - \sigma_t}{\sigma_s \alpha_t - \sigma_t \alpha_s}$, cross-multiply:
\begin{equation}
\sigma_s \alpha_t - \sigma_t \alpha_s > \alpha_t \sigma_s - \alpha_t \sigma_t \quad \Leftrightarrow \quad \alpha_t > \alpha_s
\end{equation}

This holds for denoising ($t < s$) since $\alpha_t$ increases as $t \to 0$.

\textbf{Special Cases:}
\begin{itemize}
    \item \textbf{VP Schedule} ($\alpha_t^2 + \sigma_t^2 = 1$): The bound tightens as $t \to 0$.
    \item \textbf{EDM} ($\alpha_t \equiv 1$): The bound simplifies to $L_{\boldsymbol{x}_{\boldsymbol{\theta}}} < 1$.
\end{itemize}

The proof is complete. 
\end{proof}

\section{Complete Proofs for Theoretical Results}
\label{app:complete_proofs}

\subsection{Proof of Theorem~\ref{thm:opt_gamma}}
\label{app:proof_opt_gamma}

\begin{proof}
Let
$
\boldsymbol{u}_k = \boldsymbol{x}_k- \operatorname{T}_{\boldsymbol{\theta}}(\boldsymbol{x}_k)
$ 
denote the residual between the current state and the solver candidate. Then
\begin{equation}
(1-\gamma_k)\boldsymbol{x}_k
+
\gamma_k
\operatorname{T}_{\boldsymbol{\theta}}(\boldsymbol{x}_k) = \boldsymbol{x}_k-\gamma_k\boldsymbol{u}_k.
\end{equation}
Define the clean-target error
\begin{equation}
\boldsymbol{e}_k = \boldsymbol{x}_k-\boldsymbol{x}^*.
\end{equation}
The step-wise expected squared error can be written as
\begin{equation}
J(\gamma_k) = \mathbb{E}\left[\left\|\boldsymbol{e}_k-\gamma_k\boldsymbol{u}_k\right\|^2\right].
\end{equation}
Expanding the squared norm gives
\begin{equation}
\left\|\boldsymbol{e}_k-\gamma_k\boldsymbol{u}_k\right\|^2=\|\boldsymbol{e}_k\|^2-2\gamma_k
\left\langle\boldsymbol{u}_k,\boldsymbol{e}_k\right\rangle+\gamma_k^2\|\boldsymbol{u}_k\|^2.
\end{equation}
Taking expectations, we obtain
\begin{equation}
J(\gamma_k)=a_k-2\gamma_k b_k+\gamma_k^2 c_k,
\end{equation}
where
\begin{equation}
a_k=\mathbb{E}\left[\|\boldsymbol{x}_k-\boldsymbol{x}^*\|^2\right],
\end{equation}
\begin{equation}
b_k=\mathbb{E}\left[\left\langle\boldsymbol{u}_k,\boldsymbol{x}_k-\boldsymbol{x}^*\right\rangle\right],
\end{equation}
and
\begin{equation}
c_k=\mathbb{E}\left[\|\boldsymbol{u}_k\|^2\right].
\end{equation}
By assumption, $c_k>0$. Hence, $J(\gamma_k)$ is a strictly convex quadratic function of $\gamma_k$. Its derivative is
\begin{equation}
\frac{dJ(\gamma_k)}{d\gamma_k}=-2b_k+2c_k\gamma_k.
\end{equation}
Setting the derivative to zero gives
\begin{equation}
\gamma_k^*=\frac{b_k}{c_k}=\frac{\mathbb{E}\left[\left\langle\boldsymbol{u}_k,\boldsymbol{x}_k-\boldsymbol{x}^*\right\rangle\right]}{\mathbb{E}\left[\|\boldsymbol{u}_k\|^2\right]}.
\end{equation}
The proof is complete. 
\end{proof}

\subsection{Proof of Theorem~\ref{thm:steindiff_optimality}}
\label{app:proof_steindiff_optimality}

\begin{proof}
Let
\begin{equation}
\boldsymbol{u}_k=\boldsymbol{x}_k-\operatorname{T}_{\boldsymbol{\theta}}(\boldsymbol{x}_k)
\end{equation}
and
\begin{equation}
\boldsymbol{e}_k=\boldsymbol{x}_k-\boldsymbol{x}^*.
\end{equation}
For any coefficient $\gamma_k$, the corrected update can be written as
\begin{equation}
(1-\gamma_k)\boldsymbol{x}_k+\gamma_k\operatorname{T}_{\boldsymbol{\theta}}(\boldsymbol{x}_k)=\boldsymbol{x}_k-\gamma_k\boldsymbol{u}_k.
\end{equation}
Thus, the step-wise expected squared error is
\begin{equation}
J(\gamma_k)=\mathbb{E}\left[\left\|\boldsymbol{e}_k-\gamma_k\boldsymbol{u}_k\right\|^2\right].
\end{equation}
As shown in the proof of Theorem~\ref{thm:opt_gamma},
$
J(\gamma_k)=a_k-2\gamma_k b_k+\gamma_k^2 c_k,
$
where
$a_k=\mathbb{E}\left[\|\boldsymbol{x}_k-\boldsymbol{x}^*\|^2\right] 
$,  
$
b_k=\mathbb{E}\left[\left\langle\boldsymbol{u}_k,\boldsymbol{x}_k-\boldsymbol{x}^*\right\rangle\right],
$ 
and
$c_k=\mathbb{E}\left[\|\boldsymbol{u}_k\|^2\right].
$
The vanilla solver corresponds to $\gamma=1$, because
\begin{equation}
\boldsymbol{x}_k-\boldsymbol{u}_k=\operatorname{T}_{\boldsymbol{\theta}}(\boldsymbol{x}_k).
\end{equation}
By Theorem~\ref{thm:opt_gamma}, the optimal coefficient is
\begin{equation}
\gamma_k^*=\frac{b_k}{c_k}.
\end{equation}
Therefore,
\begin{align}
J(\gamma_k^*)&=a_k-2\frac{b_k}{c_k}b_k+\left(\frac{b_k}{c_k}\right)^2c_k \nonumber\\
&=a_k-\frac{b_k^2}{c_k}.
\end{align}
For the vanilla update,
\begin{equation}
J(1)=a_k-2b_k+c_k.
\end{equation}
Thus,
\begin{align}
J(1)-J(\gamma_k^*)&=a_k-2b_k+c_k-\left(a_k-\frac{b_k^2}{c_k}\right) \nonumber\\
&=\frac{b_k^2}{c_k}-2b_k+c_k \nonumber\\
&=\frac{b_k^2-2b_kc_k+c_k^2}{c_k} \nonumber\\
&=\frac{(b_k-c_k)^2}{c_k}\geq 0.
\end{align}
Hence,
\begin{equation}
J(\gamma_k^*)\leq J(1).
\end{equation}
Equivalently,
\begin{equation}
\mathbb{E}\left[\left\|\boldsymbol{x}_{k-1}^{\mathrm{Stein}}-\boldsymbol{x}^*\right\|^2\right]\leq\mathbb{E}\left[\left\|\operatorname{T}_{\boldsymbol{\theta}}(\boldsymbol{x}_k)-\boldsymbol{x}^*\right\|^2\right].
\end{equation}
Here, equality holds if and only if
\begin{equation}
(b_k-c_k)^2=0,
\end{equation}
that is,
\begin{equation}
b_k=c_k.
\end{equation}
Equivalently, the vanilla coefficient $\gamma=1$ already minimizes the step-wise objective. The proof is complete. 
\end{proof}

\subsection{Proof of Lemma~\ref{lem:stein}}
\label{app:proof_stein_identity}

\begin{proof}
Let
$
\boldsymbol{x}\sim \mathcal{N}(\boldsymbol{\mu},\sigma^2\mathbf I)
$ 
with density
$ 
p(\boldsymbol{x})=(2\pi\sigma^2)^{-d/2}\exp\left(-\frac{\|\boldsymbol{x}-\boldsymbol{\mu}\|^2}{2\sigma^2}\right) 
$.  
Then
\begin{equation}
\nabla_{\boldsymbol{x}}p(\boldsymbol{x})=-\frac{\boldsymbol{x}-\boldsymbol{\mu}}{\sigma^2}p(\boldsymbol{x}).
\end{equation}
Equivalently,
\begin{equation}
(\boldsymbol{x}-\boldsymbol{\mu})p(\boldsymbol{x})=-\sigma^2\nabla_{\boldsymbol{x}}p(\boldsymbol{x}).
\end{equation}
Let $\boldsymbol{v}:\mathbb{R}^d\to\mathbb{R}^d$ be differentiable with suitable integrability and boundary decay. Then
\begin{equation}
\mathbb{E}\left[\left\langle\boldsymbol{v}(\boldsymbol{x}),\boldsymbol{x}-\boldsymbol{\mu}\right\rangle\right] = \int\left\langle\boldsymbol{v}(\boldsymbol{x}),\boldsymbol{x}-\boldsymbol{\mu}\right\rangle p(\boldsymbol{x}) \, d\boldsymbol{x}.
\end{equation}
Using the identity above,
\begin{equation}
\mathbb{E}\left[\left\langle\boldsymbol{v}(\boldsymbol{x}),\boldsymbol{x}-\boldsymbol{\mu}\right\rangle\right] = -\sigma^2\int\left\langle\boldsymbol{v}(\boldsymbol{x}),\nabla_{\boldsymbol{x}}p(\boldsymbol{x})\right\rangle \, d\boldsymbol{x}.
\end{equation}
By integration by parts,
\begin{equation}
-\int\left\langle\boldsymbol{v}(\boldsymbol{x}),\nabla_{\boldsymbol{x}}p(\boldsymbol{x})\right\rangle \, d\boldsymbol{x} = \int(\nabla\cdot\boldsymbol{v})(\boldsymbol{x})p(\boldsymbol{x}) \, d\boldsymbol{x},
\end{equation}
where the boundary term vanishes under the assumed integrability and decay conditions. Therefore,
\begin{equation}
\mathbb{E}\left[\left\langle\boldsymbol{v}(\boldsymbol{x}),\boldsymbol{x}-\boldsymbol{\mu}\right\rangle\right]
=\sigma^2\mathbb{E}\left[\nabla\cdot\boldsymbol{v}(\boldsymbol{x})\right].
\end{equation}
The proof is complete. 
\end{proof}

\subsection{Proof of Theorem~\ref{thm:stainopt_gamma}}
\label{stainoptimal}

\begin{proof}
From Theorem~\ref{thm:opt_gamma}, the step-wise MSE-optimal coefficient is
\begin{equation}
\gamma_k^*=\frac{\mathbb{E}\left[\left\langle\boldsymbol{u}_k,\boldsymbol{x}_k-\boldsymbol{x}^*\right\rangle\right]}{\mathbb{E}\left[\|\boldsymbol{u}_k\|^2\right]}.
\end{equation}
The denominator is computable from the solver residual. We focus on the numerator
\begin{equation}
N_k=\mathbb{E}\left[\left\langle\boldsymbol{u}_k,\boldsymbol{x}_k-\boldsymbol{x}^*\right\rangle\right].
\end{equation}
Expanding,
\begin{equation}
N_k=\mathbb{E}\left[\left\langle\boldsymbol{u}_k,\boldsymbol{x}_k\right\rangle\right]-
\mathbb{E}\left[\left\langle\boldsymbol{u}_k,\boldsymbol{x}^*\right\rangle\right].
\end{equation}
Under the ideal forward noising coupling,
\begin{equation}
\boldsymbol{x}_k=\alpha_k\boldsymbol{x}^*+\sigma_k\boldsymbol{\epsilon},\qquad
\boldsymbol{\epsilon}\sim\mathcal{N}(\boldsymbol{0},\mathbf I),
\end{equation}
we have
\begin{equation}
\boldsymbol{x}_k|\boldsymbol{x}^*\sim\mathcal{N}\left(\alpha_k\boldsymbol{x}^*,\sigma_k^2\mathbf I\right).
\end{equation}
Thus,
\begin{equation}
\boldsymbol{x}^*=\frac{1}{\alpha_k}\boldsymbol{x}_k-\frac{1}{\alpha_k}\left(\boldsymbol{x}_k-\alpha_k\boldsymbol{x}^*\right).
\end{equation}
Substituting this identity gives
\begin{equation}
\mathbb{E}\left[\left\langle\boldsymbol{u}_k,\boldsymbol{x}^*\right\rangle\right]
=\frac{1}{\alpha_k}\mathbb{E}\left[\left\langle\boldsymbol{u}_k,\boldsymbol{x}_k\right\rangle\right]-\frac{1}{\alpha_k}\mathbb{E}\left[\left\langle\boldsymbol{u}_k,\boldsymbol{x}_k-\alpha_k\boldsymbol{x}^*\right\rangle\right].
\end{equation}
We now apply Lemma~\ref{lem:stein} conditionally on $\boldsymbol{x}^*$. Under this conditional Gaussian law, the mean is
\begin{equation}
\boldsymbol{\mu}=\alpha_k\boldsymbol{x}^*
\end{equation}
and the vector field is
\begin{equation}
\boldsymbol{v}(\boldsymbol{x}_k)=\boldsymbol{u}_k.
\end{equation}
Therefore,
\begin{equation}
\mathbb{E}\left[\left\langle\boldsymbol{u}_k,\boldsymbol{x}_k-\alpha_k\boldsymbol{x}^*\right\rangle\mid \boldsymbol{x}^*\right]
=\sigma_k^2\mathbb{E}\left[\nabla\cdot\boldsymbol{u}_k\mid \boldsymbol{x}^*\right].
\end{equation}
Taking expectation over $\boldsymbol{x}^*$, we obtain
\begin{equation}
\mathbb{E}\left[\left\langle\boldsymbol{u}_k,\boldsymbol{x}_k-\alpha_k\boldsymbol{x}^*\right\rangle\right]
=\sigma_k^2\mathbb{E}\left[\nabla\cdot\boldsymbol{u}_k\right].
\end{equation}
Hence,
\begin{equation}
\mathbb{E}\left[\left\langle\boldsymbol{u}_k,\boldsymbol{x}^*\right\rangle\right]
=\frac{1}{\alpha_k}\mathbb{E}\left[\left\langle\boldsymbol{u}_k,\boldsymbol{x}_k\right\rangle\right]-\frac{\sigma_k^2}{\alpha_k}\mathbb{E}\left[\nabla\cdot\boldsymbol{u}_k\right].
\end{equation}
Substituting this expression into $N_k$, we get
\begin{equation}
N_k=\mathbb{E}\left[\left\langle\boldsymbol{u}_k,\boldsymbol{x}_k\right\rangle\right]-\frac{1}{\alpha_k}\mathbb{E}\left[\left\langle\boldsymbol{u}_k,\boldsymbol{x}_k\right\rangle\right]
+\frac{\sigma_k^2}{\alpha_k}\mathbb{E}\left[\nabla\cdot\boldsymbol{u}_k\right].
\end{equation}
Thus,
\begin{equation}
N_k=\left(1-\frac{1}{\alpha_k}\right)\mathbb{E}\left[\left\langle\boldsymbol{u}_k,\boldsymbol{x}_k\right\rangle\right]
+\frac{\sigma_k^2}{\alpha_k}\mathbb{E}\left[\nabla\cdot\boldsymbol{u}_k\right].
\end{equation}
Finally,
\begin{equation}
\gamma_k^*=\frac{\left(1-\frac{1}{\alpha_k}\right)\mathbb{E}\left[\left\langle\boldsymbol{u}_k,\boldsymbol{x}_k \right\rangle\right]
+\frac{\sigma_k^2}{\alpha_k}\mathbb{E}\left[\nabla\cdot\boldsymbol{u}_k\right]}{\mathbb{E}\left[\|\boldsymbol{u}_k\|^2\right]}.
\end{equation}
The proof is complete. 
\end{proof}

\subsection{Proof of Theorem~\ref{thm:mse_decay}}
\label{app:proof_mse_decay}

\begin{proof}
Let
$
\boldsymbol{e}_k=\boldsymbol{x}_k-\boldsymbol{x}^*
$ 
denote the clean-target error. The exact SteinDiff update is
\begin{equation}
\boldsymbol{x}_{k-1}^{\mathrm{Stein}}=\boldsymbol{x}_k-\gamma_k^*\boldsymbol{u}_k .
\end{equation}
Therefore,
\begin{equation}
\boldsymbol{x}_{k-1}^{\mathrm{Stein}}-\boldsymbol{x}^*=\boldsymbol{e}_k-\gamma_k^*\boldsymbol{u}_k .
\end{equation}

For a general coefficient $\gamma$, define
\begin{equation}
J_k(\gamma)=\mathbb{E}\left[\left\|\boldsymbol{e}_k-\gamma\boldsymbol{u}_k\right\|^2\right].
\end{equation}
Expanding the squared norm gives
\begin{equation}
J_k(\gamma)=E_k-2\gamma b_k+\gamma^2 c_k,
\end{equation}
where
$
E_k=\mathbb{E}\left[\|\boldsymbol{e}_k\|^2\right]$,  
$
b_k=\mathbb{E}\left[\left\langle\boldsymbol{u}_k,\boldsymbol{e}_k\right\rangle\right],
$ 
and
$c_k=\mathbb{E}\left[\|\boldsymbol{u}_k\|^2\right].
$ 
Substituting
$
\gamma_k^*=\frac{b_k}{c_k}
$ 
into $J_k(\gamma)$ gives
\begin{align}
J_k(\gamma_k^*)&=E_k-2\frac{b_k}{c_k}b_k+\left(\frac{b_k}{c_k}\right)^2c_k \nonumber\\
&=E_k-\frac{b_k^2}{c_k}.
\end{align}
Since
\begin{equation}
J_k(\gamma_k^*)=\mathbb{E}\left[\left\|\boldsymbol{x}_{k-1}^{\mathrm{Stein}}-\boldsymbol{x}^*\right\|^2\right],
\end{equation}
we obtain
\begin{equation}
\mathbb{E}\left[\left\|\boldsymbol{x}_{k-1}^{\mathrm{Stein}}-\boldsymbol{x}^*\right\|^2\right]=E_k-\frac{b_k^2}{c_k}.
\end{equation}
This proves the step-wise error decay identity.

If $E_k>0$, define
\begin{equation}\label{rhok1}
\rho_k=\frac{b_k^2}{c_kE_k}.
\end{equation}
Then
\begin{align}
E_k-\frac{b_k^2}{c_k}&=E_k\left(1-\frac{b_k^2}{c_kE_k}\right) \nonumber\\&=(1-\rho_k)E_k.
\end{align}
Hence,
\begin{equation}
E_{k-1}^{\mathrm{Stein}}=(1-\rho_k)E_k.
\end{equation}

We now show that $0\leq\rho_k\leq1$. Since $b_k^2\geq0$, $c_k>0$, and $E_k>0$, we have $\rho_k\geq0$. For the upper bound, the Cauchy--Schwarz inequality gives
\begin{align}\label{rhok2}
b_k^2&=\left(\mathbb{E}\left[\left\langle\boldsymbol{u}_k,\boldsymbol{e}_k\right\rangle\right]\right)^2 \nonumber\\
&\leq\mathbb{E}\left[\|\boldsymbol{u}_k\|^2\right]\mathbb{E}\left[\|\boldsymbol{e}_k\|^2\right] \nonumber\\
&=c_kE_k.
\end{align}
Therefore, by (\ref{rhok1}) and (\ref{rhok2}), we have
\begin{equation}
0\leq\rho_k\leq1.
\end{equation}

Applying the one-step relation recursively along the exact SteinDiff trajectory gives
\begin{equation}
E_0^{\mathrm{Stein}} = E_N\prod_{k=1}^{N}(1-\rho_k).
\end{equation} 
Denote $\eta := \min \{\rho_k\}$. Then $\rho_k\ge\eta$, and we have  
\begin{equation}
1-\rho_k\leq1-\eta.
\end{equation}
Thus,
\begin{equation}
E_0^{\mathrm{Stein}}\leq(1-\eta)^N E_N.
\end{equation}
The proof is complete. 
\end{proof}

\subsection{Proof of Corollary~\ref{cor:vanilla_consistency}}
\label{app:proof_vanilla_consistency}

\begin{proof}
The vanilla solver candidate is
\begin{equation}
\operatorname{T}_{\boldsymbol{\theta}}(\boldsymbol{x}_k)=\boldsymbol{x}_k-\boldsymbol{u}_k.
\end{equation}
The exact SteinDiff update is
\begin{equation}
\boldsymbol{x}_{k-1}^{\mathrm{Stein}}=\boldsymbol{x}_k-\gamma_k^*\boldsymbol{u}_k.
\end{equation}
Therefore, their difference is
\begin{align}
\boldsymbol{x}_{k-1}^{\mathrm{Stein}}-\operatorname{T}_{\boldsymbol{\theta}}(\boldsymbol{x}_k)&=(\boldsymbol{x}_k-\gamma_k^*\boldsymbol{u}_k)-(\boldsymbol{x}_k-\boldsymbol{u}_k) \nonumber\\&=(1-\gamma_k^*)\boldsymbol{u}_k.
\end{align}
Since
\begin{equation}
\gamma_k^*=\frac{b_k}{c_k},
\end{equation}
we have
\begin{equation}
1-\gamma_k^*=\frac{c_k-b_k}{c_k}.
\end{equation}
Thus,
\begin{align}
\mathbb{E}\left[\left\|\boldsymbol{x}_{k-1}^{\mathrm{Stein}}-\operatorname{T}_{\boldsymbol{\theta}}(\boldsymbol{x}_k)\right\|^2\right]&=\left(\frac{c_k-b_k}{c_k}\right)^2\mathbb{E}\left[\|\boldsymbol{u}_k\|^2\right] \nonumber\\
&=\frac{(c_k-b_k)^2}{c_k}.
\end{align}

Next, using
\begin{equation}
\boldsymbol{x}_k-\boldsymbol{x}^*=\boldsymbol{u}_k+\operatorname{T}_{\boldsymbol{\theta}}(\boldsymbol{x}_k)-\boldsymbol{x}^*,
\end{equation}
we take the inner product with $\boldsymbol{u}_k$ and then take expectation:
\begin{align}
b_k&=\mathbb{E}\left[\left\langle\boldsymbol{u}_k,\boldsymbol{x}_k-\boldsymbol{x}^*\right\rangle\right] \nonumber\\
&=\mathbb{E}\left[\|\boldsymbol{u}_k\|^2\right]+\mathbb{E}\left[\left\langle\boldsymbol{u}_k,\operatorname{T}_{\boldsymbol{\theta}}(\boldsymbol{x}_k)-\boldsymbol{x}^*\right\rangle\right] \nonumber\\
&=c_k+\mathbb{E}\left[\left\langle\boldsymbol{u}_k,\operatorname{T}_{\boldsymbol{\theta}}(\boldsymbol{x}_k)-\boldsymbol{x}^*\right\rangle\right].
\end{align}
Hence,
\begin{equation}
c_k-b_k=-\mathbb{E}\left[\left\langle\boldsymbol{u}_k,\operatorname{T}_{\boldsymbol{\theta}}(\boldsymbol{x}_k)-\boldsymbol{x}^*\right\rangle\right].
\end{equation}
By the Cauchy--Schwarz inequality,
\begin{align}
(c_k-b_k)^2&=\left(\mathbb{E}\left[\left\langle\boldsymbol{u}_k,\operatorname{T}_{\boldsymbol{\theta}}(\boldsymbol{x}_k)-\boldsymbol{x}^*\right\rangle\right]\right)^2 \nonumber\\
&\leq\mathbb{E}\left[\|\boldsymbol{u}_k\|^2\right]\mathbb{E}\left[\left\|\operatorname{T}_{\boldsymbol{\theta}}(\boldsymbol{x}_k)-\boldsymbol{x}^*\right\|^2\right] \nonumber\\
&=c_k\mathbb{E}\left[\left\|\operatorname{T}_{\boldsymbol{\theta}}(\boldsymbol{x}_k)-\boldsymbol{x}^*\right\|^2\right].
\end{align}
Substituting this bound into the previous expression yields
\begin{equation}
\mathbb{E}\left[\left\|\boldsymbol{x}_{k-1}^{\mathrm{Stein}}-\operatorname{T}_{\boldsymbol{\theta}}(\boldsymbol{x}_k)\right\|^2\right]\leq\mathbb{E}\left[\left\|\operatorname{T}_{\boldsymbol{\theta}}(\boldsymbol{x}_k)-\boldsymbol{x}^*\right\|^2\right].
\end{equation}
Therefore, if the right-hand side tends to zero, then the SteinDiff update becomes asymptotically equivalent to the vanilla solver candidate in expected MSE. 

The proof is complete. 
\end{proof}

\subsection{Proof of Theorem~\ref{thm:robustness}}
\label{robustness_proof}

\begin{proof}
Let
$
N_{p_k}=\left(1-\frac{1}{\alpha_k}\right)\mathbb{E}_{p_k}\left[\left\langle\boldsymbol{u}_k,\boldsymbol{x}_k\right\rangle\right]+\frac{\sigma_k^2}{\alpha_k}\mathbb{E}_{p_k}\left[\nabla\cdot\boldsymbol{u}_k\right],
$ 
and
$
D_{p_k}=\mathbb{E}_{p_k}\left[\|\boldsymbol{u}_k\|^2\right].
$ 
Similarly, define
$
N_{\tilde p_k}=\left(1-\frac{1}{\alpha_k}\right)\mathbb{E}_{\tilde p_k}\left[\left\langle\boldsymbol{u}_k,\boldsymbol{x}_k\right\rangle\right]+\frac{\sigma_k^2}{\alpha_k}\mathbb{E}_{\tilde p_k}\left[\nabla\cdot\boldsymbol{u}_k\right],
$ 
and
$ 
D_{\tilde p_k}=\mathbb{E}_{\tilde p_k}\left[\|\boldsymbol{u}_k\|^2\right].
$ 
Then
\begin{equation}
\gamma_k^*=\frac{N_{p_k}}{D_{p_k}},\qquad\tilde{\gamma}_k=\frac{N_{\tilde p_k}}{D_{\tilde p_k}}.
\end{equation}
By assumption, the denominators are bounded away from zero. Thus, there exists $\delta_k>0$ such that
\begin{equation}
D_{p_k}\geq\delta_k,\qquad D_{\tilde p_k}\geq\delta_k.
\end{equation}
By the expectation-shift assumption in Theorem~\ref{thm:robustness}, there exist finite constants $C_{N,k}$ and $C_{D,k}$ such that
\begin{equation}
|N_{\tilde p_k}-N_{p_k}|\leq C_{N,k}\mathcal{S}(\tilde p_k,p_k),
\end{equation}
and
\begin{equation}
|D_{\tilde p_k}-D_{p_k}|\leq C_{D,k}\mathcal{S}(\tilde p_k,p_k).
\end{equation}
We now bound
\begin{equation}
|\tilde{\gamma}_k-\gamma_k^*|=\left|\frac{N_{\tilde p_k}}{D_{\tilde p_k}}-\frac{N_{p_k}}{D_{p_k}}\right|.
\end{equation}
Adding and subtracting $N_{p_k}/D_{\tilde p_k}$, we get
\begin{equation}
|\tilde{\gamma}_k-\gamma_k^*|\leq\left|\frac{N_{\tilde p_k}-N_{p_k}}{D_{\tilde p_k}}\right|+\left|N_{p_k}\left(\frac{1}{D_{\tilde p_k}}-\frac{1}{D_{p_k}}\right)\right|.
\end{equation}
For the first term,
\begin{equation}
\left|\frac{N_{\tilde p_k}-N_{p_k}}{D_{\tilde p_k}}\right|\leq\frac{C_{N,k}}{\delta_k}\mathcal{S}(\tilde p_k,p_k).
\end{equation}
For the second term,
\begin{equation}
\left|\frac{1}{D_{\tilde p_k}}-\frac{1}{D_{p_k}}\right|
=\frac{|D_{p_k}-D_{\tilde p_k}|}{D_{\tilde p_k}D_{p_k}}\leq\frac{C_{D,k}}{\delta_k^2}\mathcal{S}(\tilde p_k,p_k).
\end{equation}
Therefore,
\begin{equation}
\left|
N_{p_k}\left(\frac{1}{D_{\tilde p_k}}-\frac{1}{D_{p_k}}\right)\right|
\leq\frac{|N_{p_k}|C_{D,k}}{\delta_k^2}\mathcal{S}(\tilde p_k,p_k).
\end{equation}
Combining the two bounds,
\begin{equation}
|\tilde{\gamma}_k-\gamma_k^*|\leq\left(\frac{C_{N,k}}{\delta_k}+\frac{|N_{p_k}|C_{D,k}}{\delta_k^2}\right)\mathcal{S}(\tilde p_k,p_k).
\end{equation}
Define
\begin{equation}
C_k=\frac{C_{N,k}}{\delta_k}+\frac{|N_{p_k}|C_{D,k}}{\delta_k^2}.
\end{equation}
Then
\begin{equation}
|\tilde{\gamma}_k-\gamma_k^*|\leq C_k\mathcal{S}(\tilde p_k,p_k).
\end{equation}
This proves the perturbation bound.

For EDM-style parameterization, $\alpha_k\equiv1$. Therefore,
\begin{equation}
1-\frac{1}{\alpha_k}=0,
\end{equation}
and the drift-related term
\begin{equation}
\left(1-\frac{1}{\alpha_k}\right)\mathbb{E}\left[\left\langle\boldsymbol{u}_k,\boldsymbol{x}_k\right\rangle\right]
\end{equation}
vanishes from the numerator. The remaining numerator contains only the divergence-based term
\begin{equation}
\frac{\sigma_k^2}{\alpha_k}\mathbb{E}\left[\nabla\cdot\boldsymbol{u}_k\right].
\end{equation}
The proof is complete.  
\end{proof}

\subsection{Proof of Theorem~\ref{thm:stability_comprehensive}}
\label{app:proof_estimator_perturbation}

\begin{proof}
The statement treats $\bar{\gamma}_k$ as a fixed coefficient, or equivalently conditions on its estimated value. Random finite-batch, Hutchinson, and clipping effects are treated as additional perturbations of the practical coefficient.

Recall that $
J_k(\gamma)=\mathbb{E}_{p_k}\left[\left\|(1-\gamma)\boldsymbol{x}_k+\gamma\operatorname{T}_{\boldsymbol{\theta}}(\boldsymbol{x}_k)-\boldsymbol{x}^*\right\|^2\right].
$ 
Using
\begin{equation}
\boldsymbol{u}_k=\boldsymbol{x}_k-\operatorname{T}_{\boldsymbol{\theta}}(\boldsymbol{x}_k),
\end{equation}
we have
\begin{equation}
(1-\gamma)\boldsymbol{x}_k+\gamma\operatorname{T}_{\boldsymbol{\theta}}(\boldsymbol{x}_k)=\boldsymbol{x}_k-\gamma\boldsymbol{u}_k.
\end{equation}
Therefore,
\begin{equation}
J_k(\gamma)=\mathbb{E}_{p_k}\left[\left\|\boldsymbol{x}_k-\boldsymbol{x}^*-\gamma\boldsymbol{u}_k\right\|^2\right].
\end{equation}
As above, this is the quadratic function
\begin{equation}
J_k(\gamma)=a_k-2b_k\gamma+c_k\gamma^2,
\end{equation}
where 
$
a_k=\mathbb{E}_{p_k}\left[\|\boldsymbol{x}_k-\boldsymbol{x}^*\|^2\right],
$  
$
b_k=\mathbb{E}_{p_k}\left[\left\langle\boldsymbol{u}_k,\boldsymbol{x}_k-\boldsymbol{x}^*\right\rangle\right],
$ 
and 
$
c_k=\mathbb{E}_{p_k}\left[\|\boldsymbol{u}_k\|^2\right].
$ 
The minimizer is
\begin{equation}
\gamma_k^*=\frac{b_k}{c_k}.
\end{equation}
Completing the square,
\begin{equation}
J_k(\gamma)=a_k-\frac{b_k^2}{c_k}+c_k\left(\gamma-\frac{b_k}{c_k}\right)^2.
\end{equation}
Since
\begin{equation}
J_k(\gamma_k^*)=a_k-\frac{b_k^2}{c_k},
\end{equation}
we obtain
\begin{equation}
J_k(\gamma)=J_k(\gamma_k^*)+c_k\left(\gamma-\gamma_k^*\right)^2.
\end{equation}
Setting $\gamma=\bar{\gamma}_k$, we get
\begin{equation}
J_k(\bar{\gamma}_k)=J_k(\gamma_k^*)+c_k\left(\bar{\gamma}_k-\gamma_k^*\right)^2.
\end{equation}

Then, the correction with $\bar{\gamma}_k$ improves over the vanilla update if
\begin{equation}
J_k(\bar{\gamma}_k)\leq J_k(1).
\end{equation}
Using the identity above, this condition becomes
\begin{equation}
J_k(\gamma_k^*)+c_k\left(\bar{\gamma}_k-\gamma_k^*\right)^2\leq J_k(1).
\end{equation}
Rearranging yields
\begin{equation}
c_k\left(\bar{\gamma}_k-\gamma_k^*\right)^2\leq J_k(1)-J_k(\gamma_k^*).
\end{equation}
The proof is complete.  
\end{proof}

\subsection{Proof of Corollary~\ref{cor:controlled_degradation}}
\label{app:proof_controlled_degradation}

\begin{proof}
From Theorem~\ref{thm:stability_comprehensive}, for any fixed coefficient $\bar{\gamma}_k$,
\begin{equation}
J_k(\bar{\gamma}_k)-J_k(\gamma_k^*)=c_k\left(\bar{\gamma}_k-\gamma_k^*\right)^2.
\end{equation}
Choose
\begin{equation}
\bar{\gamma}_k=\tilde{\gamma}_k.
\end{equation}
Then
\begin{equation}
J_k(\tilde{\gamma}_k)-J_k(\gamma_k^*)=c_k\left(\tilde{\gamma}_k-\gamma_k^*\right)^2.
\end{equation}
By Theorem~\ref{thm:robustness},
\begin{equation}
|\tilde{\gamma}_k-\gamma_k^*|\leq C_k \mathcal{S}(\tilde p_k,p_k).
\end{equation}
Squaring both sides gives
\begin{equation}
\left(\tilde{\gamma}_k-\gamma_k^*\right)^2\leq C_k^2\mathcal{S}(\tilde p_k,p_k)^2.
\end{equation}
Multiplying by $c_k$, we obtain
\begin{equation}
J_k(\tilde{\gamma}_k)-J_k(\gamma_k^*)\leq c_kC_k^2\mathcal{S}(\tilde p_k,p_k)^2.
\end{equation}
This proves the controlled degradation bound.

Furthermore, if
\begin{equation}
c_kC_k^2\mathcal{S}(\tilde p_k,p_k)^2\leq J_k(1)-J_k(\gamma_k^*),
\end{equation}
then
\begin{equation}
J_k(\tilde{\gamma}_k) \leq J_k(1).
\end{equation}
Thus, when the score deviation is small relative to the optimality gap, the distribution-shifted correction preserves the step-wise improvement over the vanilla update.

The proof is complete. 
\end{proof}

\subsection{EDM Simplification}
\label{app:edm_simplification}

\begin{proposition}[EDM simplification of the correction coefficient]
\label{prop:edm_simplification}
Under EDM-style parameterization with $\alpha_k\equiv1$, the reference-free coefficient in Eq.~\eqref{practical_gamma} reduces to
\begin{equation}
\gamma_{k,\mathrm{EDM}}^*=\frac{\sigma_k^2\mathbb{E}\left[\nabla\cdot\boldsymbol{u}_k\right]}{\mathbb{E}\left[\|\boldsymbol{u}_k\|^2\right]}.
\end{equation}
\end{proposition}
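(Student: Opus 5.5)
This follows by direct substitution into the reference-free expression of Theorem~\ref{thm:stainopt_gamma}; no new analytic argument is needed. We start from Eq.~\eqref{practical_gamma},
\begin{equation*}
\gamma_k^*=\frac{\left(1-\frac{1}{\alpha_k}\right)\mathbb{E}\langle \boldsymbol{u}_k,\boldsymbol{x}_k\rangle+\frac{\sigma_k^2}{\alpha_k}\mathbb{E}[\nabla\cdot\boldsymbol{u}_k]}{\mathbb{E}\|\boldsymbol{u}_k\|^2},
\end{equation*}
which holds under the ideal forward coupling $\boldsymbol{x}_k\mid\boldsymbol{x}^*\sim\mathcal{N}(\alpha_k\boldsymbol{x}^*,\sigma_k^2\mathbf{I})$. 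The EDM-style parameterization is exactly this coupling with $\alpha_k\equiv1$, namely $\boldsymbol{x}_k=\boldsymbol{x}^*+\sigma_k\boldsymbol{\epsilon}$. Hence the hypotheses of Theorem~\ref{thm:stainopt_gamma} are met: $c_k>0$, and Lemma~\ref{lem:stein} applies with the required integrability of $\boldsymbol{u}_k$.

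Setting $\alpha_k=1$ makes the prefactor $1-1/\alpha_k$ vanish. The drift-related term therefore drops out, provided $\mathbb{E}\langle\boldsymbol{u}_k,\boldsymbol{x}_k\rangle$ is finite. That finiteness follows from Cauchy--Schwarz, since $c_k=\mathbb{E}\|\boldsymbol{u}_k\|^2<\infty$ and $\mathbb{E}\|\boldsymbol{x}_k\|^2<\infty$ under the coupling whenever $p_0$ has a second moment. The divergence coefficient becomes $\sigma_k^2/\alpha_k=\sigma_k^2$, which yields the stated form.

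As an independent check, we can bypass the general formula. Starting from Theorem~\ref{thm:opt_gamma}, the numerator is $b_k=\mathbb{E}\langle\boldsymbol{u}_k,\boldsymbol{x}_k-\boldsymbol{x}^*\rangle$. Applying Lemma~\ref{lem:stein} conditionally on $\boldsymbol{x}^*$ with mean $\boldsymbol{\mu}=\boldsymbol{x}^*$, and then taking the outer expectation, gives $b_k=\sigma_k^2\mathbb{E}[\nabla\cdot\boldsymbol{u}_k]$ in one step.

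The only point needing care is justifying the conditional Stein step, i.e., the vanishing boundary terms. This is inherited verbatim from the assumptions of Theorem~\ref{thm:stainopt_gamma}, so we expect no real obstacle.
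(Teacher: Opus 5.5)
Your proposal is correct and matches the paper's proof, which likewise substitutes $\alpha_k\equiv 1$ into Eq.~\eqref{practical_gamma} so that $1-1/\alpha_k=0$ and $\sigma_k^2/\alpha_k=\sigma_k^2$. Your Cauchy--Schwarz finiteness remark and the direct conditional Stein check $b_k=\sigma_k^2\mathbb{E}[\nabla\cdot\boldsymbol{u}_k]$ are harmless extras that the paper omits; note that $c_k>0$ is a standing assumption, not a consequence of setting $\alpha_k=1$.
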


\begin{proof}
Starting from Eq.~\eqref{practical_gamma},
\begin{equation}
\gamma_k^*=\frac{\left(1-\frac{1}{\alpha_k}\right)\mathbb{E}\left[\left\langle\boldsymbol{u}_k,\boldsymbol{x}_k\right\rangle\right]+
\frac{\sigma_k^2}{\alpha_k}\mathbb{E}\left[\nabla\cdot\boldsymbol{u}_k\right]}{\mathbb{E}\left[\|\boldsymbol{u}_k\|^2\right]}.
\end{equation}
For EDM-style parameterization, $\alpha_k\equiv1$. Therefore,
\begin{equation}
1-\frac{1}{\alpha_k}=0,\qquad \frac{\sigma_k^2}{\alpha_k}=\sigma_k^2.
\end{equation}
Substituting these identities gives
\begin{equation}
\gamma_{k,\mathrm{EDM}}^*=\frac{\sigma_k^2\mathbb{E}\left[\nabla\cdot\boldsymbol{u}_k\right]}{\mathbb{E}\left[\|\boldsymbol{u}_k\|^2\right]}.
\end{equation}
The proof is complete. 
\end{proof}

\begin{corollary}[EDM Stability via Geometric Decoupling]
The EDM parameterization achieves inherent stability through:
\begin{enumerate}
    \item \textbf{Removal of $\alpha_t$-induced singularities}: In VP schedules, as $\alpha_t \to 0$ (high noise), the term $1/\alpha_k$ can become numerically unstable. EDM avoids this entirely.
    \item \textbf{Pure geometric signal}: The correction only depends on local manifold geometry (divergence), not on global data scaling.
    \item \textbf{Simplified estimation}: Fewer terms to estimate reduces variance in the Hutchinson estimator.
\end{enumerate}
\end{corollary}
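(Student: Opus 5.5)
The statement is the corollary ``EDM Stability via Geometric Decoupling''. Its three items are qualitative, so my plan is to make each item precise and derive it from results already established. The main tools are Proposition~\ref{prop:edm_simplification} and the perturbation analysis in Theorem~\ref{thm:robustness}.

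\textbf{Item 2 (pure geometric signal).} This is immediate from Proposition~\ref{prop:edm_simplification}. With $\alpha_k\equiv 1$, the coefficient $1-\frac{1}{\alpha_k}$ multiplying $\mathbb{E}\langle \boldsymbol{u}_k,\boldsymbol{x}_k\rangle$ vanishes. The numerator of Eq.~(\ref{practical_gamma}) is then only $\sigma_k^2\mathbb{E}[\nabla\cdot\boldsymbol{u}_k]$. It therefore depends on the data only through the divergence of the residual field and the normalization $\mathbb{E}\|\boldsymbol{u}_k\|^2$. It carries no term that scales with $\boldsymbol{x}_k$ itself.

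\textbf{Item 1 (no $\alpha_t$-induced singularities).} I will compare the two numerator coefficients under a VP schedule. There $1-\frac{1}{\alpha_k}\sim -\frac{1}{\alpha_k}$ and $\frac{\sigma_k^2}{\alpha_k}\to\frac{1}{\alpha_k}$ as $\alpha_k\to 0$, so both blow up. Assume $\mathbb{E}\langle\boldsymbol{u}_k,\boldsymbol{x}_k\rangle$ and $\mathbb{E}[\nabla\cdot\boldsymbol{u}_k]$ stay bounded. Then any estimation error in these expectations is amplified by a factor $O(1/\alpha_k)$ in $\gamma_k^*$. I will make this explicit by plugging the two coefficient sizes into the constant $C_k=\frac{C_{N,k}}{\delta_k}+\frac{|N_{p_k}|C_{D,k}}{\delta_k^2}$ from the proof of Theorem~\ref{thm:robustness}. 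The shift constant $C_{N,k}$ inherits the factor $\max\{|1-1/\alpha_k|,\sigma_k^2/\alpha_k\}$. Under EDM both coefficients are bounded by $\max\{0,\sigma_k^2\}$, with no $1/\alpha_k$ factor.

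\textbf{Item 3 (simplified estimation).} I will formalize this as a variance statement about the plug-in estimator $\hat\gamma_k$, treating the denominator $\hat s_{uu}$ as fixed or conditioning on it. The numerator estimator is $(1-1/\alpha_k)\hat s_{xu}+(\sigma_k^2/\alpha_k)\hat s_{div}$. Its variance is
\[
(1-1/\alpha_k)^2\mathrm{Var}(\hat s_{xu})+(\sigma_k^4/\alpha_k^2)\mathrm{Var}(\hat s_{div})+2(1-1/\alpha_k)(\sigma_k^2/\alpha_k)\mathrm{Cov}(\hat s_{xu},\hat s_{div}).
\]
Under EDM this collapses to $\sigma_k^4\mathrm{Var}(\hat s_{div})$.

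\textbf{Main obstacle.} Item 3 is the genuine obstacle. Without the term, the variance is not automatically smaller, because the covariance term could be negative. I would therefore state the claim under an explicit sufficient condition. One option is to compare at matched $\sigma_k^2/\alpha_k$, where the removed terms contribute $(1-1/\alpha_k)^2\mathrm{Var}(\hat s_{xu})+2(\cdot)\mathrm{Cov}\ge 0$. Another is to assume nonnegative correlation between the Hutchinson noise and $\hat s_{xu}$. The Hutchinson probe $\boldsymbol{v}$ is independent of the batch, so I will try to show that the conditional covariance given the batch vanishes, leaving only batch-level covariance. Otherwise I will present item 3 as a remark rather than a theorem-level claim.
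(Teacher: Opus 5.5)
Your treatment of items 1 and 2 is essentially the paper's argument. The paper's proof only observes that $1-1/\alpha_t\to-\infty$ under VP, while $\alpha_t\equiv1$ makes this coefficient vanish; item 2 is then just the EDM simplification proposition.

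Your version is sharper, because you notice that $\sigma_k^2/\alpha_k$ diverges too. In fact, Stein's identity gives $\mathbb{E}\langle\boldsymbol{u}_k,\boldsymbol{x}_k\rangle-\sigma_k^2\mathbb{E}[\nabla\cdot\boldsymbol{u}_k]=\alpha_k\mathbb{E}\langle\boldsymbol{u}_k,\boldsymbol{x}^*\rangle$. So in exact expectation the two $O(1/\alpha_k)$ contributions combine to the $O(1)$ quantity $-\mathbb{E}\langle\boldsymbol{u}_k,\boldsymbol{x}^*\rangle$, and only \emph{estimation errors} are amplified, as you say. The paper's ``the drift term can dominate'' is imprecise here.

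Do not, however, read your $C_k$ comparison as a VP-versus-EDM stability ranking. The EDM prefactor $\sigma_k^2$ is not uniformly bounded. Under the identification $\boldsymbol{x}^{\mathrm{EDM}}=\boldsymbol{x}^{\mathrm{VP}}/\alpha_t$ one has $\sigma^{\mathrm{EDM}}=\sigma_t/\alpha_t$, so at matched noise levels this prefactor is of order $1/\alpha_t^2$, while $\delta_k$ and the shift constants rescale as well. What is provable is the literal claim that no $1/\alpha_k$ factor appears, which is all the paper's proof establishes.

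Item 3 is where your plan has a genuine gap, one the paper's own proof shares: it says nothing about item 3, and the remark that follows concedes the simplification does not by itself imply improved robustness. As worded the item is not correct, since the drift term is not estimated by Hutchinson and dropping it leaves $\mathrm{Var}(\hat s_{div})$ unchanged. Your reading of it as a statement about the numerator of $\hat\gamma_k$ is the right repair, but neither of your sufficient conditions works as written.

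\begin{itemize}
\item Matching $\sigma_k^2/\alpha_k$ only cancels the $\mathrm{Var}(\hat s_{div})$ terms. The remainder $(1-1/\alpha_k)^2\mathrm{Var}(\hat s_{xu})+2(1-1/\alpha_k)(\sigma_k^2/\alpha_k)\mathrm{Cov}(\hat s_{xu},\hat s_{div})$ is exactly the quantity whose sign is in doubt, so asserting it is $\ge 0$ is circular.
\item If your second option means $\mathrm{Cov}(\hat s_{xu},\hat s_{div})\ge0$, the sign is backwards. For $\alpha_k\in(0,1)$ you have $1-1/\alpha_k<0<\sigma_k^2/\alpha_k$, so the cross term is nonnegative iff the covariance is $\le 0$. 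Positive covariance is the case where the drift term acts as a control variate, and deleting it can \emph{increase} variance.
\item Your own conditioning argument shows the probes contribute no covariance, since $\mathbb{E}[\hat s_{div}\mid\text{batch}]=\frac1B\sum_i\nabla\cdot\boldsymbol{u}_k^{(i)}$. For an i.i.d.\ batch, $\mathrm{Cov}(\hat s_{xu},\hat s_{div})=\frac1B\mathrm{Cov}(\langle\boldsymbol{u}_k,\boldsymbol{x}_k\rangle,\nabla\cdot\boldsymbol{u}_k)$. So a hypothesis on ``Hutchinson noise'' is vacuous.
\end{itemize}

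With $\hat s_{uu}$ held fixed, the correct condition (necessary and sufficient) is
\[
\mathrm{Cov}\bigl(\langle\boldsymbol{u}_k,\boldsymbol{x}_k\rangle,\nabla\cdot\boldsymbol{u}_k\bigr)\le\frac{1-\alpha_k}{2\sigma_k^2}\,\mathrm{Var}\bigl(\langle\boldsymbol{u}_k,\boldsymbol{x}_k\rangle\bigr),
\]
which is independent of $B$ and of the probe count. Even this only compares the VP estimator with a hypothetical one that drops the drift term for the same $\boldsymbol{u}_k$. A real EDM sampler has a different residual field, so the VP-versus-EDM claim does not follow from this decomposition, and demoting item 3 to a conditional remark is the right call.
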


\begin{proof}
For VP schedules with $\alpha_t^2 + \sigma_t^2 = 1$:
\begin{itemize}
    \item At high noise levels ($t$ large): $\alpha_t \to 0$, causing $(1 - 1/\alpha_t) \to -\infty$.
    \item The drift term magnitude $|1 - 1/\alpha_t| \cdot |\mathbb{E}[\langle \boldsymbol{u}_k, \boldsymbol{x}_k \rangle]|$ can dominate and destabilize the correction.
\end{itemize}
In contrast, EDM's $\alpha_t = 1$ yields $(1 - 1/\alpha_t) = 0$ uniformly across all noise levels.
\end{proof}

\begin{remark}
This proposition only states an algebraic simplification of the SteinDiff coefficient under $\alpha_k\equiv1$. It does not by itself imply unconditional stability or improved robustness of EDM-style parameterization.
\end{remark}

\subsection{Self-Consistency Correction of SteinDiff}
\label{sec:self_consistency}

This section describes an optional self-consistency correction for SteinDiff. 
The goal is to compare the one-step SteinDiff proposal with a trajectory-based look-ahead estimate and blend the two when they are consistent. 
This variant is intended as an empirical enhancement for aggressive discretization regimes and is not part of the core theoretical guarantees in the main text.

\begin{table}[h]
\centering
 \setlength{\tabcolsep}{3.5pt} %
\caption{Comprehensive performance comparison of various samplers on the imagenet 64$\times$64 dataset with and without SteinDiff. The table contrasts the baseline performance (Base) of DPM-Solver++, UniPC, and Heun against the performance with fast SteinDiff (+Stein) under both logSNR and EDM schedules. Performance is evaluated using FID and FD-DINOv2 metrics, where lower scores are better. The percentage improvement (Improv. \%) highlights the consistent and significant gains achieved by SteinDiff across all configurations.}
\label{tab:sampler_comparison_with_improv}
\resizebox{\textwidth}{!}{%
\begin{tabular}{@{}l|c|c|ccc|ccc|ccc@{}} 
\toprule[1.5pt]
\multirow{2}{*}{Metric} & \multirow{2}{*}{Schedule} & \multirow{2}{*}{Steps} & \multicolumn{3}{c|}{DPM-Solver++} & \multicolumn{3}{c|}{UniPC} & \multicolumn{3}{c}{Heun} \\ 
& & & Base & Stein & Improv. (\%) & Base & Stein & Improv. (\%) & Base & Stein & Improv. (\%) \\
\midrule
\multirow{8}{*}{FID}  
& \multirow{8}{*}{logSNR} 
& 3    & 20.92 & 16.48 & \cellcolor{gray!15}\textbf{21.2\%} & 27.70 & 19.92 & \cellcolor{gray!15}\textbf{28.1\%} & 230.05 & 124.69 & \cellcolor{gray!15}\textbf{45.8\%} \\
& & 3.5  & 12.81 & 10.43 & \cellcolor{gray!15}\textbf{18.6\%} & 17.04 & 13.05 & \cellcolor{gray!15}\textbf{23.4\%} & 153.27 & 89.66  & \cellcolor{gray!15}\textbf{41.5\%} \\
& & 4.5  & 6.16  & 5.22  & \cellcolor{gray!15}\textbf{15.3\%} & 6.44  & 5.27  & \cellcolor{gray!15}\textbf{18.2\%} & 51.25  & 33.47  & \cellcolor{gray!15}\textbf{34.7\%} \\
& & 5.5  & 3.90  & 3.43  & \cellcolor{gray!15}\textbf{12.0\%} & 3.32  & 2.87  & \cellcolor{gray!15}\textbf{13.6\%} & 18.60  & 13.40  & \cellcolor{gray!15}\textbf{27.9\%} \\
& & 6.5  & 2.96  & 2.67  & \cellcolor{gray!15}\textbf{9.7\%}  & 2.45  & 2.21  & \cellcolor{gray!15}\textbf{9.8\%}  & 8.49   & 6.81   & \cellcolor{gray!15}\textbf{19.8\%} \\
& & 8    & 2.33  & 2.16  & \cellcolor{gray!15}\textbf{7.4\%}  & 2.02  & 1.88  & \cellcolor{gray!15}\textbf{7.1\%}  & 5.12   & 4.42   & \cellcolor{gray!15}\textbf{13.6\%} \\
& & 10.5 & 1.96  & 1.86  & \cellcolor{gray!15}\textbf{5.1\%}  & 1.80  & 1.71  & \cellcolor{gray!15}\textbf{4.9\%}  & 2.43   & 2.24   & \cellcolor{gray!15}\textbf{7.8\%}  \\
& & 13   & 1.83  & 1.76  & \cellcolor{gray!15}\textbf{3.6\%}  & 1.73  & 1.68  & \cellcolor{gray!15}\textbf{2.9\%}  & 2.08   & 2.02   & \cellcolor{gray!15}\textbf{2.7\%}  \\
\midrule
\multirow{8}{*}{\rotatebox{0}{FD-DINOv2}}  
& \multirow{8}{*}{logSNR} 
& 3    & 311.03 & 251.00 & \cellcolor{gray!15}\textbf{19.3\%} & 425.71 & 315.45 & \cellcolor{gray!15}\textbf{25.9\%} & 1923.78 & 1035.00 & \cellcolor{gray!15}\textbf{46.2\%} \\
& & 3.5  & 232.80 & 193.00 & \cellcolor{gray!15}\textbf{17.1\%} & 331.58 & 258.20 & \cellcolor{gray!15}\textbf{22.1\%} & 1355.47 & 788.88  & \cellcolor{gray!15}\textbf{41.8\%} \\
& & 4.5  & 160.59 & 137.79 & \cellcolor{gray!15}\textbf{14.2\%} & 193.84 & 160.31 & \cellcolor{gray!15}\textbf{17.3\%} & 697.94  & 452.96  & \cellcolor{gray!15}\textbf{35.1\%} \\
& & 5.5  & 131.78 & 116.63 & \cellcolor{gray!15}\textbf{11.5\%} & 138.25 & 120.55 & \cellcolor{gray!15}\textbf{12.8\%} & 351.30  & 251.88  & \cellcolor{gray!15}\textbf{28.3\%} \\
& & 6.5  & 118.60 & 107.59 & \cellcolor{gray!15}\textbf{9.3\%}  & 118.69 & 107.90 & \cellcolor{gray!15}\textbf{9.1\%}  & 231.57  & 184.33  & \cellcolor{gray!15}\textbf{20.4\%} \\
& & 8    & 108.51 & 100.70 & \cellcolor{gray!15}\textbf{7.2\%}  & 106.82 & 99.45  & \cellcolor{gray!15}\textbf{6.9\%}  & 176.95  & 152.02  & \cellcolor{gray!15}\textbf{14.1\%} \\
& & 10.5 & 101.62 & 95.73  & \cellcolor{gray!15}\textbf{5.8\%}  & 99.91  & 94.71  & \cellcolor{gray!15}\textbf{5.2\%}  & 120.77  & 110.50  & \cellcolor{gray!15}\textbf{8.5\%}  \\
& & 13   & 98.67  & 94.23  & \cellcolor{gray!15}\textbf{4.5\%}  & 97.15  & 93.46  & \cellcolor{gray!15}\textbf{3.8\%}  & 109.78  & 105.17  & \cellcolor{gray!15}\textbf{4.2\%}  \\
\midrule
\multirow{8}{*}{FID}
& \multirow{8}{*}{EDM}
& 3    & 17.95 & 14.84 & \cellcolor{gray!15}\textbf{17.3\%} & 30.02 & 22.58 & \cellcolor{gray!15}\textbf{24.8\%} & 233.29 & 137.17 & \cellcolor{gray!15}\textbf{41.2\%} \\
& & 3.5  & 11.00 & 9.45  & \cellcolor{gray!15}\textbf{14.1\%} & 17.61 & 14.10 & \cellcolor{gray!15}\textbf{19.9\%} & 81.06  & 50.66  & \cellcolor{gray!15}\textbf{37.5\%} \\
& & 4.5  & 5.48  & 4.86  & \cellcolor{gray!15}\textbf{11.2\%} & 7.37  & 6.32  & \cellcolor{gray!15}\textbf{14.2\%} & 28.66  & 18.91  & \cellcolor{gray!15}\textbf{34.0\%} \\
& & 5.5  & 3.66  & 3.33  & \cellcolor{gray!15}\textbf{9.0\%}  & 4.37  & 3.88  & \cellcolor{gray!15}\textbf{11.3\%} & 10.78  & 8.07   & \cellcolor{gray!15}\textbf{25.1\%} \\
& & 6.5  & 2.87  & 2.67  & \cellcolor{gray!15}\textbf{6.9\%}  & 3.33  & 3.05  & \cellcolor{gray!15}\textbf{8.4\%}  & 5.57   & 4.60   & \cellcolor{gray!15}\textbf{17.4\%} \\
& & 8    & 2.33  & 2.20  & \cellcolor{gray!15}\textbf{5.5\%}  & 2.59  & 2.42  & \cellcolor{gray!15}\textbf{6.5\%}  & 3.56   & 3.12   & \cellcolor{gray!15}\textbf{12.4\%} \\
& & 10.5 & 1.97  & 1.89  & \cellcolor{gray!15}\textbf{4.1\%}  & 2.02  & 1.92  & \cellcolor{gray!15}\textbf{4.8\%}  & 2.19   & 2.02   & \cellcolor{gray!15}\textbf{7.9\%}  \\
& & 13   & 1.83  & 1.77  & \cellcolor{gray!15}\textbf{3.2\%}  & 1.81  & 1.74  & \cellcolor{gray!15}\textbf{3.8\%}  & 1.94   & 1.87   & \cellcolor{gray!15}\textbf{3.4\%}  \\
\midrule
\multirow{8}{*}{\rotatebox{0}{FD-DINOv2}}
& \multirow{8}{*}{EDM}
& 3    & 273.01 & 225.99 & \cellcolor{gray!15}\textbf{17.2\%} & 449.23 & 335.90 & \cellcolor{gray!15}\textbf{25.2\%} & 2010.42 & 1178.11 & \cellcolor{gray!15}\textbf{41.4\%} \\
& & 3.5  & 202.61 & 171.99 & \cellcolor{gray!15}\textbf{15.1\%} & 314.93 & 250.94 & \cellcolor{gray!15}\textbf{20.3\%} & 729.42  & 458.81  & \cellcolor{gray!15}\textbf{37.1\%} \\
& & 4.5  & 142.93 & 125.35 & \cellcolor{gray!15}\textbf{12.3\%} & 186.25 & 158.68 & \cellcolor{gray!15}\textbf{14.8\%} & 396.36  & 262.39  & \cellcolor{gray!15}\textbf{33.8\%} \\
& & 5.5  & 120.09 & 108.68 & \cellcolor{gray!15}\textbf{9.5\%}  & 143.46 & 127.40 & \cellcolor{gray!15}\textbf{11.2\%} & 252.51  & 188.12  & \cellcolor{gray!15}\textbf{25.5\%} \\
& & 6.5  & 109.45 & 101.35 & \cellcolor{gray!15}\textbf{7.4\%}  & 124.40 & 114.03 & \cellcolor{gray!15}\textbf{8.3\%}  & 187.93  & 154.29  & \cellcolor{gray!15}\textbf{17.9\%} \\
& & 8    & 102.53 & 96.28  & \cellcolor{gray!15}\textbf{6.1\%}  & 113.74 & 106.46 & \cellcolor{gray!15}\textbf{6.4\%}  & 153.56  & 134.52  & \cellcolor{gray!15}\textbf{12.4\%} \\
& & 10.5 & 98.27  & 93.16  & \cellcolor{gray!15}\textbf{5.2\%}  & 104.36 & 99.04  & \cellcolor{gray!15}\textbf{5.1\%}  & 116.79  & 107.21  & \cellcolor{gray!15}\textbf{8.2\%}  \\
& & 13   & 96.57  & 92.42  & \cellcolor{gray!15}\textbf{4.3\%}  & 99.47  & 95.39  & \cellcolor{gray!15}\textbf{4.1\%}  & 108.12  & 102.61  & \cellcolor{gray!15}\textbf{5.1\%}  \\
\bottomrule[1.5pt]
\end{tabular}
}
\end{table}

\begin{algorithm}[h]
\caption{Optional SteinDiff with Trajectory Self-Consistency}
\label{consisStein}
\begin{algorithmic}[1]
\REQUIRE Model $\boldsymbol{x}_{\boldsymbol{\theta}}$; update operator $\operatorname{T}_{\boldsymbol{\theta}}$; schedule $\{t_i,\alpha_{t_i},\sigma_{t_i}\}$; batch size $B$.
\STATE Initialize $\{\boldsymbol{x}_{t_k}^{(i)},t_k,t_{k-1},t_{k-2}\}_{i=1}^B$.
\STATE $(\boldsymbol{x}_{p}^{(i)},\hat{\gamma}_k)\leftarrow
\text{SteinDiff}(\boldsymbol{x}_{t_k}^{(i)},t_k,t_{k-1},\boldsymbol{x}_{\boldsymbol{\theta}},\{\alpha,\sigma\})$.
\STATE $\boldsymbol{x}_{two}^{(i)}
\leftarrow
\text{TwoStep}(\boldsymbol{x}_{t_k}^{(i)},t_k,t_{k-2})$.
\STATE
\begin{equation}
w=\frac{\log(\sigma_{t_{k-1}}/\sigma_{t_k})}{\log(\sigma_{t_{k-2}}/\sigma_{t_k})}.
\end{equation}
\STATE
\begin{equation}
\boldsymbol{x}_{alt}^{(i)}
\leftarrow (1-w)\boldsymbol{x}_{t_k}^{(i)}+w\boldsymbol{x}_{two}^{(i)}.
\end{equation}
\STATE
\begin{equation}
\rho_k^{(i)}
\leftarrow \exp\left(-\frac{\|\boldsymbol{x}_{p}^{(i)}-\boldsymbol{x}_{alt}^{(i)}\|^2}{2\sigma_{t_{k-1}}^2}\right).
\end{equation}
\STATE
\begin{equation}
\boldsymbol{x}_{t_{k-1}}^{(i)}
\leftarrow \rho_k^{(i)}\boldsymbol{x}_{p}^{(i)}+(1-\rho_k^{(i)})\boldsymbol{x}_{alt}^{(i)}.
\end{equation}
\STATE \textbf{return} $\{\boldsymbol{x}_{t_{k-1}}^{(i)}\}_{i=1}^B$.
\end{algorithmic}
\end{algorithm}

\begin{proposition}[Log-sigma interpolation weight]
\label{prop:interp_weight}
Let $\lambda_t=\log\sigma_t$. 
If the intermediate trajectory is approximated by linear interpolation in the $\lambda$-coordinate between $t_k$ and $t_{k-2}$, then the interpolation weight for estimating the state at $t_{k-1}$ is
\begin{equation}
w=\frac{\log(\sigma_{t_{k-1}}/\sigma_{t_k})}{\log(\sigma_{t_{k-2}}/\sigma_{t_k})}.
\end{equation}
\end{proposition}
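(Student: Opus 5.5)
We will parametrize the log-sigma coordinate affinely along the segment from $t_k$ to $t_{k-2}$ and read off the weight as the normalized position of $t_{k-1}$ on that segment.

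Set $\lambda_i:=\lambda_{t_i}=\log\sigma_{t_i}$ for $i\in\{k,k-1,k-2\}$. The modeling assumption is that the state is linear (affine) in $\lambda$ between the two endpoints. Concretely, there is an affine map $\lambda\mapsto \boldsymbol{x}(\lambda)$ with $\boldsymbol{x}(\lambda_k)=\boldsymbol{x}_{t_k}$ and $\boldsymbol{x}(\lambda_{k-2})=\boldsymbol{x}_{two}$. Any such map can be written as
\[
\boldsymbol{x}(\lambda)=(1-w(\lambda))\boldsymbol{x}_{t_k}+w(\lambda)\boldsymbol{x}_{two},
\qquad
w(\lambda)=\frac{\lambda-\lambda_k}{\lambda_{k-2}-\lambda_k}.
\]
This holds because $w$ is the unique affine function with $w(\lambda_k)=0$ and $w(\lambda_{k-2})=1$, and an affine vector-valued map is determined by its values at two distinct points.

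The steps are then as follows.

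\emph{Step 1: non-degeneracy.} We check that $\lambda_{k-2}\neq\lambda_k$. This follows from the strict monotonicity of $\sigma_t$ along the denoising schedule: $t_{k-2}<t_k$ implies $\sigma_{t_{k-2}}<\sigma_{t_k}$. Hence the denominator is nonzero, and in fact negative.

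\emph{Step 2: evaluate the weight.} We set $\lambda=\lambda_{k-1}$ and use $\log a-\log b=\log(a/b)$ to obtain
\[
w=\frac{\log(\sigma_{t_{k-1}}/\sigma_{t_k})}{\log(\sigma_{t_{k-2}}/\sigma_{t_k})}.
\]
This matches the expression in Algorithm~\ref{consisStein}.

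\emph{Step 3 (optional remark).} We note that $w\in(0,1)$. Monotonicity gives $\sigma_{t_{k-2}}<\sigma_{t_{k-1}}<\sigma_{t_k}$, so both logarithms are negative and the numerator is smaller in magnitude than the denominator. Therefore $\boldsymbol{x}_{alt}$ is a genuine convex interpolation.

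The only potential obstacle is Step 1, namely justifying strict monotonicity of $\sigma_t$ so that the weight is well defined. This holds for all schedules considered (VP, EDM) in the denoising direction, and we will simply invoke it.
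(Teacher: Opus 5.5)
Your proposal is correct and follows essentially the same route as the paper. Both write the state as an affine function of $\lambda=\log\sigma_t$ through the two endpoints, take $w=(\lambda_{t_{k-1}}-\lambda_{t_k})/(\lambda_{t_{k-2}}-\lambda_{t_k})$, and simplify using $\log a-\log b=\log(a/b)$; your non-degeneracy check via strict monotonicity of $\sigma_t$ and the remark that $w\in(0,1)$ are small additions the paper leaves implicit.
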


\begin{proof}
Under the log-$\sigma$ coordinate $\lambda_t=\log\sigma_t$, linear interpolation gives
\begin{equation}
\boldsymbol{x}(\lambda_{t_{k-1}})\approx (1-w)\boldsymbol{x}(\lambda_{t_k})+w\boldsymbol{x}(\lambda_{t_{k-2}}),
\end{equation}
where
\begin{equation}
w=\frac{\lambda_{t_{k-1}}-\lambda_{t_k}}{\lambda_{t_{k-2}}-\lambda_{t_k}}.
\end{equation}
Substituting $\lambda_t=\log\sigma_t$ yields
\begin{equation}
w=\frac{\log\sigma_{t_{k-1}}-\log\sigma_{t_k}}{\log\sigma_{t_{k-2}}-\log\sigma_{t_k}}=\frac{\log(\sigma_{t_{k-1}}/\sigma_{t_k})}{\log(\sigma_{t_{k-2}}/\sigma_{t_k})}.
\end{equation}
The proof is complete.  
\end{proof}

\begin{proposition}[Gaussian compatibility weight]
\label{prop:confidence_weight}
Assume that the discrepancy between the SteinDiff proposal $\boldsymbol{x}_p$ and the trajectory-based estimate $\boldsymbol{x}_{alt}$ follows an isotropic Gaussian compatibility model with scale $\tau_k$. 
Then
\begin{equation}
\rho_k=\exp\left(-\frac{\|\boldsymbol{x}_p-\boldsymbol{x}_{alt}\|^2}{2\tau_k^2}\right)
\end{equation}
is the normalized likelihood of the discrepancy relative to perfect agreement. 
In Algorithm~\ref{consisStein}, we use $\tau_k=\sigma_{t_{k-1}}$.
\end{proposition}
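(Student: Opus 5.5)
The plan is to make precise what ``isotropic Gaussian compatibility model with scale $\tau_k$'' means, and then evaluate the resulting likelihood ratio directly. I would model the discrepancy $\boldsymbol{\delta}_k := \boldsymbol{x}_p-\boldsymbol{x}_{alt}\in\mathbb{R}^d$ under the hypothesis that the two estimates agree up to isotropic Gaussian noise:
\begin{equation}
\boldsymbol{\delta}_k\sim\mathcal{N}(\boldsymbol{0},\tau_k^2\mathbf{I}),
\qquad
\ell(\boldsymbol{\delta}_k)=(2\pi\tau_k^2)^{-d/2}\exp\left(-\frac{\|\boldsymbol{\delta}_k\|^2}{2\tau_k^2}\right).
\end{equation}
``Perfect agreement'' corresponds to $\boldsymbol{\delta}_k=\boldsymbol{0}$, which is the mode of this density, with value $\ell(\boldsymbol{0})=(2\pi\tau_k^2)^{-d/2}$.

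The key step is to form the normalized likelihood $\ell(\boldsymbol{\delta}_k)/\ell(\boldsymbol{0})$. In this ratio the dimension-dependent normalizing constant $(2\pi\tau_k^2)^{-d/2}$ cancels exactly. What remains is $\exp(-\|\boldsymbol{x}_p-\boldsymbol{x}_{alt}\|^2/(2\tau_k^2))$, which is the claimed $\rho_k$.

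I would then record two properties that justify using $\rho_k$ as a blending weight in Algorithm~\ref{consisStein}:
\begin{itemize}
\item Since $\ell$ is maximized at $\boldsymbol{0}$, we have $\rho_k\in(0,1]$, with $\rho_k=1$ if and only if $\boldsymbol{x}_p=\boldsymbol{x}_{alt}$.
\item $\rho_k$ is monotonically decreasing in the discrepancy norm.
\end{itemize}
Consequently, $\rho_k\boldsymbol{x}_p+(1-\rho_k)\boldsymbol{x}_{alt}$ is a genuine convex combination. The final step is simply to substitute the stated choice $\tau_k=\sigma_{t_{k-1}}$, i.e., the noise level of the target time.

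There is essentially no analytic obstacle; the only subtle point is interpretive. ``Normalized likelihood'' must be read as the likelihood ratio relative to the mode, and not as a probability, since a density value is not bounded by $1$. I will state this convention explicitly so that the identity holds exactly and the cancellation of the $d$-dependent constant is transparent.
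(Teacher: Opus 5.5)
Your proposal is correct and follows essentially the same route as the paper. The paper posits $\boldsymbol{x}_p\mid\boldsymbol{x}_{alt}\sim\mathcal{N}(\boldsymbol{x}_{alt},\tau_k^2\mathbf I)$, which is equivalent to your model of the discrepancy, and normalizes the likelihood by its maximum at perfect agreement to obtain $\rho_k\in(0,1]$. Your explicit cancellation of the $(2\pi\tau_k^2)^{-d/2}$ constant simply makes precise what the paper leaves as a proportionality.
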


\begin{proof}
Consider the compatibility model
\begin{equation}
\boldsymbol{x}_p\mid\boldsymbol{x}_{alt}\sim\mathcal{N}\left(\boldsymbol{x}_{alt},\tau_k^2\mathbf I\right).
\end{equation}
The likelihood is proportional to
\begin{equation}
p(\boldsymbol{x}_p\mid\boldsymbol{x}_{alt})\propto\exp\left(-\frac{\|\boldsymbol{x}_p-\boldsymbol{x}_{alt}\|^2}{2\tau_k^2}\right).
\end{equation}
The likelihood at perfect agreement, $\boldsymbol{x}_p=\boldsymbol{x}_{alt}$, is the maximum value. 
Normalizing by this maximum gives
\begin{equation}
\rho_k=\exp\left(-\frac{\|\boldsymbol{x}_p-\boldsymbol{x}_{alt}\|^2}{2\tau_k^2}\right).
\end{equation}
Thus, $\rho_k\in(0,1]$, with $\rho_k=1$ under perfect agreement and smaller values when the two estimates disagree.

The proof is complete. 
\end{proof}

\begin{proposition}[Perturbation induced by self-consistency blending]
\label{prop:sc_perturbation}
Let
$ 
\boldsymbol{x}^{SC}=\rho\boldsymbol{x}_p+(1-\rho)\boldsymbol{x}_{alt},\qquad 0\le\rho\le1.
$ 
Then the deviation from the SteinDiff proposal is
\begin{equation}
\boldsymbol{x}^{SC}-\boldsymbol{x}_p=(1-\rho)(\boldsymbol{x}_{alt}-\boldsymbol{x}_p),
\end{equation}
and therefore
\begin{equation}
\|\boldsymbol{x}^{SC}-\boldsymbol{x}_p\|\le\|\boldsymbol{x}_{alt}-\boldsymbol{x}_p\|.
\end{equation}
Moreover, for any target $\boldsymbol{x}^*$,
\begin{equation}
\|\boldsymbol{x}^{SC}-\boldsymbol{x}^*\|^2=\|\boldsymbol{x}_p-\boldsymbol{x}^*\|^2+2(1-\rho)\left\langle\boldsymbol{x}_p-\boldsymbol{x}^*,\boldsymbol{x}_{alt}-\boldsymbol{x}_p\right\rangle+(1-\rho)^2\|\boldsymbol{x}_{alt}-\boldsymbol{x}_p\|^2.
\end{equation}
Consequently, any improvement or degradation relative to the SteinDiff proposal is controlled by the discrepancy 
$\boldsymbol{x}_{alt}-\boldsymbol{x}_p$ and its alignment with the SteinDiff error.
\end{proposition}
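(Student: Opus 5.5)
The statement is purely algebraic, so I will prove it by direct substitution and expansion, with no probabilistic input.

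\emph{First identity.} I start from the definition $\boldsymbol{x}^{SC}=\rho\boldsymbol{x}_p+(1-\rho)\boldsymbol{x}_{alt}$ and subtract $\boldsymbol{x}_p=\rho\boldsymbol{x}_p+(1-\rho)\boldsymbol{x}_p$. This gives $\boldsymbol{x}^{SC}-\boldsymbol{x}_p=(1-\rho)(\boldsymbol{x}_{alt}-\boldsymbol{x}_p)$.

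\emph{Norm inequality.} Taking norms gives $\|\boldsymbol{x}^{SC}-\boldsymbol{x}_p\|=|1-\rho|\,\|\boldsymbol{x}_{alt}-\boldsymbol{x}_p\|$. Since $0\le\rho\le1$, we have $0\le 1-\rho\le 1$, so the inequality $\|\boldsymbol{x}^{SC}-\boldsymbol{x}_p\|\le\|\boldsymbol{x}_{alt}-\boldsymbol{x}_p\|$ follows. This is the only place the constraint on $\rho$ is used.

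\emph{Squared-error expansion.} For an arbitrary target $\boldsymbol{x}^*$, I decompose
\[
\boldsymbol{x}^{SC}-\boldsymbol{x}^*=(\boldsymbol{x}_p-\boldsymbol{x}^*)+(1-\rho)(\boldsymbol{x}_{alt}-\boldsymbol{x}_p)
\]
using the first identity. Expanding $\|\boldsymbol{a}+\boldsymbol{b}\|^2=\|\boldsymbol{a}\|^2+2\langle\boldsymbol{a},\boldsymbol{b}\rangle+\|\boldsymbol{b}\|^2$ with $\boldsymbol{a}=\boldsymbol{x}_p-\boldsymbol{x}^*$ and $\boldsymbol{b}=(1-\rho)(\boldsymbol{x}_{alt}-\boldsymbol{x}_p)$ yields exactly the displayed three-term formula. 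The scalar $(1-\rho)$ pulls out of the inner product linearly and out of the squared norm quadratically.

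\emph{Concluding remark.} The final qualitative claim is read off from this expansion. The excess error $\|\boldsymbol{x}^{SC}-\boldsymbol{x}^*\|^2-\|\boldsymbol{x}_p-\boldsymbol{x}^*\|^2$ consists of a cross term, governed by the alignment between the SteinDiff error and the discrepancy, plus a nonnegative quadratic term in the discrepancy. By Cauchy--Schwarz, its absolute value is at most
\[
2(1-\rho)\|\boldsymbol{x}_p-\boldsymbol{x}^*\|\|\boldsymbol{x}_{alt}-\boldsymbol{x}_p\|+(1-\rho)^2\|\boldsymbol{x}_{alt}-\boldsymbol{x}_p\|^2 .
\]

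No step is a real obstacle. The only care needed is to keep the sign convention of $\boldsymbol{x}_{alt}-\boldsymbol{x}_p$ consistent in the cross term, and to note that $\rho$ may itself depend on the discrepancy (as in Proposition~\ref{prop:confidence_weight}) without affecting these pointwise identities.
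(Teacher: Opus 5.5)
Your proposal is correct and follows the paper's proof: you rewrite $\boldsymbol{x}^{SC}=\boldsymbol{x}_p+(1-\rho)(\boldsymbol{x}_{alt}-\boldsymbol{x}_p)$, use $0\le 1-\rho\le 1$ for the norm bound, and expand the squared norm of $(\boldsymbol{x}_p-\boldsymbol{x}^*)+(1-\rho)(\boldsymbol{x}_{alt}-\boldsymbol{x}_p)$. Your Cauchy--Schwarz bound on the excess error is a small extra beyond what the paper states, and it makes the closing qualitative claim concrete.
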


\begin{proof}
The first identity follows directly from
\begin{equation}
\boldsymbol{x}^{SC}=\rho\boldsymbol{x}_p+(1-\rho)\boldsymbol{x}_{alt}=\boldsymbol{x}_p+(1-\rho)(\boldsymbol{x}_{alt}-\boldsymbol{x}_p).
\end{equation}
Since $0\le\rho\le1$,
\begin{equation}
\|\boldsymbol{x}^{SC}-\boldsymbol{x}_p\|=(1-\rho)\|\boldsymbol{x}_{alt}-\boldsymbol{x}_p\|\le\|\boldsymbol{x}_{alt}-\boldsymbol{x}_p\|.
\end{equation}
For the target-error identity, write
\begin{equation}
\boldsymbol{x}^{SC}-\boldsymbol{x}^*=\boldsymbol{x}_p-\boldsymbol{x}^*+(1-\rho)(\boldsymbol{x}_{alt}-\boldsymbol{x}_p).
\end{equation}
Expanding the squared norm gives the stated equality.
\end{proof}

\begin{remark}
The self-consistency correction is an optional empirical variant. 
The propositions above justify the interpolation weight and the compatibility weight, and show that the blended update stays close to the SteinDiff proposal when the two estimates agree. 
They do not imply an unconditional reduction of the clean-target MSE relative to the core SteinDiff update.
\end{remark}

\section{Experimental Setup and Details}\label{experimen}
Our experiments were conducted to validate the effectiveness of SteinDiff in enhancing state-of-the-art ODE solvers, including DPM-Solver++, UniPC, and the Heun method, using their default public implementations.  We performed evaluations on several standard benchmarks with corresponding pre-trained models: a CIFAR-10 EDM model, an ImageNet 64x64 model under both EDM and logSNR noise schedules, and a Latent Diffusion Model for LSUN Bedrooms 256x256. Specifically, experiments on CIFAR-10 and ImageNet were conducted using an NVIDIA 3090 GPU, while evaluations on the LSUN Bedrooms 256x256 dataset utilized an NVIDIA 4090D GPU. Performance was measured using standard metrics, including Fréchet Inception Distance (FID) and Inception Score (IS), with FID scores computed over 50,000 generated samples.  We also reported FD-DINOv2, which replaces the standard InceptionV3 encoder with a DINOv2 backbone for a more perceptually aligned evaluation. The SteinDiff regularizer was implemented as detailed in Algorithm \ref{SteinDiffpseudocode}, utilizing the Hutchinson trace estimator with five random vectors to approximate the divergence term. For the main quantitative results, we employed a fast  KM  variant to ensure accelerated convergence while maintaining theoretical guarantees. A small safeguard constant,  $\epsilon$, was used to ensure numerical stability during the computation of the stabilization parameter $\hat{\gamma}_k$.

\emph{Efficient Implementation.}   
The main computational cost lies in evaluating the divergence term $\nabla \cdot \boldsymbol{u}_k$ in Eq. (\ref{practical_gamma}), which naively requires $\mathcal{O}(d)$ vector-Jacobian products. We address this using the Hutchinson trace estimator \citep{hutchinson1989stochastic}. 
This reduces the computational cost to a single vector-Jacobian product per sample\nocite{CHEN2026112442, xu2024sparse}. The estimator is unbiased and has been widely adopted in modern deep learning applications \citep{grathwohl2018scalable, Tsitsulin2020The,meyer2021hutch++}.

\emph{Computational Overhead.} The cost of estimating the divergence for a batch of size $B$ using $m$ Hutchinson probes corresponds to $m$ vector-Jacobian products (VJPs). Crucially, these $m$ VJPs are highly \emph{parallelizable} across GPU hardware, effectively minimizing the wall-clock latency compared to the primary neural network evaluations in $\operatorname{T}_{\boldsymbol{\theta}}$. Furthermore, SteinDiff can be deployed \emph{adaptively} and is activated only at critical timesteps where geometric stabilization is most required, rather than at every step of the trajectory. Due to this sparse activation, the amortized computational cost remains significantly lower than the cumulative overhead of increasing solver steps (NFE). Consequently, SteinDiff circumvents the need for additional NFEs, model retraining, or expensive schedule optimization, thereby offering a superior trade-off between efficiency and stability for high-quality generative inference.

\subsubsection{Formal Computational Complexity Analysis}

\begin{proposition}[Computational Overhead]\label{thm:complexity}
The per-step computational cost of SteinDiff, relative to a baseline ODE solver, is:
\begin{equation}
\text{Cost}_{\text{SteinDiff}} = \text{Cost}_{\text{Baseline}} + O(m \cdot \text{VJP})
\end{equation}
where $m$ is the number of Hutchinson probes and VJP denotes a vector-Jacobian product.
\end{proposition}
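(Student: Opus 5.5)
The plan is to account for one SteinDiff step line by line against Algorithm~\ref{SteinDiffpseudocode}, treating the baseline solver's cost as $\text{Cost}_{\text{Baseline}}$. This is the evaluation of the candidate $\operatorname{T}_{\boldsymbol{\theta}}(\boldsymbol{x}_{t_k})$, including whatever model calls the underlying DPM-Solver++, UniPC or Heun update needs. Line~1 reuses this candidate, so forming $\boldsymbol{u}_k=\boldsymbol{x}_{t_k}-\operatorname{T}_{\boldsymbol{\theta}}(\boldsymbol{x}_{t_k})$ costs one vector subtraction, i.e.\ $O(Bd)$ arithmetic and no extra network evaluation.

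Next I bound the scalar statistics. The quantities $\hat{s}_{xu}$ and $\hat{s}_{uu}$ are batch averages of inner products and squared norms, costing $O(Bd)$. Forming $\hat{\gamma}_k$ in line~4 (clipping against $\gamma_{\min}$ and dividing, with the safeguard $\epsilon$) is $O(1)$. The convex combination in line~5 is again $O(Bd)$. All of these terms are dominated by a single forward pass of $\boldsymbol{x}_{\boldsymbol{\theta}}$, whose cost is at least linear in $Bd$, so they are absorbed into $\text{Cost}_{\text{Baseline}}$ up to constants.

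The substantive term is $\hat{s}_{div}$. For each probe $\boldsymbol{v}$, the quantity $\boldsymbol{v}^\top\nabla_{\boldsymbol{x}}\boldsymbol{u}_k\boldsymbol{v}$ is obtained from one vector-Jacobian product $\boldsymbol{v}^\top\nabla_{\boldsymbol{x}}\boldsymbol{u}_k$ followed by an $O(d)$ inner product. Since $\boldsymbol{u}_k=\boldsymbol{x}_k-\operatorname{T}_{\boldsymbol{\theta}}(\boldsymbol{x}_k)$, this VJP is $\boldsymbol{v}-\boldsymbol{v}^\top\nabla\operatorname{T}_{\boldsymbol{\theta}}$, which reverse-mode differentiation through the graph already built for $\operatorname{T}_{\boldsymbol{\theta}}$ evaluates. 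With $m$ probes this gives $m$ VJPs plus $O(mBd)$ arithmetic, hence $O(m\cdot\text{VJP})$. Summing the three contributions yields the stated identity.

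The main obstacle is making precise that one VJP costs a constant multiple of a forward evaluation and requires no additional NFE. I would invoke the standard cheap-gradient principle of reverse-mode automatic differentiation: the cost of a VJP is at most a small constant times the cost of evaluating the function. This is also what justifies treating the lower-order $O(Bd)$ terms as negligible. If $\operatorname{T}_{\boldsymbol{\theta}}$ is a multistep solver, I would note that the Jacobian is taken only through the current model call, with stored history treated as constants, so the bound is unchanged.
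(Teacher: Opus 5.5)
Your proposal is correct and follows essentially the same line-by-line accounting as the paper's proof. Both arguments reuse the residual from the baseline candidate, charge $O(Bd)$ for the batch statistics, and identify the $m$ Hutchinson VJPs as the dominant term, each costing a constant multiple of a forward pass by reverse-mode differentiation. Your bookkeeping is slightly more careful than the paper's: you note that the final convex combination is $O(Bd)$ rather than $O(1)$, and you treat multistep history as constants. One small point: the $O(Bd)$ terms should be absorbed into the $O(m\cdot\text{VJP})$ term, since any VJP already costs $\Omega(Bd)$, rather than into $\text{Cost}_{\text{Baseline}}$ ``up to constants''. That keeps the stated additive identity exact, and it matches the paper's remark that the VJP computation dominates.
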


\begin{proof}
At each step, SteinDiff requires:
\begin{enumerate}
    \item Compute $\boldsymbol{u}_k = \boldsymbol{x}_k - \operatorname{T}_{\boldsymbol{\theta}}(\boldsymbol{x}_k)$: Already computed by baseline (free).
    \item Compute $\hat{s}_{xu} = \frac{1}{B}\sum_{i=1}^B \langle \boldsymbol{u}_k^{(i)}, \boldsymbol{x}_k^{(i)} \rangle$: $O(Bd)$ operations.
    \item Compute $\hat{s}_{uu} = \frac{1}{B}\sum_{i=1}^B \|\boldsymbol{u}_k^{(i)}\|^2$: $O(Bd)$ operations.
    \item Compute divergence via Hutchinson: $m$ VJP calls.
    \item Compute $\hat{\gamma}_k$ and update: $O(1)$ operations.
\end{enumerate}
The VJP computation dominates. Each VJP has complexity comparable to one forward pass through the Jacobian. For neural networks, this is $O(\text{params})$ via backpropagation. The $m$ VJP calls are embarrassingly parallel across the batch dimension on modern GPUs.
\end{proof}

\begin{table}[h]
\centering
\caption{Comparison of computational overhead across methods.}
\label{tab:complexity_comparison}
\begin{tabular}{lccc}
\toprule
\textbf{Method} & \textbf{Extra NFE} & \textbf{Extra VJP} & \textbf{Retraining} \\
\midrule
Baseline (DPM-Solver++) & 0 & 0 & No \\
SteinDiff (ours) & 0 & $m$ per step (parallelizable)  & No \\
DPM-Solver-v3 & 0 & 0 & Reference solution \\
Restart Sampling & $+$50-100\% & 0 & No \\
Consistency Distillation & 0 & 0 & Yes (expensive) \\
\bottomrule
\end{tabular}
\end{table}

\begin{remark}[Adaptive Strategy \& Amortized Efficiency]
When SteinDiff is applied adaptively (only at critical timesteps), the total overhead satisfies:
\begin{equation}
\text{Total Extra Cost} \leq K \cdot m \cdot \text{VJP}
\end{equation}
where $K \ll N$ is the number of critical steps (typically $K \leq 3$ for early/high-noise stages). Crucially, since the $m$ VJP evaluations are parallelizable across the batch dimension, the impact on inference latency is minimal. 
\end{remark}

\begin{figure}[h]
        \centering 
        \includegraphics[width=0.475\linewidth]{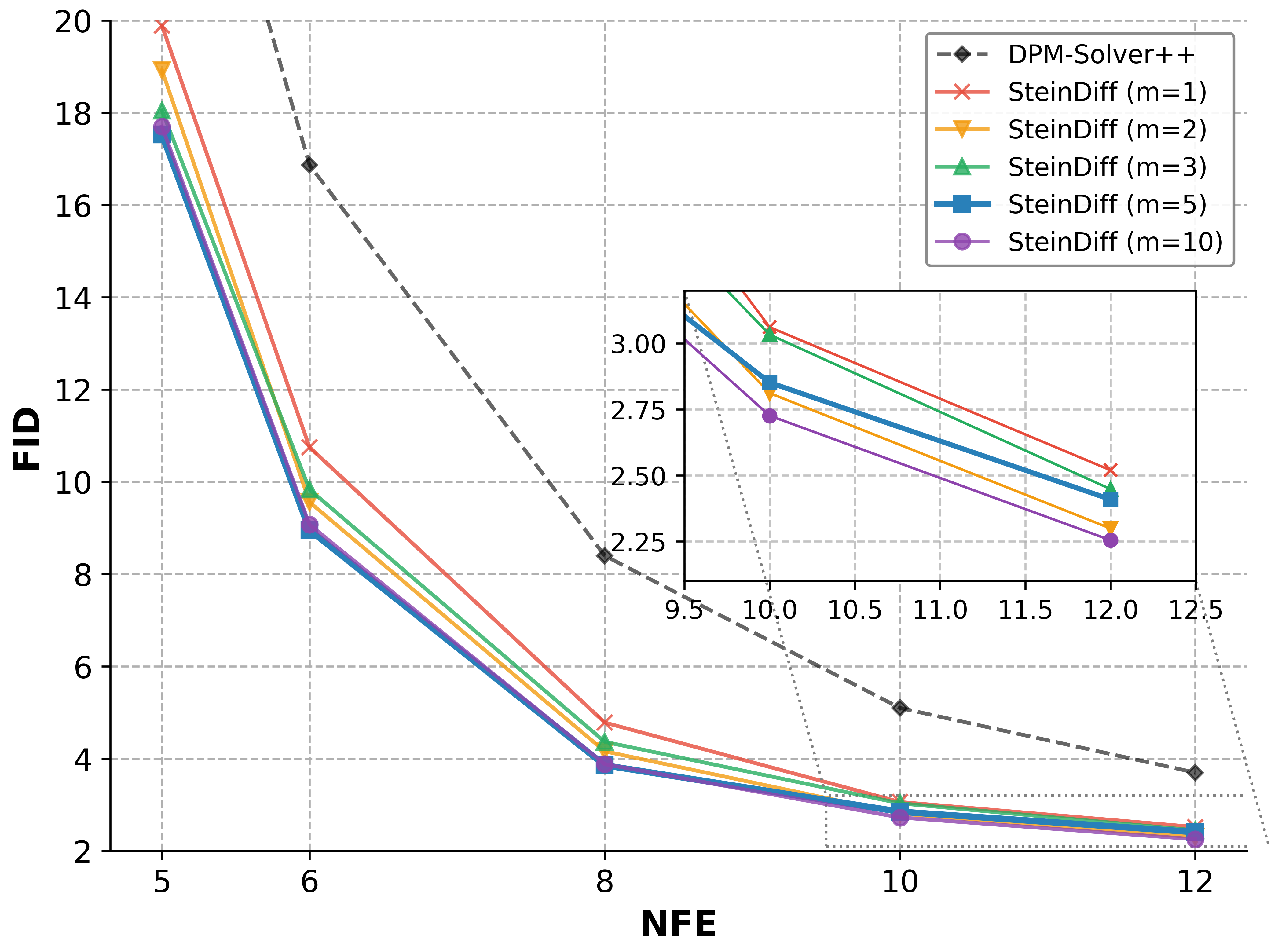} 
\caption{
Sensitivity analysis of the Hutchinson probe count ($m$) on CIFAR-10. We compare SteinDiff with varying probe counts $m \in \{1, 2, 3, 5, 10\}$ against the DPM-Solver++ baseline. The results demonstrate that SteinDiff is highly robust to estimation noise, significantly outperforming the baseline even with a single probe ($m=1$), and performance saturates rapidly at $m=5$.   
}
\label{fig:mhugablation}
\end{figure}
\subsection{Details}
We implement \texttt{SteinDiff} as a standalone PyTorch module compatible with standard ODE solvers (e.g., \texttt{DPM\_Solver}). The design strictly adheres to the theoretical derivations in Section 4, enforcing a post-hoc correction step to stabilize the sampling trajectory. The stabilization mechanism acts via the \texttt{apply} method. Rather than accepting the raw solver output \texttt{x\_t} as the next state, we enforce a KM formulation update using adaptive interpolation:
\begin{equation}
    x_{\text{new}} = (1 - \gamma) \cdot x + \gamma \cdot x_t
\end{equation}
Here, \texttt{x} is the current state, \texttt{x\_t} is the candidate prediction from the ODE solver, and $\gamma$ is the adaptive stabilization parameter computed dynamically at each step. The module intercepts the solver's candidate state and the original state to output the final, stabilized result.

The numerical evaluation of the optimal stabilization parameter $\gamma$, following the closed-form derivation in Eq. (\ref{practical_gamma}), is implemented within the \texttt{compute\_optimal\_gamma} method. 
The process begins by defining the update residual $u = x - x_t$ and aggregating the batch-wise statistics, specifically the inner product $\langle u, x \rangle$ (\texttt{s\_xu}) and the squared norm $\|u\|^2$ (\texttt{s\_uu}). A critical step involves estimating the divergence term $\nabla \cdot u$. To this end, we provide two strategies: an analytical approximation, which efficiently computes divergence solely on the linear component of the ODE operator but ignores non-linearities; and a full divergence  estimator, which utilizes the Hutchinson trace method to capture the full residual divergence. The latter employs \texttt{torch.autograd.functional.jvp} combined with antithetic sampling and Rademacher distributions for variance reduction. Finally, these statistics are combined via Stein's Identity to solve for  $\gamma$, with the result clamped to a lower bound (typically \texttt{1e-6}) to satisfy the structure of stabilization.

To balance theoretical precision with computational cost, the module supports three configuration modes controlled via initialization flags. The full mode  estimates the full residual divergence  (including non-linearities) via the Hutchinson trace estimator at every step, maximizing theoretical rigor at the cost of additional overhead. Conversely, the analytical-only mode relies exclusively on the linear approximation of the ODE operator, adding minimal cost to the inference process. To bridge these extremes, we design an adaptive mode  that dynamically switches between estimators based on the noise level $\sigma_t$. This hybrid approach utilizes the fast analytical approximation during early, high-noise stages and transitions to the precise Hutchinson estimator in the critical low-noise regime where fine-grained stability is paramount.

\subsection{More Experiments}
We present further qualitative results demonstrating SteinDiff's robustness across three critical regimes. Figure \ref{fig:appendix_5full_comparison} illustrates the severity of the contractivity trap at a strict budget of 5 NFE. Under these conditions, standard solvers (UniPC, DPM-Solver++) suffer geometric collapse and significant artifacts. SteinDiff (+DPM) effectively counters this drift by regularizing updates towards the data manifold, preserving coherent object structures across diverse classes.

Even when baselines converge at higher steps (10 NFE), geometric stabilization yields tangible perceptual benefits. As shown in Figure \ref{fig:comparison_10nfe}, while DPM-Solver++ produces globally coherent samples, it often smooths over textures. SteinDiff recovers fine-grained details specifically by sharpening features such as animal fur and flower petals, which forces the generation trajectory to adhere more closely to the underlying data geometry. 

Regarding scalability, Figure \ref{fig:256lsunsteindiff277} validates our Hutchinson-based trace estimation in high-dimensional spaces. On the LSUN-Bedrooms $256\times256$ benchmark (LDM), SteinDiff achieves a SOTA FID of 2.77 at 20 NFE. These results confirm that the proposed approximation remains computationally effective and robust in latent spaces.

Finally, the sensitivity analysis in Figure \ref{fig:mhugablation} reveals high robustness to the Hutchinson probe count $m$. Even a single probe ($m=1$) yields significant gains over the baseline, and performance saturates rapidly at $m=5$, indicating that the scalar stabilization parameter is empirically insensitive to gradient-estimation noise in our tested settings.

\begin{figure}[h] 
\newcommand{\hhs}{\hspace{-0.001em}}
\newcommand{\vvs}{\vspace{-.1em}}
\centering
\includegraphics[width=0.145\linewidth]{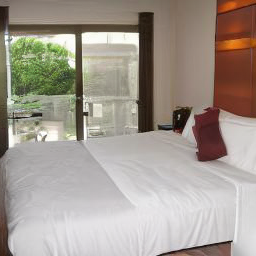}\hhs
\includegraphics[width=0.145\linewidth]{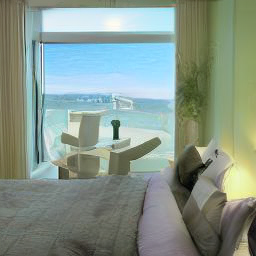}\hhs
\includegraphics[width=0.145\linewidth]{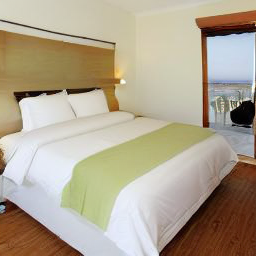}\hhs
\includegraphics[width=0.145\linewidth]{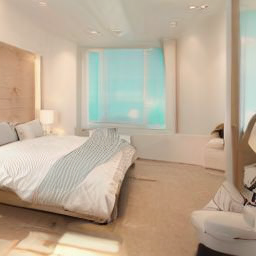}\hhs
\includegraphics[width=0.145\linewidth]{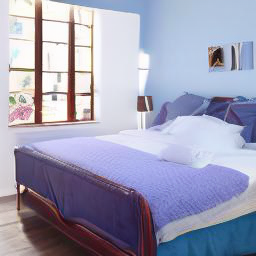}\hhs
\includegraphics[width=0.145\linewidth]{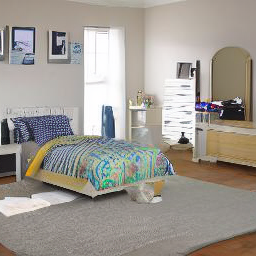}\vvs
\\
\includegraphics[width=0.145\linewidth]{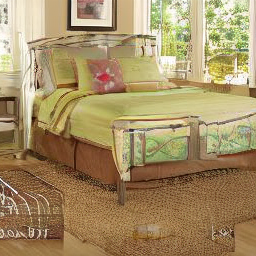}\hhs
\includegraphics[width=0.145\linewidth]{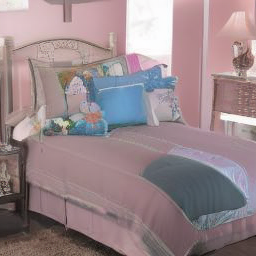}\hhs
\includegraphics[width=0.145\linewidth]{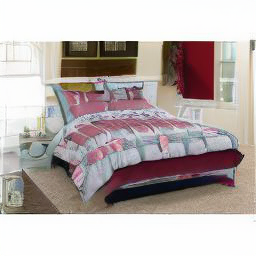}\hhs
\includegraphics[width=0.145\linewidth]{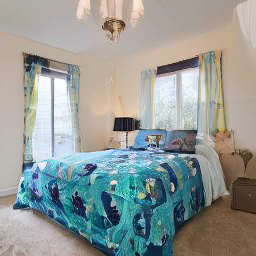}\hhs
\includegraphics[width=0.145\linewidth]{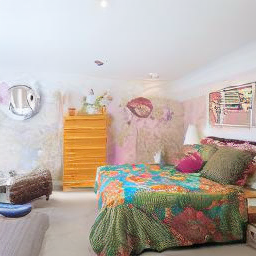}\hhs
\includegraphics[width=0.145\linewidth]{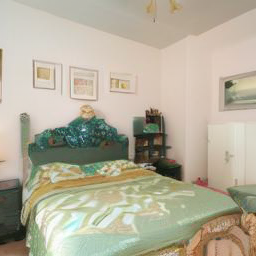}\vvs
\\
\includegraphics[width=0.145\linewidth]{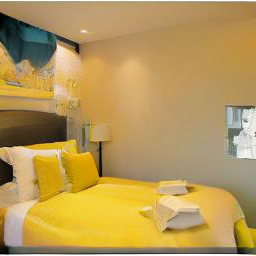}\hhs
\includegraphics[width=0.145\linewidth]{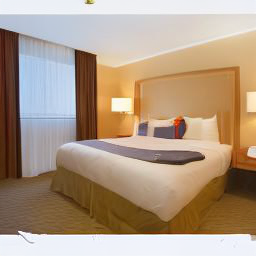}\hhs
\includegraphics[width=0.145\linewidth]{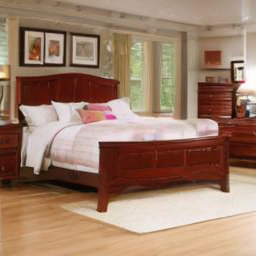}\hhs
\includegraphics[width=0.145\linewidth]{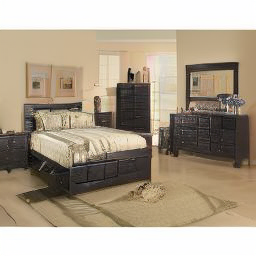}\hhs
\includegraphics[width=0.145\linewidth]{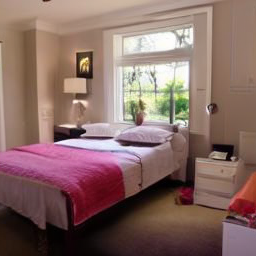}\hhs
\includegraphics[width=0.145\linewidth]{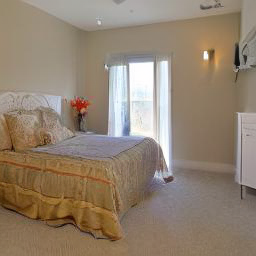}\vvs
\\
\caption{Samples generated by using SteinDiff for efficient DPM-Solver++ solving on LSUN Bedroom at 20 NFE. Achieving a SOTA FID of 2.77, these results empirically validate that our Hutchinson-based trace estimation remains robust and effective in high-dimensional latent spaces, effectively countering concerns regarding scalability and approximation errors.}
\label{fig:256lsunsteindiff277}
\end{figure}

\begin{figure}[h]
    \centering
    \setlength{\tabcolsep}{1pt}     
    \renewcommand{\arraystretch}{0.1}  
    \newcommand{\imgw}{0.09\linewidth}  
     
    \newcommand{\rowgap}{0.2em}      
    \newcommand{\blockgap}{1.5em}   
    \newcommand{\sidelabel}[1]{%
        \raisebox{-0.35cm}{%
            \rotatebox{90}{\sffamily\tiny #1}%
        }%
    }
 
    \newcommand{\vcimg}[1]{%
        \raisebox{-0.5\height}{\includegraphics[width=\imgw]{#1}}%
    }

    \begin{tabular}{c c c c c c c c c c}
        \multicolumn{10}{c}{\textbf{(a) Animals Subset (5 NFE)}} \\[0.8em]
        
        \sidelabel{UniPC} &
        \vcimg{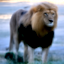} &
        \vcimg{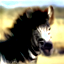} &
        \vcimg{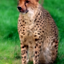} &
        \vcimg{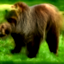} &
        \vcimg{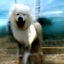} &
        \vcimg{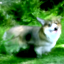} &
        \vcimg{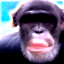} &
        \vcimg{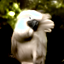} &
        \vcimg{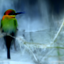} \\[\rowgap] 

        \sidelabel{DPM++} &
        \vcimg{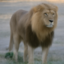} &
        \vcimg{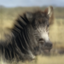} &
        \vcimg{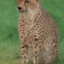} &
        \vcimg{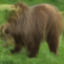} &
        \vcimg{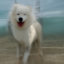} &
        \vcimg{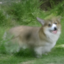} &
        \vcimg{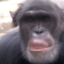} &
        \vcimg{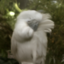} &
        \vcimg{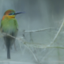} \\[\rowgap]

        \sidelabel{\shortstack[c]{Our+DPM}} &
        \vcimg{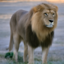} &
        \vcimg{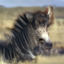} &
        \vcimg{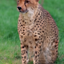} &
        \vcimg{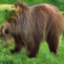} &
        \vcimg{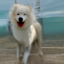} &
        \vcimg{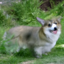} &
        \vcimg{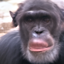} &
        \vcimg{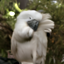} &
        \vcimg{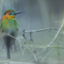} \\
    \end{tabular}

    \vspace{\blockgap}

    \begin{tabular}{c c c c c c c c c c}
        \multicolumn{10}{c}{\textbf{(b) Objects Subset (5 NFE)}} \\[0.8em]
        
        \sidelabel{UniPC} &
        \vcimg{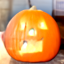} &
        \vcimg{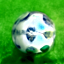} &
        \vcimg{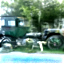} &
        \vcimg{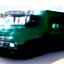} &
        \vcimg{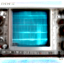} &
        \vcimg{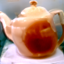} &
        \vcimg{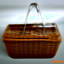} &
        \vcimg{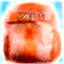} &
        \vcimg{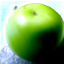} \\[\rowgap]

        \sidelabel{DPM++} &
        \vcimg{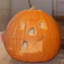} &
        \vcimg{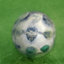} &
        \vcimg{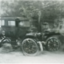} &
        \vcimg{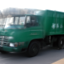} &
        \vcimg{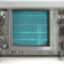} &
        \vcimg{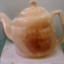} &
        \vcimg{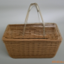} &
        \vcimg{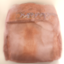} &
        \vcimg{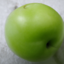} \\[\rowgap]

        \sidelabel{\shortstack[c]{Our+DPM}} &
        \vcimg{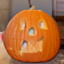} &
        \vcimg{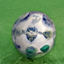} &
        \vcimg{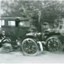} &
        \vcimg{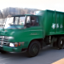} &
        \vcimg{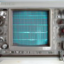} &
        \vcimg{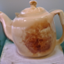} &
        \vcimg{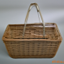} &
        \vcimg{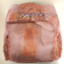} &
        \vcimg{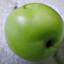} \\
    \end{tabular}

    \vspace{\blockgap}

    \begin{tabular}{c c c c c c c c c c}
        \multicolumn{10}{c}{\textbf{(c) Nature Subset (5 NFE)}} \\[0.8em]
        
        \sidelabel{UniPC} &
        \vcimg{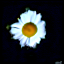} &
        \vcimg{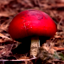} &
        \vcimg{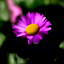} &
        \vcimg{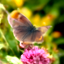} &
        \vcimg{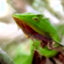} &
        \vcimg{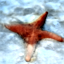} &
        \vcimg{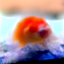} &
        \vcimg{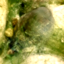} &
        \vcimg{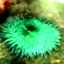} \\[\rowgap]

        \sidelabel{DPM++} &
        \vcimg{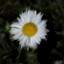} &
        \vcimg{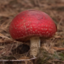} &
        \vcimg{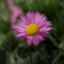} &
        \vcimg{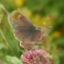} &
        \vcimg{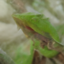} &
        \vcimg{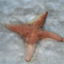} &
        \vcimg{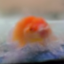} &
        \vcimg{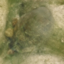} &
        \vcimg{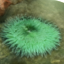} \\[\rowgap]

        \sidelabel{\shortstack[c]{Our+DPM}} &
        \vcimg{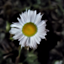} &
        \vcimg{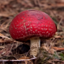} &
        \vcimg{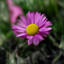} &
        \vcimg{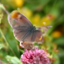} &
        \vcimg{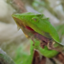} &
        \vcimg{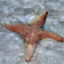} &
        \vcimg{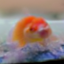} &
        \vcimg{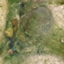} &
        \vcimg{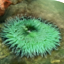} \\
    \end{tabular}

    \caption{Mitigating the contractivity trap at extreme sparsity (5 NFE). While Efficient solvers like UniPC and DPM-Solver++ (DPM++) suffer from severe structural collapse and artifacts due to insufficient contractivity at large steps, SteinDiff  (Our+DPM) successfully stabilizes the inference trajectory of efficient ODE solving. By explicitly correcting the geometric drift, our method preserves semantic fidelity across diverse categories (Animals, Objects, Nature).}
    \label{fig:appendix_5full_comparison}
\end{figure}

\begin{figure}[t] 
    \centering
    \setlength{\tabcolsep}{1pt}    
    \renewcommand{\arraystretch}{0.1} 
    \newcommand{\imgw}{0.09\linewidth} 
    \newcommand{\rowgap}{0.2em}      
    \newcommand{\blockgap}{1.5em}  
     
    \newcommand{\sidelabel}[1]{%
        \raisebox{-0.35cm}{%
            \rotatebox{90}{\sffamily\tiny #1}%
        }%
    }
 
    \newcommand{\vcimg}[1]{%
        \raisebox{-0.5\height}{\includegraphics[width=\imgw]{#1}}%
    }

    \begin{tabular}{c c c c c c c c c c}
        \multicolumn{10}{c}{\textbf{(a) Animals Subset (10 NFE)}} \\[0.5em]
        
        \sidelabel{DPM++} &
        \vcimg{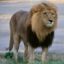} &
        \vcimg{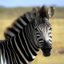} &
        \vcimg{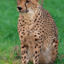} &
        \vcimg{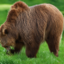} &
        \vcimg{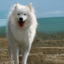} &
        \vcimg{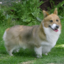} &
        \vcimg{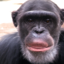} &
        \vcimg{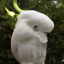} &
        \vcimg{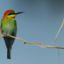} \\[\rowgap]

        \sidelabel{Ours} &
        \vcimg{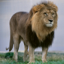} &
        \vcimg{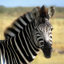} &
        \vcimg{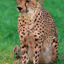} &
        \vcimg{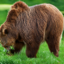} &
        \vcimg{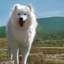} &
        \vcimg{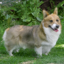} &
        \vcimg{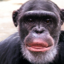} &
        \vcimg{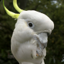} &
        \vcimg{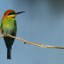} \\
    \end{tabular}

    \vspace{\blockgap}

    \begin{tabular}{c c c c c c c c c c}
        \multicolumn{10}{c}{\textbf{(b) Objects Subset (10 NFE)}} \\[0.5em]
        
        \sidelabel{DPM++} &
        \vcimg{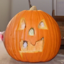} &
        \vcimg{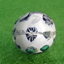} &
        \vcimg{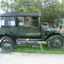} &
        \vcimg{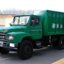} &
        \vcimg{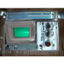} &
        \vcimg{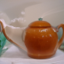} &
        \vcimg{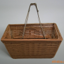} &
        \vcimg{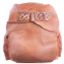} &
        \vcimg{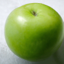} \\[\rowgap]

        \sidelabel{Ours} &
        \vcimg{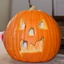} &
        \vcimg{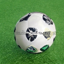} &
        \vcimg{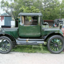} &
        \vcimg{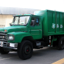} &
        \vcimg{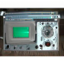} &
        \vcimg{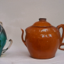} &
        \vcimg{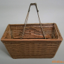} &
        \vcimg{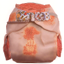} &
        \vcimg{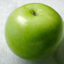} \\
    \end{tabular}

    \vspace{\blockgap}

    \begin{tabular}{c c c c c c c c c c}
        \multicolumn{10}{c}{\textbf{(c) Nature Subset (10 NFE)}} \\[0.5em]
        
        \sidelabel{DPM++} &
        \vcimg{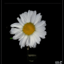} &
        \vcimg{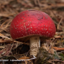} &
        \vcimg{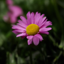} &
        \vcimg{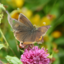} &
        \vcimg{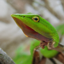} &
        \vcimg{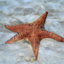} &
        \vcimg{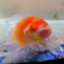} &
        \vcimg{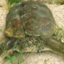} &
        \vcimg{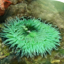} \\[\rowgap]

        \sidelabel{Ours} &
        \vcimg{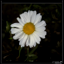} &
        \vcimg{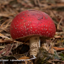} &
        \vcimg{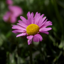} &
        \vcimg{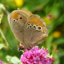} &
        \vcimg{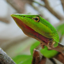} &
        \vcimg{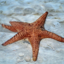} &
        \vcimg{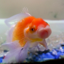} &
        \vcimg{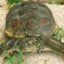} &
        \vcimg{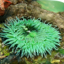} \\
    \end{tabular} 

    \caption{
Enhanced fine-grained detail reconstruction at 10 NFE. Comparison between the baseline DPM-Solver++ (DPM++) and our SteinDiff-regularized version (Our). Even when the baseline achieves convergence, SteinDiff significantly refines high-frequency textures (e.g., animal fur, flower petals) and sharpens object boundaries. This demonstrates that our reference-free Stein stabilization improves perceptual quality by adhering closer to the data manifold.
    }
    \label{fig:comparison_10nfe}
\end{figure}

\end{document}